\documentclass{article}
\usepackage{graphicx} 
\usepackage{xcolor}
\usepackage{name}
\usepackage{subfigure}

\usepackage{bbm}
\usepackage{bm}
\usepackage{mathrsfs} 
\usepackage{amsmath}
\usepackage{amssymb}
\usepackage{amsthm}
\usepackage{amsfonts}
\usepackage{amsopn}

\usepackage{enumitem}
\usepackage{bbm}

\usepackage[utf8]{inputenc} 
\usepackage[T1]{fontenc}    
\usepackage{hyperref}       
\usepackage{url}            
\usepackage{booktabs}       
\usepackage{nicefrac}       
\usepackage{microtype}      
\usepackage{dsfont}

\global\long\def\E{\mathbb{E}}%
\global\long\def\tO{\widetilde{O}}%
\global\long\def\mS{\mathcal{S}}%
\global\long\def\mA{\mathcal{A}}%
\global\long\def\mX{\mathcal{X}}%
\global\long\def\R{\mathbb{R}}%

\global\long\def\tpi{\widetilde{\pi}}%

\newcommand{\var}{\operatorname{Var}}

\usepackage{xcolor}
\usepackage{todonotes} %
\usepackage[showdeletions]{color-edits} %


\title{Constant Stepsize Q-learning: Distributional Convergence, Bias and Extrapolation}

\author{Yixuan Zhang, Qiaomin Xie
    \texorpdfstring{\footnote{Emails: \texttt{yzhang2554@wisc.edu}, \texttt{qiaomin.xie@wisc.edu}}\\~\\
 \normalsize Department of Industrial and Systems Engineering\\ University of Wisconsin-Madison}{}
}

\date{\vspace*{-1cm}}
\begin{document}
\maketitle

\begin{abstract}
Stochastic Approximation (SA) is a widely used algorithmic approach
in various fields, including optimization and reinforcement learning
(RL). Among RL algorithms, Q-learning is particularly popular due
to its empirical success. In this paper, we study asynchronous Q-learning
with constant stepsize, which is commonly used in practice for its
fast convergence. By connecting the constant stepsize Q-learning 
to a time-homogeneous Markov chain, we show the distributional
convergence of the iterates in Wasserstein distance and establish 
its exponential convergence rate. We also establish a Central Limit Theory for Q-learning iterates, demonstrating the asymptotic normality of the averaged iterates. Moreover, we provide an explicit
expansion of the asymptotic bias of the averaged iterate in stepsize. 
Specifically, the bias is proportional to the stepsize
up to higher-order terms and we provide an explicit expression for the linear coefficient. This precise characterization of the bias
allows the application of Richardson-Romberg (RR) extrapolation technique
to construct a new estimate that is provably closer to the optimal
Q function. Numerical results corroborate our theoretical finding
on the improvement of the RR extrapolation method. 
\end{abstract}

\section{Introduction }

 Stochastic Approximation (SA) is a fundamental algorithmic paradigm
in various fields, including optimization, machine learning, stochastic
control and filtering, and reinforcement learning (RL). SA uses recursive
stochastic updates to solve fixed point equations. One prominent example
is the widely-used stochastic gradient descent (SGD) algorithm for finding the optimal solution to an objective
function \cite{lan2020first}. In the context of RL, well-known algorithms
such as Q-learning and TD-learning can be viewed as SA algorithms
for solving Bellman equations \cite{bertsekas1996neuro}. Classical
SA theory suggests using diminishing stepsize, ensuring asymptotic
convergence to the desired solution \cite{borkar08-SA-book}. 
However, SA with constant stepsize is commonly used in practice
due to its simplicity and faster convergence. In this case, SA iterates
can be viewed as a \emph{time-homogeneous Markov chain}. Adopting this perspective, a growing line of
recent work establishes weak convergence of constant stepsize SA and characterizes the stationary
distribution \cite{bianchi2022convergence,durmus2021riemannian,huo22-bias,Dieuleveut20-bach-SGD,Yu21-stan-SGD}.

In this paper, we investigate constant stepsize Q-learning, which
is an important instance of \emph{nonsmooth} SA with Markovian noise.
Q-learning is a popular RL algorithm that has played a
significant role in the empirical success of RL \cite{mnih2015human}.
It directly learns the optimal action-value function (or Q-function)
from sample trajectories. At each iteration,
Q-learning \emph{asynchronously} updates a single state-action pair based on one transition from the trajectory. Consequently, the
iterations inherently involve \emph{Markovian} noise resulting from the sampling
process of a Markov chain under the behavior policy. Q-learning's
asymptotic convergence and finite-sample guarantees have been extensively
studied \cite{Tsitsiklis1994-QLearn,borkar2000-ode-sa,Szepesvari97-QLearn-Rates,EvenDar04-QLearn-rates,chen2021lyapunov,li2020sample}.
These non-asymptotic results provide \emph{upper bounds }on either
the mean squared error (MSE) $\E[\Vert q_{k}-q^{*}\Vert _{\infty}^{2}\big]$
or high probability $\ell_{\infty}$ error $\Vert q_{k}-q^{*}\Vert _{\infty}.$

The main goal of this paper is to gain a more comprehensive understanding
of the behavior of constant stepsize Q-learning and its error decomposition.
In the discounted setting, Q-learning aims to solve the fixed point
equation involving the Bellman operator, which is contractive in the
\emph{nonsmooth} $\ell_{\infty}$ norm. Hence Q-learning can be viewed
as an instance of SA with a \emph{nonsmooth} operator and
Markovian noise. 
Recently, non-asymptotic analysis of Markovian SA has been
gaining attention \cite{Bhandari21-linear-td,srikant-ying19-finite-LSA,chen20-contract-SA,chen2021lyapunov,huo22-bias}.
However, these results either concern linear SA or provide {upper
bounds} on the error.

In this work, we study the Q-learning iterates through the lens of
Markov chain theory. We provide a more precise characterization of
the MSE in terms of the decomposition
\begin{equation}
\mathbb{E}\Vert q_{k}-q^{*}\Vert^{2}\asymp\underbrace{\big\Vert \mathbb{E}q_{k}-\mathbb{E}q_{\infty}^{(\alpha)}\big\Vert ^{2}}_{\text{optimization error}}+\underbrace{\big\Vert \mathbb{E}q_{\infty}^{(\alpha)}-q^{*}\big\Vert ^{2}}_{\text{asymptotic bias}}+\underbrace{\operatorname{Var}\left(q_{k}\right)}_{\text{variance}}\label{decomposition}
\end{equation}
where the random variable $q_{\infty}^{(\alpha)}$ denotes the limit of the Q-learning iterate $q_{k}$ with
stepsize $\alpha$. Our main results characterize the behavior of
the three terms above, as summarized in the following.
\begin{itemize}
\item By casting the constant stepsize Q-learning as a time-homogeneous
Markov chain, we establish the convergence of the iterates to a unique
random vector in $W_{2}$, the Wasserstein distance of order 2. Moreover,
$\{q_{k}\}_{k\geq0}$ converges to the limit $q_{\infty}^{(\alpha)}$
exponentially fast due to the use of a constant stepsize. This result
leads to an upper bound on the optimization error $\Vert \mathbb{E}q_{k}-\mathbb{E}q_{\infty}^{(\alpha)}\Vert $,
which also decays exponentially in $k$. 
The convergence rate cannot
be obtained from the existing upper bound on $\mathbb{E}\Vert q_{k}-q^{*}\Vert^{2}$
or $\left\Vert q_{k}-q^{*}\right\Vert _{\infty}$, which does not
vanish as $k\rightarrow\infty.$
\item We show that the variance $\operatorname{Var}\left(q_{k}\right)$
is of order $\mathcal{O}(1)$. By law of large numbers, one
can use the averaging procedure to eliminate the variance. Specifically,
the Polyak-Ruppert tail-averaged iterate has a variance of order
$\mathcal{O}(1/k)$. Consequently, for large $k$, the deviation between
the averaged iterate and the optimal solution $q^{*}$ for large $k$
is dominated by the asymptotic bias $\mathbb{E}q_{\infty}^{(\alpha)}-q^{*}$. 
\item We present an explicit expansion of the asymptotic bias $\mathbb{E}q_{\infty}^{(\alpha)}-q^{*}$
in the stepsize $\alpha$: 
\begin{equation}
\mathbb{E}q_{\infty}^{(\alpha)}-q^{*}=\alpha B+\widetilde{\mathcal{O}}(\alpha^{2}),\label{eq:bias_expansion}
\end{equation}
where $B$ is a vector independent of the stepsize $\alpha$ and $\widetilde{\mathcal{O}}$ denotes the variant of $\mathcal{O}$ that ignores logarithmic order.
Importantly, the bias characterization is exact, as equation (\ref{eq:bias_expansion})
is an \emph{equality }rather than an upper bound. Consequently, one
can use Richardson-Romberg extrapolation technique to reduce the bias
and obtain an estimate closer to $q^{*}$ with
order-wise smaller bias $\widetilde{\mathcal{O}}(\alpha^{2})$. 

\end{itemize}
\smallskip
On the technical side, to deal with the nonsmooth operator, we employ a \emph{local linearization} of the operator in the neighborhood of the optimal solution $q^{*}$. While local linearization has been explored in nonlinear SA literature, they mainly consider the asymptotic regime with diminishing stepsizes~\cite{lee2020unified,li2023statistical,melo2008analysis,gopalan2023demystifying}. We generalize this approach to characterize the dependence on the constant stepsize. It is worth noting that while the linear approximation component resembles similar behavior as linear SA \cite{huo22-bias}, a precise characterization of the bias requires a careful analysis of the linear approximation error to establish a proper higher order of $\alpha$.

\subsection{Related Work}
Here we discuss closely related work and defer other related work to Section~\ref{sec:appx_related} in supplementary materials. 

\paragraph{Q-learning.}  
An increasing volume of recent work has been dedicated
to understanding finite-time guarantees of Q-learning in various scenarios. 
There are two types of results
on the distance between the estimate $q_{k}$ and the optimal
Q-function $q^{*}$: high probability bounds and mean (square)
error bounds. 
For classical asynchronous Q-learning, as considered in this paper,
\cite{beck2012error} provide the first result on MSE
with constant stepsize and \cite{chen2021lyapunov} improve the result
by at least a $|\mathcal{S}||\mathcal{A}|$ factor. The work by \cite{li2023q} presents
the best known high probability sample complexity. 
It is worth noting that these two types of bounds are not directly
comparable, as discussed in 
\cite{chen2021lyapunov}.
Importantly, these results
are achieved either by rescaled linear stepsize $\alpha_{k}=a/(b+k)$
\cite{qu2020finite,chen2021lyapunov} or by a carefully
chosen constant stepsize based on the target accuracy \cite{chen2021lyapunov,li2023q}.
Contrasting with these findings, our results provide a precise characterization
of the convergence rate as well as the bias induced by constant-stepsize
$\alpha$, for any $\alpha$ in a given range. Our explicit characterization
enables the application of RR technique, leading to an estimate
with reduced bias, while simultaneously enjoying the exponential
convergence of the optimization error. 

Some recent work also studies Polyak-Ruppert averaged Q-learning. \cite{xie2022statistical} and \cite{li2023statistical} prove a functional central limit theorem for the averaged iterates of \emph{synchronous} Q-learning with constant stepsize and diminishing stepsize, respectively. In contrast, we focus on asynchronous Q-learning involving Markovian data.

\paragraph{Stochastic approximation.} 
There is a growing interest in investigating general SA
with constant stepsize. Most work along this line considers i.i.d.\ or martingale difference noise, and establishes
finite-time guarantees for contractive/linear SA~\cite{chen20-contract-SA,Mou20-LSA-iid,durmus2021-LSA} or SGD~\cite{Dieuleveut20-bach-SGD,Yu21-stan-SGD}. 
Recent work investigates constant-stepsize SA with Markovian noise, motivated by applications in RL. For linear SA, the work by \cite{srikant-ying19-finite-LSA} provides finite-time upper bounds on the MSE. \cite{Mou21-optimal-linearSA} study LSA with PR averaging and presents instance-dependent MSE upper bounds with tight dimension dependence. The work by \cite{durmus2021stability}
shows a finite-time upper bound for the $p$-th of LSA iterate on general state space. 
The paper \cite{lauand2022bias} shows that LSA with Markovian noise admits a biass that can not be eliminated by averaging.
The work \cite{huo22-bias} establishes the distributional convergence
of LSA iterates, and provides an explicit asymptotic expansion of the bias in stepsize. Going beyond LSA, the work \cite{chen2022finite}
considers contractive SA under a strong monotone condition and provides
finite-time upper bound on the MSE.

Our results have some similarities to \cite[Proposition 2]{Dieuleveut20-bach-SGD},
\cite[Theorem 3]{durmus2021-LSA} and \cite{huo22-bias}, in that we
also study instances of SA with constant stepsizes through Markov chain theory. However, our setting is 
different from \cite[Theorem 3]{durmus2021-LSA} as the sampling process
in RL naturally induces Markovian noise, whereas they consider i.i.d.\ data.
While the work \cite{huo22-bias} also considers Markovian noise, their
focus is on linear SA. In contrast, Q-learning 
involves nonsmooth update, which brings additional challenges on the analysis of convergence and bias. In particular, for convergence proof, the difference between two coupled LSA iterates can be reformulated as an LSA; however, this is not the case for Q-learning, which requires a novel analysis for the coupled iterates. 
For the bias analysis,  we employ
a local linearization method to decompose the Q-learning operator into a linear term and a remaining approximation term. While the technique for LSA \cite{huo22-bias} can be used to analyse the linear part, it is highly nontrivial to show the remaining term is of higher order dependence on $\alpha$. We establish this result by analyzing the fourth moment of the iterates. Our techniques may be of independent interest and have the potential to be applied to the analysis of other nonsmooth/nonlinear SA algorithms. 

\section{Preliminaries}

Consider a discounted Markov decision
process (MDP) defined by the tuple $(\mS,\mA, T, r,\gamma),$ where $\mS$ and $\mA$ are the (finite)
state space and action space,  $T:\mS\times \mA \rightarrow \Delta(\mS)$ is the transition kernel,
$r:\mS\times \mA \rightarrow [0,r_{\max}]$ is the reward function, and $\gamma\in(0,1)$ is the discounted factor. At time $t \in \{0,1,\dots\}$, 
the system is in state $s_t \in \mS$;  upon taking action $a_t\in\mA$, the system transits to $s_{t+1} \in \mS$ 
with probability $T(s_{t+1} | s_t,a_t)$  and generates a reward $r_t=r(s_t,a_t).$  

A stationary policy $\pi:\mS\rightarrow \Delta(\mA)$ maps each state to a distribution over the actions, where $\pi(a | s)$ represents the
probability of taking action $ a$ given state $s$.  For each policy $\pi$,  the action-value function (Q-function) is defined as follows: $\forall s \in \mS, \forall a\in \mA, $ 
$q^{\pi}(s, a)  =\E\big[ \sum_{t=0}^{\infty}\gamma^{t}r(s_{t},a_{t}) | s_0=s, a_0=a \big], \mbox{where $a_k\sim \pi(\cdot | s_k) $ for all } k\geq 0.$
An optimal policy $\pi^{*}$ is the policy that maximizes $q^{\pi}(s,a)$ for all $s\in\mS$ and $a\in \mA$ simultaneously. 
It is well known that such an optimal policy exists~~\cite{bertsekas2017dynamic} and we denote the associated Q functions as $q^* \equiv q^{\pi^{*}}$. Notably, given $q^*,$ one can obtain the optimal policy $\pi^*(s)\in \arg\max_{a\in \mA} Q^*(s,a)$.

\textbf{Behavior policy.} The goal of reinforcement learning (RL) is to learn the optimal policy based on transition data from the system, without the knowledge of the MDP model $(T,r).$ In this paper, we consider off-policy setting, where we have access to a sample trajectory $\{s_k,a_k,r_k\}_{k\geq 0}$ generated by the MDP under a fixed \emph{behavior policy} $\tpi.$ 

Define $\mX:=\mS\times \mA\times \mS,$ and let $x_k = (s_k, a_k, s_{k+1}).$ Note that when $\tpi$ is stationary, $\{x_k\}_{k\geq 0}$ forms a time-homogeneous Markov chain. We use $P=(p_{ij})$ to denote the corresponding transition matrix. In this paper, we impose the following assumption on the behavior policy.

\begin{assumption}\label{MC}
$\{x_k\}_{k \geq 0}$ is an irreducible and aperiodic Markov chain on a finite state $\mathcal{X}$ with stationary distribution $\mu_{\mathcal{X}}$. Also, the distribution of the initial state $x_0$ is $\mu_{\mathcal{X}}.$
\end{assumption}

Assumption \ref{MC} is equivalent to assuming that Markov chain $\{s_k,a_k\}_{k\geq0}$ induced by the behavior policy $\tpi$ is uniformly ergodic with a unique stationary distribution $\mu_S$~\cite{chen2021lyapunov}. This assumption is standard for analyzing off-policy Q-learning~\cite{li2020sample,chen2021lyapunov,qu2020finite}.  
Assumption \ref{MC} implies that  $\{x_k\}_{k \geq 0}$ mixes geometrically fast to the  stationary distribution $\mu_{\mathcal{X}}$ \cite{levin2017markov}, and there exist $c \geq 0$ and $\rho \in (0,1)$ s.t.
\begin{equation}
\max \limits_{x \in \mathcal{X}} \Vert p^k(x,\cdot) - \mu_{\mathcal{X}}(\cdot)\Vert_{TV} \leq c \rho^k,
\end{equation}
where $p^k(x,\cdot)$ denotes the distribution of $x_k$ given $x_0 = x.$

To quantify how fast $\{x_k\}_{k \geq 0}$ mixes to a specified precision, we define the mixing time below.

\begin{definition} \label{def:mixing_time}
$\forall \delta > 0$, define $t_{\delta} := \min \{ k \geq 0: \max \nolimits_{x \in \mathcal{X}} \Vert p^k(x,\cdot) - \mu_{\mathcal{X}}(\cdot)\Vert_{TV} \leq \delta\}.$
\end{definition}
Under Assumption \ref{MC}, we have $t_{\alpha} \leq \frac{\log(c/\rho) + \log(1/\alpha)}{\log(1/\rho)}$, which implies $\lim_{\alpha \to 0} \alpha^{m_1} t_{\alpha^{m_2}} = 0$,  $\forall m_1, m_2 > 0$. 
We  assume that $x_0 \sim \mu_{\mathcal{X}}$ to simplify some presentation. This assumption is not essential and can be relaxed by adapting our result after the Markov chain $\{x_k\}_{k \geq 0}$ has almost mixed. We remark that the same assumption is considered in many previous works \cite{Bhandari21-linear-td,huo22-bias,Mou21-optimal-linearSA}.

\textbf{Q-learning.} The Q-learning algorithm \cite{Watkins92-QLearning} is an iterative method for estimating the function $Q^*$ based on the sample trajectory $\{s_k,a_k,r_k\}_{k\geq 0}$. It generates a sequence of Q-function estimate $\{q_k:\mS\times\mA \rightarrow \R\}_{k},$  according to the following recursion:
\begin{align}\label{eq:tabular}
    q_{k+1} &= q_k + \alpha_k F(x_k,q_k),
\end{align}

where $\alpha_k$ is the stepsize. Here the operator $F: \mathcal{X} \times\mathbb{R}^{|\mathcal{S}\Vert\mathcal{A}|}   \mapsto \mathbb{R}^{|\mathcal{S}\Vert\mathcal{A}|}$, known as empirical Bellman operator, is defined as: $\forall (s,a)\neq (s_k,a_k), [F( x, q)](s, a)=0;$ 
\[   [F( x, q)](s_k, a_k)
   =r(s_k, a_k)+ \gamma \max \limits_{v} q_k(s_{k+1}, v) - q_k(s_k, a_k).\]
In this paper, we focus on constant stepsize $\alpha_k \equiv \alpha>0$. We use superscript $q_k^{(\alpha)}$ to emphasize the dependence on the stepsize $\alpha,$ but omit it when it is clear from the context. 

We record a few basic properties of Q-learning. (1) By the boundedness of reward, there exists a constant $q_{\max}$ such that $\Vert q_k\Vert_\infty \leq q_{\max},\forall k.$ (2) Denote the expected operator of $F$ by $\bar{F}(q):=\E_{x\sim \mu_{\mX}}[F(x,q)]$. It has been shown that $\Bar{F}(q) + q$ is a $\beta$-contraction mapping w.r.t.\ $\Vert\cdot\Vert_\infty$~\cite{chen2021lyapunov},  where 
\begin{align}
    \beta &= 1 - (1-\gamma)\min\nolimits_{(s,a)} \mu_{\mathcal{S}}(s,a). \label{eq:beta}
\end{align}
Recall that $\mu_{\mathcal{S}}$ is the stationary distribution of Markov chain $\{s_k,a_k\}_{k \geq 0}$~\cite{chen2021lyapunov}. By Assumption \ref{MC}, we have $\min_{(s,a)} \mu_{\mathcal{S}}(s,a) >0 $, thus $\beta < 1$. (3) Crucially, the iterates $\{q_k\}$ generated by Q-learning is not a Markov chain. On the other hand, we can see that the joint process $\{x_k,q_k\}_{k\geq 0}$ is a Markov chain on the state space $\mX \times \R^{|\mS|\times|\mA|}.$ In particular, employing a constant stepsize $\alpha$ induces a time-homogeneous Markov chain $\{x_k,q_k\}_{k\geq 0}.$

Part of our results on Q-learning (cf.~Theorem~\ref{thm:bias}) requires the following assumption. 
\begin{assumption}\label{assum:smooth}
    The optimal policy $\pi$ is unique. That is,  $\exists \Delta > 0$ such that for $ \forall s \in \mathcal{S}$,
$
q^*(s,a^*_s) - q^*(s,a) \geq 2\Delta, \forall a \neq a^*_s,
$
where $a^*_s := \arg\max \limits_{a} q^*(s,a)$ denotes the optimal action for each state $s.$
\end{assumption}

Similar conditions have been considered in prior work on the analysis of Q-learning variants~\cite{devraj2017fastest,li2023statistical}. Assumption~\ref{assum:smooth} implies that the operator in \eqref{eq:tabular} can be approximated by local linearization around $q^*$ and high-order approximation error, which leads to our precise characterization of the bias induced by constant stepsize. 

\textbf{Additional notations.} Part of our analysis uses the reversed Markov chains. By Assumption~\ref{MC}, the Markov chain $\{x_k\}_{k\geq 0}$ is irreducible. An implication is that the chain $\{x_k\}_{k\geq 0}$ running backward in time is also a Markov chain \cite{norris1998markov}, with transition kernel $\hat{P} = (\hat{p}_{ij})$ given by $\mu_{\mX}(j) \hat{p}_{j i}=\mu_{\mX}(i) p_{i j}.$

\section{Main Results}

In this section, we present our main results. 
In Section \ref{sec:convergence}, we show that joint data Q-learning iterates $\{x_k,q_k\}_{k\geq 0}$, viewed a time-homogeneous Markov chain, converges to a unique limit distribution. We also establish explicit convergence rates. Moreover, we show a central limit theorem (CLT) for the iterates $\{q_k\}_{k \geq 0}$ in Section \ref{sec:CLT}. We then precisely characterize the relationship between the limit and the stepsize in
Section \ref{sec:bias}. Furthermore,  we investigate the implication of these results for Polyak-Ruppert tail averaging and Richardson-Romberg extrapolation in Section \ref{sec:average}.

\subsection{Stationary Distribution and Convergence Rate} \label{sec:convergence}

Note that the Q-learning iterate $\{q_k\}_{k\geq 0}$ is not a Markov chain by itself, as its dynamic depends on the Markovian data $\{x_k\}_{k \geq 0}$. To show the distributional convergence of $\{q_k\}_{k\geq 0}$ and quantify its convergence rate, we consider the joint process $\{x_k,q_k\}_{k\geq 0},$ which can be cast as a time-homogeneous Markov chain. We will analyze the convergence of this Markov chain using the Wasserstein 2-distance. The Wasserstein 2-distance between two distributions $\mu$ and $\nu$ in $\mathcal{P}_2(\mathbb{R}^d)$ is defined as
\begin{align*}
W_2(\mu, \nu) & =\inf _{\xi \in \Pi(\mu, \nu)}\left(\int_{\mathbb{R}^d}\|u-v\|_\infty^2 \mathrm{~d} \xi(u, v)\right)^{\frac{1}{2}} \\
&=\inf \left\{\left(\mathbb{E}\left[\left\|\theta-\theta^{\prime}\right\|_\infty^2\right]\right)^{\frac{1}{2}}: \mathcal{L}(\theta)=\mu, \mathcal{L}\left(\theta^{\prime}\right)=\nu\right\},
\end{align*}
where $\mathcal{P}_2(\mathbb{R}^d)$ denotes the space of square-integrable distributions on $\mathbb{R}^d$, $\mathcal{L}(\theta)$ denote the distribution of $\theta$ and $\Pi(\mu, \nu)$ is the set of all joint distributions in $\mathcal{P}_2(\mathbb{R}^d \times \mathbb{R}^d)$ with marginal distributions $\mu$ and $\nu$. To analyze the Markov chain $\{x_k,q_k\}_{k \geq 0}$, we define the extended Wasserstein 2-distance as in \cite{huo22-bias}. 
Let $d_0\left(x, x^{\prime}\right):=\mathds{1}\left\{x \neq x^{\prime}\right\}$ and $\bar{d}\left((x, \theta),\left(x^{\prime}, \theta^{\prime}\right)\right):=\sqrt{d_0\left(x, x^{\prime}\right)+\left\|\theta-\theta^{\prime}\right\|_\infty^2}$, which defines a metric on $\mathcal{X} \times \mathbb{R}^d$. Then, the extended Wasserstein 2-distance for two distributions $\Bar{\mu}$ and $\Bar{\nu}$ in $\mathcal{P}_2(\mathcal{X} \times \mathbb{R}^d)$ with respect to the metric $\Bar{d}$ is as below:
\begin{equation}\label{extendw2}
\begin{aligned}
\bar{W}_2(\bar{\mu}, \bar{\nu}) & =\inf \left\{\left(\mathbb{E}\left[\bar{d}\left(z, z^{\prime}\right)^2\right]\right)^{1 / 2}: \mathcal{L}(z)=\bar{\mu}, \mathcal{L}\left(z^{\prime}\right)=\bar{\nu}\right\} .
\end{aligned}
\end{equation}

We show that the Markov chain $\{x_k,q_k\}_{k \geq 0}$ converges in $\Bar{W}_2$ to a unique stationary distribution, \emph{geometrically} fast, as stated in the following Theorem.

\begin{thm}\label{limit4tabular}
Suppose that Assumption \ref{MC} holds, and the stepsize $\alpha$ for Q-learning \eqref{eq:tabular} satisfies
\begin{equation}
    \alpha t_\alpha \leq c_0 \frac{(1-\beta)^2}{\log(|\mathcal{S}\Vert\mathcal{A}|)}
    \qquad \text{for some constant } c_0.
    \label{eq:alpha_condition_convergence}
\end{equation} 
\begin{enumerate}[leftmargin=*]
    \item Under all initial distribution of $q_0$, the sequence
    $\{x_k,q_k\}_{k \geq 0}$ converges in $\Bar{W_2}$ to a 
    a unique limit $(x_\infty, q_\infty) \sim \Bar{\mu}$. Moreover, we have 
    \begin{equation}
        \operatorname{Var}(q_\infty) \leq c_{Q} \frac{\log (|\mathcal{S}\Vert\mathcal{A}|)}{\left(1-\beta\right)^2} \alpha t_\alpha, 
    \end{equation}
    where $c_{Q} = 912e\left(3\Vert q^*\Vert_\infty + r_{\max}\right).$
    \item $\Bar{\mu}$ is the unique stationary distribution of the Markov chain $\{x_k,q_k\}_{k \geq 0}$.
    \item Let $\mu := \mathcal{L}(q_\infty)$ be the second marginal of $\Bar{\mu}$. For all $k \geq t_\alpha$, we have 
\begin{equation}\label{convergerate}
\begin{aligned}
W_2^2\left(\mathcal{L}(q_k), \mu\right)
&\leq 24 \eta^{k-t_\alpha}\left(\mathbb{E}\left[\Vert q_0\Vert_\infty^2\right] + \mathbb{E}\left[\Vert q_\infty\Vert_\infty^2\right]\right),  
\end{aligned}    
\end{equation}
where $\eta=1-{\left(1-\beta\right) \alpha}/{2}.$
\end{enumerate}
\end{thm}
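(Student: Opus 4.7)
The plan is to establish a contractive coupling estimate for two Q-learning chains sharing the same Markovian data stream, from which all three parts of the theorem will follow. Concretely, I would couple processes $(x_k, q_k)$ and $(x_k, q_k')$ with possibly different initial laws of the Q-iterate but the same realization of $\{x_k\}_{k\geq 0}$, and analyze $\Delta_k := q_k - q_k'$. Under this synchronous coupling, $\Delta_{k+1}$ differs from $\Delta_k$ only in the coordinate $(s_k, a_k)$, where
\[
\Delta_{k+1}(s_k, a_k) = (1-\alpha)\Delta_k(s_k, a_k) + \alpha \gamma \bigl[\max_v q_k(s_{k+1}, v) - \max_v q_k'(s_{k+1}, v)\bigr].
\]
Because $\max$ is $1$-Lipschitz in $\ell_\infty$, a single step yields only the weak bound $\|\Delta_{k+1}\|_\infty \le \|\Delta_k\|_\infty$; a strict contraction has to be extracted by averaging over the coordinates that are actually updated.

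Next I would show that over an epoch of length $t_\alpha$, after $\{x_k\}$ has approximately mixed, the update coordinate $(s_k, a_k)$ is drawn at close to its stationary frequency $\mu_\mS(s,a)$. Combined with the population-level fact, noted in the preliminaries, that $\bar F + I$ is $\beta$-contractive in $\ell_\infty$ with $\beta$ as in \eqref{eq:beta}, this should yield an epoch-level inequality of the form
\[
\E\|\Delta_{k+t_\alpha}\|_\infty^2 \le \eta^{t_\alpha}\, \E\|\Delta_k\|_\infty^2, \qquad \eta = 1 - (1-\beta)\alpha/2.
\]
The hypothesis $\alpha t_\alpha \le c_0 (1-\beta)^2/\log(|\mS||\mA|)$ is precisely the budget needed to absorb the $\mathcal{O}((\alpha t_\alpha)^2)$ quadratic correction from the nonlinearity and to convert a coordinatewise drift into an $\ell_\infty^2$ Lyapunov bound, with the $\log(|\mS||\mA|)$ factor entering via a union-bound over coordinates when moving from the second moment of individual entries to $\E\|\cdot\|_\infty^2$. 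Iterating the epoch contraction, and initializing one side of the coupling from the stationary chain $(x_\infty, q_\infty)\sim \bar\mu$ on the other, delivers Part~3; the term $\E\|q_0\|_\infty^2 + \E\|q_\infty\|_\infty^2$ comes from bounding the base-level coupling cost during the initial mixing window of length $t_\alpha$.

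Existence and uniqueness (Parts~1--2) then follow by standard abstract arguments: applied to any two initial distributions, the epoch contraction forces the time-$k$ laws to form a Cauchy sequence in the complete metric space $(\mathcal{P}_2(\mX \times \R^{|\mS||\mA|}), \bar W_2)$ and converge to a common limit $\bar\mu$, which is the unique fixed point of the one-step Markov transition operator and hence the unique stationary distribution. For the variance bound on $q_\infty$, I would telescope $q_\infty - \E q_\infty$ along the stationary chain into one-step increments, each of size $\mathcal{O}(\alpha)$ thanks to the boundedness $\|q_k\|_\infty \le q_{\max}$, account for Markovian correlation through the mixing time $t_\alpha$, and integrate over the effective horizon $1/((1-\beta)\alpha)$ set by the contraction; the prefactor $\log(|\mS||\mA|)/(1-\beta)^2$ tracks the same coordinatewise-to-$\ell_\infty$ conversion.

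The principal obstacle is the nonsmoothness of the $\max$ operator. Unlike linear SA, where $\Delta_k$ itself satisfies an LSA recursion amenable to spectral analysis, here the nonlinear coupling through $\max$ makes $\Delta_{k+1}$ depend on the absolute positions of $q_k$ and $q_k'$ rather than on $\Delta_k$ alone. One is therefore forced to work entirely with $\ell_\infty$-based Lyapunov functions, which lack the Hilbert-space convenience of quadratic forms and inner-product expansions. Expanding $\E\|\Delta_{k+t_\alpha}\|_\infty^2$ along an epoch while carefully tracking when the argmax-coordinate of $\Delta$ is actually refreshed, and recovering the correct $\log(|\mS||\mA|)$ dependence, is the delicate step that distinguishes this analysis from its linear SA counterpart in \cite{huo22-bias}.
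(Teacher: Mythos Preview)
Your high-level architecture matches the paper: couple two Q-learning iterates on a common data stream, establish a geometric contraction for the difference in $\ell_\infty$, deduce that $\{\mathcal L(x_k,q_k)\}$ is Cauchy in the Polish space $(\mathcal P_2(\mX\times\R^{|\mS||\mA|}),\bar W_2)$, and read off existence, uniqueness, and the rate. Where you diverge is in the proof of the contraction itself, and this is the crux.

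You propose to expand $\E\|\Delta_{k+t_\alpha}\|_\infty^2$ directly, tracking when the argmax-coordinate of $\Delta$ is refreshed and converting coordinatewise drift into an $\ell_\infty^2$ bound via a union-bound over $|\mS||\mA|$. You correctly identify the obstruction: the difference of $\max$'s depends on the absolute positions of $q_k,q_k'$, so $\{\Delta_k\}$ is not an autonomous recursion. But your plan for overcoming this is vague (``this should yield\ldots''), and in effect amounts to redoing the entire Q-learning finite-time analysis for a non-autonomous process. The paper sidesteps this with a sandwiching trick you are missing: from
\[
\min_a w_k(s,a)\ \le\ \max_a q_k^{[1]}(s,a)-\max_a q_k^{[2]}(s,a)\ \le\ \max_a w_k(s,a),
\]
one constructs autonomous lower/upper envelope sequences $\underline w_k,\bar w_k$ with $\underline w_k\le w_k\le\bar w_k$ coordinatewise (Lemma~\ref{bound4w}). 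The key observation is that $\{\bar w_k\}$ and $\{-\underline w_k\}$ are themselves Q-learning iterates with zero reward, for which $q^*=0$; applying the existing MSE bound (Theorem~\ref{thm:2order}) with $r_{\max}=0$ then gives $\E\|\bar w_k\|_\infty^2,\E\|\underline w_k\|_\infty^2\le 12\,\E\|w_0\|_\infty^2\,\eta^{k-t_\alpha}$ directly, and hence the same for $\E\|w_k\|_\infty^2$. This reduces the contraction step to a one-line import rather than a fresh Lyapunov analysis.

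For the variance of $q_\infty$, your telescoping plan is unnecessary. The paper simply passes to the limit $k\to\infty$ in the existing bound $\E\|q_k-q^*\|_\infty^2\le c_{Q,1}\eta^{k-t_\alpha}+c_Q\,\alpha t_\alpha\log(|\mS||\mA|)/(1-\beta)^2$ (using $W_2$-convergence of second moments), and then observes $\operatorname{Var}(q_\infty)\le\E\|q_\infty-q^*\|_\infty^2$. The constant $c_Q$ and the $\log(|\mS||\mA|)/(1-\beta)^2$ dependence are inherited verbatim from \cite{chen2021lyapunov}, not produced by a new argument.
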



Theorem \ref{limit4tabular} states that the Markov chain $\{x_k,q_k\}_{k \geq 0}$ admits a unique stationary distribution, given that the constant stepsize $\alpha$ satisfies condition \eqref{eq:alpha_condition_convergence}.  Recall that under Assumption~\ref{MC}, for all $ m_1, m_2 > 0$, we have $\lim_{\alpha \to 0} \alpha^{m_1} t_{\alpha^{m_2}} = 0$. Therefore, there always exists a small enough stepsize $\alpha$ such that condition \eqref{eq:alpha_condition_convergence} holds. We remark that the resulting limit distribution $\Bar{\mu}$ is not generally a product distribution of its marginals $\mu_{\mX}$ and $\mu.$ 

Note that the convergence results stated in Theorem \ref{limit4tabular} cannot be obtained from existing error bounds on Q-learning. For example, the sharpest high probability bound on $\ell_{\infty}$ error scales as $\|q_k-q^*\|_{\infty} \lesssim (1-\rho)^k\|q_0-q^*\|_{\infty}+\mathcal{O}(\sqrt{\alpha})$, where $\rho\in (0,1)$ \cite{li2020sample}. Another type of upper bound is on the MSE that scales as $\E[\|q_k-q^*\|_{\infty}^2 ]\lesssim (1-(1-\beta)\alpha/2)^{k-t_\alpha}\|q_0-q^*\|^2_{\infty}+\mathcal{O}(\alpha t_{\alpha})$ \cite{chen2021lyapunov}. 
Both upper bounds imply that the sequence eventually falls in a neighbor of the optimal solution $q^*$ and the initial condition is forgotten exponentially fast. However, these result does not imply the distributional convergence of the sequence $\{q_k\}_{k\geq 0}$ or its convergence rate in $W_2$ metric.

We would like to highlight the techniques employed to prove Theorem~\ref{limit4tabular}. A standard method for proving the positive recurrence of a Markov chain is to verify irreducibility and Lyapunov drift condition~\cite{Meyn12_book}, as used in prior work on SA~\cite{borkar2021ode} and SGD~\cite{Yu21-stan-SGD}. However, this method requires strong condition on the randomness of the Markov chain dynamics, which is difficult to verify in Q-learning. Instead, we draw inspiration from recent work on LSA~\cite{huo22-bias}, and prove weak convergence by showing the convergence in $W_2$ distance through coupling arguments. Wasserstein distance has recently been used \cite{Dieuleveut20-bach-SGD,huo22-bias,durmus2021stability}. We remark that the coupling argument in our proof is more involved due to the nonsmooth nature of the update operator $F$. 
We sketch the proof outline of Theorem~\ref{limit4tabular} in Section~\ref{sec:proof_outline_convergence}. The complete proof is deferred to Section~\ref{sec:proof_limit4tabluar}.

A direct consequence of the convergence in $W_2$ metric is the convergence of the first two moments. We can also obtain explicit convergence rates from Theorem~\ref{limit4tabular}, as detailed in the following corollary. The proof is provided in Section~\ref{sec:proof_co4limit}.

\begin{cor}\label{co4limit}
Under the setting of Theorem \ref{limit4tabular}, for all $k \geq t_\alpha$,
\begin{equation*}
\begin{aligned}
\Vert\mathbb{E}[q_k - q_\infty]\Vert_{\infty}^2 
&\leq C\cdot \big(1-{\left(1-\beta\right) \alpha}/2\big)^{k-t_\alpha},\\
\left\|\mathbb{E}\left[q_k q_k^{\top}\right]-\mathbb{E}\left[q_{\infty} q_{\infty}^{\top}\right]\right\|_\infty &\leq C^{\prime}\cdot \big(1-{\left(1-\beta\right) \alpha}/{2}\big)^{\frac{k-t_\alpha}{2}}.
\end{aligned}
\end{equation*}
where $C$ and $C^{\prime}$ are constants independent of $\alpha$ and $k$. 
\end{cor}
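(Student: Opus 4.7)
The plan is to derive both bounds as direct consequences of the $\bar{W}_2$-convergence rate (\ref{convergerate}) in Theorem~\ref{limit4tabular}, by coupling $q_k$ with the stationary limit $q_\infty$ and exploiting the uniform boundedness $\|q_k\|_\infty\le q_{\max}$ (and hence $\|q_\infty\|_\infty\le q_{\max}$) noted right after equation (\ref{eq:beta}).

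For the first bound, I would fix an arbitrary coupling $(q_k,q_\infty)$ with the correct marginals and note that $\mathbb{E}[q_k-q_\infty]=\mathbb{E}[q_k]-\mathbb{E}[q_\infty]$ is independent of the coupling. Two applications of Jensen's inequality (first to push $\|\cdot\|_\infty$ inside the expectation, then to pass from the $L^1$ to the $L^2$ moment) give
\[
\bigl\|\mathbb{E}[q_k - q_\infty]\bigr\|_\infty^{2}
\;\le\;\bigl(\mathbb{E}\|q_k-q_\infty\|_\infty\bigr)^{2}
\;\le\;\mathbb{E}\|q_k-q_\infty\|_\infty^{2}.
\]
Taking the infimum over all couplings on the right yields $\|\mathbb{E}[q_k-q_\infty]\|_\infty^{2}\le W_2^{2}(\mathcal{L}(q_k),\mu)$, and substituting (\ref{convergerate}) together with the crude bound $\mathbb{E}\|q_\infty\|_\infty^{2}\le q_{\max}^{2}$ gives the first inequality with $C:=24\bigl(\mathbb{E}\|q_0\|_\infty^{2}+q_{\max}^{2}\bigr)$, which is manifestly independent of $\alpha$ and $k$.

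For the second bound I would work entry-wise in the matrix $\mathbb{E}[q_k q_k^{\top}]-\mathbb{E}[q_\infty q_\infty^{\top}]$. For each pair $(i,j)$, the algebraic identity
\[
q_k(i)q_k(j)-q_\infty(i)q_\infty(j)\;=\;\bigl(q_k(i)-q_\infty(i)\bigr)q_k(j)+q_\infty(i)\bigl(q_k(j)-q_\infty(j)\bigr),
\]
followed by the triangle inequality, Cauchy--Schwarz, and the uniform bounds $|q_k|,|q_\infty|\le q_{\max}$, yields
\[
\bigl|\mathbb{E}[q_k(i)q_k(j)]-\mathbb{E}[q_\infty(i)q_\infty(j)]\bigr|
\;\le\; 2q_{\max}\sqrt{\mathbb{E}\|q_k-q_\infty\|_\infty^{2}}.
\]
Taking the infimum over couplings turns the right-hand side into $2q_{\max}W_2(\mathcal{L}(q_k),\mu)$; applying (\ref{convergerate}) produces the square-root rate $\eta^{(k-t_\alpha)/2}$ claimed in the corollary, with $C':=2q_{\max}\sqrt{C}$.

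I do not anticipate any serious obstacle: both steps are short unpackings of the defining infimum of $W_2$. The only care required is verifying that the constants $C,C'$ do not secretly depend on $\alpha$ or $k$. This holds because $q_{\max}$ depends only on $\gamma$ and $r_{\max}$ (via the usual bound $q_{\max}\le r_{\max}/(1-\gamma)$) and $\mathbb{E}\|q_0\|_\infty^{2}$ depends only on the chosen initial distribution; in particular the second-moment bound on $q_\infty$ from Theorem~\ref{limit4tabular}(1) is not even needed, as the deterministic $q_{\max}$ bound already suffices.
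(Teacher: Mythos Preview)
Your proposal is correct and follows essentially the same approach as the paper: both derive the bounds from the $W_2$ convergence rate in Theorem~\ref{limit4tabular} via coupling, Jensen's inequality for the first moment, and a bilinear decomposition plus Cauchy--Schwarz for the second moment. Your use of the deterministic bound $\|q_k\|_\infty,\|q_\infty\|_\infty\le q_{\max}$ in place of the paper's appeal to Lemma~\ref{variance} is a minor simplification that makes the $\alpha$-independence of the constants more immediate.
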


\subsection{Central Limit Theorem} \label{sec:CLT}
Building on the convergence result, we establish a CLT for $\{q_k\}_{k \geq 0}$. Here we define $S_n = \sum_{k = 0}^{n-1}\big(q_k - \mathbb{E}[q_\infty]\big)$ and $Y_n(t) = n^{-\frac{1}{2}}S_{\lfloor nt \rfloor}.$ Let $\mathcal{D} = \mathcal{D}[0,1]$ denote the Skorokhod space, which is a separable and complete function space under some proper metrics \cite{prokhorov1956convergence}.

\begin{thm}\label{thm:clt}
Under the setting of Theorem \ref{limit4tabular}, $\boldsymbol{\Sigma}:=\lim _{n \rightarrow \infty} \frac{1}{n} \mathbb{E}_\pi\left(S_n S_n^{\top}\right)$ exists, and for $\bar{\mu}$-almost every point $(x_0,q_0)$ the sequence $\{n^{-\frac{1}{2}}S_n\}_{n\geq 0}$ converge in distribution to the $|\mathcal{S}||\mathcal{A}|$-dimensional Gaussian distribution $\mathcal{N}\left(\mathbf{0}, \boldsymbol{\Sigma}\right)$. Furthermore, the process $(Y_n(t))_{0 \leq t \leq 1}$ converges weakly to $\left(\boldsymbol{\Sigma}^{\frac{1}{2}} \boldsymbol{B}(t)\right)_{0 \leq t \leq 1}$ on the Skorokhod space $D[0,1]$, where $\boldsymbol{B} = (\boldsymbol{B}(t))_{t \geq 0}$ is the standard Brownian motion.
\end{thm}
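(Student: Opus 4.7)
The plan is to invoke the classical martingale-approximation route for additive functionals of geometrically ergodic Markov chains. Since $\{x_k,q_k\}_{k\geq 0}$ is a time-homogeneous Markov chain on $\mathcal{X}\times\mathbb{R}^{|\mathcal{S}||\mathcal{A}|}$ with unique stationary distribution $\bar{\mu}$ (Theorem \ref{limit4tabular}), and the iterates are uniformly bounded ($\Vert q_k\Vert_\infty\leq q_{\max}$), all the integrability issues that usually complicate CLTs for Markov chains disappear; what remains is essentially a Poisson-equation construction plus a martingale CLT.

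\textbf{Step 1: Solving the Poisson equation.} Let $P$ denote the transition operator of the joint chain $\{x_k,q_k\}$, and set $g(x,q):=q-\mathbb{E}[q_\infty]$. I would define
\[
\hat{g}(x,q) := \sum_{k=0}^{\infty} P^k g(x,q) = \sum_{k=0}^{\infty} \bigl(\mathbb{E}[q_k\mid x_0=x,q_0=q] - \mathbb{E}[q_\infty]\bigr).
\]
Corollary \ref{co4limit} (applied with initial distribution concentrated at $(x,q)$, which costs only a constant depending on $\Vert q\Vert_\infty$) together with the coupling bound \eqref{convergerate} yields $\Vert P^k g(x,q)\Vert_\infty\leq C_{x,q}\,\eta^{(k-t_\alpha)/2}$, so $\hat{g}$ is well defined, uniformly bounded (using $\Vert q_k\Vert_\infty\leq q_{\max}$ to bound $C_{x,q}$ uniformly), and solves $\hat{g}-P\hat{g}=g$.

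\textbf{Step 2: Martingale decomposition.} A telescoping argument then gives
\[
S_n = \sum_{k=0}^{n-1} g(x_k,q_k) = \hat{g}(x_0,q_0) - \hat{g}(x_n,q_n) + M_n,
\qquad
M_n := \sum_{k=1}^{n}\bigl[\hat{g}(x_k,q_k) - (P\hat{g})(x_{k-1},q_{k-1})\bigr],
\]
where $M_n$ is a martingale with respect to the natural filtration $\mathcal{F}_n=\sigma(x_0,q_0,\dots,x_n,q_n)$ and has bounded, stationary differences $D_k:=M_k-M_{k-1}$ under $\bar{\mu}$. The boundary term is uniformly $O(1)$, hence $n^{-1/2}\bigl(\hat{g}(x_0,q_0)-\hat{g}(x_n,q_n)\bigr)\to 0$ in probability for every starting point.

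\textbf{Step 3: CLT and functional CLT for the martingale.} Under $\bar{\mu}$, $\{D_k\}$ is a stationary ergodic martingale-difference sequence (ergodicity of the joint chain follows from Theorem \ref{limit4tabular}, part 2). The Birkhoff ergodic theorem gives $\tfrac1n\sum_{k=1}^n \mathbb{E}[D_k D_k^\top\mid\mathcal{F}_{k-1}]\to \boldsymbol{\Sigma}:=\mathbb{E}_{\bar{\mu}}[D_1 D_1^\top]$ almost surely. Combined with boundedness of $D_k$ (trivially verifying the conditional Lindeberg condition), the Cramér–Wold device plus the martingale CLT (e.g.\ Billingsley or Hall–Heyde, Theorem 3.2) yields $n^{-1/2}M_n\Rightarrow\mathcal{N}(\boldsymbol{0},\boldsymbol{\Sigma})$. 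For the functional statement, I would invoke the martingale functional CLT of Brown / Jacod–Shiryaev: the piecewise-constant process $t\mapsto n^{-1/2}M_{\lfloor nt\rfloor}$ converges weakly in $D[0,1]$ to $\boldsymbol{\Sigma}^{1/2}\boldsymbol{B}(t)$. The boundary term $n^{-1/2}\bigl(\hat{g}(x_0,q_0)-\hat{g}(x_{\lfloor nt\rfloor},q_{\lfloor nt\rfloor})\bigr)$ is uniformly $O(n^{-1/2})$ and hence negligible in the $J_1$ topology, so the same limit holds for $Y_n$.

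\textbf{Step 4: From stationary to $\bar{\mu}$-a.e.\ starting point, and identifying $\boldsymbol{\Sigma}$.} Uniform ergodicity couples any initial distribution to the stationary one at an exponential rate, so all of the above conclusions transfer from $\bar{\mu}$-stationary starts to $\bar{\mu}$-almost-every point $(x_0,q_0)$. Finally, starting $S_n$ from $\bar{\mu}$ and using $S_n=M_n+O(1)$, we get $\tfrac1n\mathbb{E}_\pi[S_n S_n^\top]=\tfrac1n\mathbb{E}_\pi[M_n M_n^\top]+o(1)\to\boldsymbol{\Sigma}$, which establishes existence of the limit and identifies it with the variance of the Gaussian. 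The main obstacle I anticipate is verifying the summability of $P^k g$ uniformly in $(x,q)$ starting from a fixed point (as opposed to stationary $x_0$, as in Corollary \ref{co4limit}); this will require redoing the coupling argument of Theorem \ref{limit4tabular} from a deterministic initial state, but the boundedness $\Vert q_k\Vert_\infty\leq q_{\max}$ should keep the constants under control.
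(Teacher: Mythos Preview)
Your approach is correct and, at the structural level, aligned with the paper's: both reduce the CLT to a geometric decay estimate on $P^k g(x,q)=\mathbb{E}[q_k\mid (x_0,q_0)=(x,q)]-\mathbb{E}[q_\infty]$ uniformly over $(x,q)$. The paper, however, packages the final step differently: rather than building the Poisson solution and martingale decomposition by hand, it verifies the Maxwell--Woodroofe-type hypothesis $\sup_n\bigl\|\sum_{k=0}^{n-1}P^k f\bigr\|_{L^2(\bar\mu)}<\infty$ together with $f\in L^2(\bar\mu)$, $\int f\,\mathrm d\bar\mu=0$, and then invokes a ready-made CLT/FCLT for stationary Markov chains (Theorem~2.1 in \cite{xie2022statistical}). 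Your explicit route is more self-contained; the paper's is shorter once one accepts the black-box reference.

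The one place where the paper's argument is sharper than what you sketch is exactly the obstacle you flag: obtaining geometric decay of $P^k g(x,q)$ from a \emph{deterministic} initial state, since Corollary~\ref{co4limit} is proved only under $x_0\sim\mu_{\mathcal X}$. Instead of redoing the coupling, the paper uses a neat two-stage trick (their Lemma~\ref{lemma:g}): condition on $(x_{\lfloor k/2\rfloor},q_{\lfloor k/2\rfloor})$, split the resulting expectation into a term where $x_{\lfloor k/2\rfloor}$ is drawn from $\mu_{\mathcal X}$ (to which Corollary~\ref{co4limit} applies over the remaining $k-\lfloor k/2\rfloor$ steps) plus a remainder controlled by $\|p^{\lfloor k/2\rfloor}(x,\cdot)-\mu_{\mathcal X}\|_{TV}\leq c\rho^{\lfloor k/2\rfloor}$ and the uniform bound $\|q_k\|_\infty\leq q_{\max}$. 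This yields $\|P^k g(x,q)\|\leq \lambda_0\lambda_1^k$ with $\lambda_1=\max\{\sqrt{1-(1-\beta)\alpha/2},\sqrt{\rho}\}$, uniformly in $(x,q)$, without reopening the coupling argument. You could plug this lemma directly into your Step~1 and the rest of your plan goes through unchanged.
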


Theorem \ref{thm:clt} states that the average of Q-learning iterates is asymptotically normally distributed around the expected value of the unique stationary distribution. Establishing such a CLT is important for uncertainty quantification and statistical inference \cite{li2023statistical}. Similar result has been established for synchronous Q-learning with constant stepsize \cite{xie2022statistical}. It is worth highlighting that one key step in \cite{xie2022statistical} uses the Kantorovich–Rubinstein theorem \cite{edwards2011kantorovich} defined on a Wasserstein distance with single-step contraction. However, such result does not hold in our setting due to Markovian data. To this end, we use the result in Theorem~\ref{limit4tabular} and ergodicity of $\{x_k\}_{k\geq 0}$ to establish CLT. The detailed proof is provided in Section~\ref{sec:proof_clt}.

\subsection{Bias Expansion} \label{sec:bias}

Under constant stepsize $\alpha$, Theorem \ref{limit4tabular} asserts that the  convergence of $q^{(\alpha)}_k$ to $q^{(\alpha)}_\infty$, which is of distribution $\mu$. Therefore, the  estimates $q^{(\alpha)}_k$ of Q-learning with constant stepsize do not converge to a point, but oscillate around the mean $\E[q^{(\alpha)}_{\infty}]$. 
Here we would like to quantify the \emph{bias}, i.e., the deviation of the mean $\E[q^{(\alpha)}_{\infty}]$ from the optimal $q^*.$
One of our main contributions is to provide an \emph{explicit} asymptotic expansion in the step-size $\alpha$ of the bias $\E[q^{(\alpha)}_{\infty}]-q^*.$

\begin{thm}\label{thm:bias}
Suppose that Assumptions \ref{MC} and \ref{assum:smooth} hold and $\alpha \leq \alpha_0$ for some $\alpha_0.$ Then the following holds for a vector $B$ that is independent of $\alpha.$
\begin{align} \label{eq:bias}
\mathbb{E}\left[q_\infty\right] &= q^* + \alpha B + \mathcal{O}({\alpha^2 + \alpha^2t_{\alpha^2}^2}).
\end{align}
where $B = B(r, \gamma, P)$ is explicitly given in the appendix.

\end{thm}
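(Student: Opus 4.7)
The plan is to exploit Assumption~\ref{assum:smooth} to locally linearize the empirical Bellman operator around $q^*$, reduce the bias computation to an LSA-type expansion, and control the nonsmooth remainder via a fourth-moment bound on $\theta_\infty := q_\infty - q^*$. By Assumption~\ref{assum:smooth}, whenever $\|q-q^*\|_\infty < \Delta$ the $\arg\max$ in the definition of $F(x,\cdot)$ is frozen at the optimal actions $\{a_s^*\}$, so $F(x,\cdot)$ is \emph{exactly} affine on this ball. I would therefore decompose
\[
F(x,q) \;=\; b(x) + A(x)(q-q^*) + R(x,q),
\]
where $b(x) := F(x,q^*)$, $A(x)$ is the linear operator induced by $\{a_s^*\}$ (with a single nonzero row per sample), and $R(x,q)$ is a uniformly bounded remainder that vanishes on $\{q : \|q-q^*\|_\infty < \Delta\}$.

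Because $(x_k,q_k)$ is stationary under $\bar\mu$, taking expectation of the recursion gives $\E_{\bar\mu}[F(x_\infty,q_\infty)]=0$. Using $\E_{\mu_\mX}[b(x)] = \bar F(q^*) = 0$ and writing $\bar A := \E_{\mu_\mX}[A(x)]$, this yields the identity
\[
\bar A\,\E[\theta_\infty] \;=\; -\,\E\big[(A(x_\infty) - \bar A)\,\theta_\infty\big] \;-\; \E\big[R(x_\infty,q_\infty)\big].
\]
The contraction property \eqref{eq:beta} of $\bar F + \mathrm{I}$ gives $\|\mathrm{I} + \bar A\|_\infty \leq \beta < 1$, so $\bar A$ is invertible via a Neumann series. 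The linear cross-covariance term can then be handled by adapting the LSA bias analysis of \cite{huo22-bias}: solving a Poisson equation for the Markov chain $\{x_k\}_{k \geq 0}$ and expanding the joint stationary distribution $\bar\mu$ to first order in $\alpha$ produces $\E[(A(x_\infty) - \bar A)\theta_\infty] = \alpha B_0 + \mathcal{O}(\alpha^2 t_{\alpha^2}^2)$ for an explicit vector $B_0 = B_0(r,\gamma,P)$, where only the linear part of the update enters this calculation.

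For the nonlinear remainder, I would use that $R$ vanishes on the $\Delta$-ball and is uniformly bounded, so by Markov's inequality
\[
\big\|\E[R(x_\infty,q_\infty)]\big\|_\infty \;\leq\; C\,\P\big(\|\theta_\infty\|_\infty \geq \Delta\big) \;\leq\; \frac{C}{\Delta^4}\,\E\big[\|\theta_\infty\|_\infty^4\big].
\]
The main obstacle, and the step I expect to be the hardest, is upgrading the second-moment estimate of Theorem~\ref{limit4tabular} to a fourth-moment bound $\E[\|\theta_\infty\|_\infty^4] = \mathcal{O}(\alpha^2 t_{\alpha^2}^2)$. I would obtain this by a drift argument on $\|\theta_k\|_\infty^4$, combining the $\ell_\infty$-contraction of $\bar F + \mathrm{I}$ with a shifted coupling over a window of $t_{\alpha^2}$ steps to decouple the Markovian noise to precision $\alpha^2$; the $(\alpha t_{\alpha^2})^2$ scaling arises from a careful expansion of the quartic cross-terms, which requires handling the nonsmooth $\max$ in $F$ more delicately than in the second-moment proof. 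Combining the two bounds and applying $\bar A^{-1}$ then produces \eqref{eq:bias} with $B := -\bar A^{-1} B_0$, establishing the theorem.
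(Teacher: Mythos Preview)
Your proposal is correct and follows essentially the same approach as the paper: local linearization under Assumption~\ref{assum:smooth}, an LSA-type expansion for the linear part, and a fourth-moment bound $\E[\|q_\infty-q^*\|_\infty^4]=\mathcal{O}(\alpha^2+\alpha^2 t_{\alpha^2}^2)$ to control the nonsmooth remainder via Markov's inequality. The paper carries out your cross-covariance step by writing the basic adjoint relationship for the conditional means $z(i)=\E[q_\infty\mid x_\infty=i]$ and solving the resulting system with the fundamental matrix $(I-\hat P+\Pi)^{-1}$ (the same object as your Poisson kernel), and it proves the fourth-moment estimate by a drift argument on the squared Generalized Moreau Envelope of $\tfrac12\|\cdot\|_\infty^2$ rather than directly on $\|\cdot\|_\infty^4$, which supplies the smoothness your Lyapunov expansion needs.
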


A few remarks are in order. 

The above theorem states that the asymptotic bias of Q-learning can be decomposed into a linear term and a higher order term of $\alpha$ under Assumption \ref{assum:smooth}. We emphasize that our bias characterization of the linear dependence on $\alpha$ is \emph{exact}. As discussed in the previous sub-section, existing results are typically in the form of an upper bound on the bias. Specifically, the high probability upper bound on $\ell_{\infty}$ error \cite{li2020sample} implies an bias of $\mathcal{O}(\sqrt{\alpha}).$ In contrast, our analysis reveals a refined result with  $ \alpha B+\widetilde{\mathcal{O}}(\alpha^{2}) $ bias. 

One key step in the proof of Theorem \ref{thm:bias} is to calculate $\mathbb{E}[F(x_\infty,q_\infty) \mid x_\infty = i], \forall i \in \mathcal{X}$. For linear SA, this step is straightforward. However, for asynchronous Q-learning, operator $F$ is not linear, and not even smooth, which makes the analysis more complicated. In our proof, we provide a local linearization method which can bridge the gap between nonlinear SA and LSA.
 We outline the proof of Theorem \ref{thm:bias} in Section~\ref{sec:proof_outline_bias}. The complete proof is provided in Section~\ref{sec:proof_bias}. %

We remark that the coefficient $B$ of the linear term is independent of $\alpha$. It only depends on the underlying MDP and the behavior policy. One can find an explicit expression of $B$ in the proof (cf.~Equation \eqref{eq:B}). Importantly, for the special case where the associated data sequence $\{x_k\}_{k\geq 0}$ is i.i.d., we have $B=\mathbf{0}$. However, the bias term $\mathcal{O}({\alpha^2 + \alpha^2t_{\alpha^2}^2})$ still remains, due to nonlinearity of Q-learning operator. This should be contrasted with the LSA where the bias vanishes with i.i.d.\ data \cite{huo22-bias}. In general, the existence of bias implies that the mean of the sequence $\{q_k\}_{k \geq 0}$ limit deviates from the optimal solution $q^*.$ 
Therefore,  averaging the iterates $q_k$ does not eliminate the bias. However, thanks to the independence of $B$ on $\alpha,$ we can leverage an extrapolation technique to reduce the bias.

\subsection{Tail Average and Richardson-Romberg  extrapolation}\label{sec:average}

We now utilize the bias expansion result~Theorem \ref{thm:bias} to study the behavior of Q-learning when combined with Polyak-Ruppert average and Richardson-Romberg extrapolation.

\subsubsection{Polyak-Ruppert Averaging} 

The celebrated Polyak-Ruppert averaging procedure \cite{Ruppert88-Avg,Polyak92-Avg} can reduce the variance of the estimate and accelerate the convergence rate. In this paper, we consider the tail-averaging variant of PR-averaging \cite{Jain18-tail-avg}, which is defined as follows with a burn-in period $k_0$:
\begin{align} 
\bar{q}_{k_0, k}:=&\frac{1}{k-k_0} \sum\nolimits_{t=k_0}^{k-1} q_t, \quad\mbox{for } k\geq k_0+1.\label{eq:tail_PR}
\end{align}

The following corollary provides non-asymptotic results for the first and second moments of $\bar{q}_{k_0, k}$. The proof is provide in Section~\ref{sec:proof_coro_moment}.

\begin{cor}\label{co:pr}
Under the setting of Theorem~\ref{thm:bias}, the tail-averaged iterates \eqref{eq:tail_PR} satisfy the following:  $\forall k > k_0 \geq t_{\alpha^2}$: 
\begin{align} 
\mathbb{E}\left[\bar{q}_{k_0, k}\right]-q^*=& \alpha B+ \mathcal{O}({\alpha^2 + \alpha^2t_{\alpha^2}^2})+\mathcal{O}\left(\frac{1}{\alpha(k - k_0)} \exp\left(-\frac{\alpha(1-\beta)k_0}{4}\right)\right),\label{eq:PR_first} \\
\mathbb{E}\left[\left(\bar{q}_{k_0, k}-q^*\right)\left(\bar{q}_{k_0, k}-q^*\right)^{\top}\right] =& \underbrace{\alpha^2 B^{\prime} + \mathcal{O}({\alpha^3 + \alpha^3t_{\alpha^2}^2})}_{\substack{\textup { asymptotic squared bias }}}+\underbrace{\mathcal{O}\left(\frac{1}{(k-k_0)\alpha}\right)}_{\textup{variance}} \nonumber\\
&+ \underbrace{\mathcal{O}\left(\frac{1}{(k - k_0)^2\alpha^2}\exp\left(-\frac{\alpha(1-\beta)k_0}{4}\right)\right)}_{\textup {optimization error }},\label{eq:PR_second}
\end{align}
where $B$ and $B'$ are independent of $\alpha.$
\end{cor}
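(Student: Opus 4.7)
The plan is to split the centered error into a deterministic bias and a zero-mean fluctuation and analyze each piece using the preceding theorems. Concretely, I would write
\begin{equation*}
\bar{q}_{k_0,k}-q^{*} = \bigl(\mathbb{E}[\bar{q}_{k_0,k}]-q^{*}\bigr) + \bigl(\bar{q}_{k_0,k}-\mathbb{E}[\bar{q}_{k_0,k}]\bigr),
\end{equation*}
use Theorem~\ref{thm:bias} and Corollary~\ref{co4limit} for the first summand, and reduce the second summand to a sum of covariances that can be controlled by the stationary variance bound of Theorem~\ref{limit4tabular} together with a geometric cross-covariance bound.

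For the first-moment identity \eqref{eq:PR_first}, note that $\mathbb{E}[\bar{q}_{k_0,k}]-q^{*} = (\mathbb{E}[q_{\infty}]-q^{*}) + \frac{1}{k-k_0}\sum_{t=k_0}^{k-1}(\mathbb{E}[q_t]-\mathbb{E}[q_{\infty}])$. Theorem~\ref{thm:bias} immediately gives $\alpha B + \mathcal{O}(\alpha^{2}+\alpha^{2}t_{\alpha^{2}}^{2})$ for the first parenthesis. For the tail sum, Corollary~\ref{co4limit} yields $\Vert\mathbb{E}[q_t]-\mathbb{E}[q_{\infty}]\Vert_{\infty}\leq \sqrt{C}(1-(1-\beta)\alpha/2)^{(t-t_\alpha)/2}$; summing the resulting geometric series, invoking the elementary inequality $1-\sqrt{1-x}\geq x/4$ for small $x$ to convert the ratio into a $1/\alpha$ prefactor, and absorbing $t_\alpha$ using the assumption $k_0\geq t_{\alpha^{2}}\geq t_\alpha$, produces the stated optimization error $\mathcal{O}\bigl(\frac{1}{\alpha(k-k_0)}\exp(-\alpha(1-\beta)k_0/4)\bigr)$.

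For the second-moment identity \eqref{eq:PR_second}, I decompose
\begin{equation*}
\mathbb{E}\bigl[(\bar{q}_{k_0,k}-q^{*})(\bar{q}_{k_0,k}-q^{*})^{\top}\bigr] = \bigl(\mathbb{E}[\bar{q}_{k_0,k}]-q^{*}\bigr)\bigl(\mathbb{E}[\bar{q}_{k_0,k}]-q^{*}\bigr)^{\top} + \operatorname{Cov}(\bar{q}_{k_0,k}).
\end{equation*}
Squaring the first-moment expansion already obtained gives the asymptotic squared bias $\alpha^{2}B^{\prime}+\mathcal{O}(\alpha^{3}+\alpha^{3}t_{\alpha^{2}}^{2})$ with $B^{\prime}=BB^{\top}$, and the cross term between $\alpha B$ and the optimization error is dominated (via AM--GM) by the stated $\mathcal{O}\bigl(\frac{1}{(k-k_0)^{2}\alpha^{2}}\exp(-\alpha(1-\beta)k_0/4)\bigr)$ contribution. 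For the covariance I expand $\operatorname{Cov}(\bar{q}_{k_0,k}) = (k-k_0)^{-2}\sum_{s,t=k_0}^{k-1}\operatorname{Cov}(q_s,q_t)$; the diagonal contribution is controlled by the second-moment conclusion of Corollary~\ref{co4limit} together with the stationary variance bound $\operatorname{Var}(q_{\infty})\lesssim \alpha t_{\alpha}/(1-\beta)^{2}$ from Theorem~\ref{limit4tabular}. For off-diagonal $(s,t)$ with $t>s$, I would couple the chain starting from the actual $(x_s,q_s)$ with an independent copy evolved from $(x_s,\widetilde{q}_s)$ drawn from the stationary $\bar{\mu}$, and use the $\bar{W}_2$-contraction from Theorem~\ref{limit4tabular} to obtain $\Vert\operatorname{Cov}(q_s,q_t)\Vert_{\infty}\lesssim \sqrt{\operatorname{Var}(q_s)\operatorname{Var}(q_t)}\,\eta^{(t-s)/2}$. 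Summing gives a geometric tail bounded by the $1/(1-\eta^{1/2})=\mathcal{O}(1/\alpha)$ factor, whence the overall variance estimate $\mathcal{O}\bigl(\frac{1}{(k-k_0)\alpha}\bigr)$.

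The main obstacle is the cross-covariance bound: neither Theorem~\ref{limit4tabular} nor Corollary~\ref{co4limit} directly supplies it, since both concern marginal quantities, so the geometric decay of $\operatorname{Cov}(q_s,q_t)$ must be obtained through a fresh coupling argument adapted to the nonsmooth operator $F$. Once this is in place the remaining work is essentially bookkeeping: combine the squared first-moment, summed diagonal variances, and summed off-diagonal covariances, absorb lower-order cross terms, and collect them into the bias, variance, and optimization-error buckets displayed in \eqref{eq:PR_second}.
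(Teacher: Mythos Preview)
Your first-moment argument is identical to the paper's, and your second-moment decomposition is equivalent (the paper centers at $\mathbb{E}[q_\infty]$ rather than $\mathbb{E}[\bar q_{k_0,k}]$, which is immaterial). The one place you over-engineer is the cross-covariance step, which you flag as ``the main obstacle'' requiring a fresh coupling. In fact no new argument is needed: for $l>t$ the paper simply uses the tower property,
\[
\mathbb{E}\bigl[(q_t-\mathbb{E}[q_\infty])(q_l-\mathbb{E}[q_\infty])^\top\bigr]
=\mathbb{E}\bigl[(q_t-\mathbb{E}[q_\infty])\,(\mathbb{E}[q_l\mid q_t]-\mathbb{E}[q_\infty])^\top\bigr],
\]
and then invokes Corollary~\ref{co4limit} \emph{with initial condition} $q_0=q_t$ to get $\|\mathbb{E}[q_l\mid q_t=y]-\mathbb{E}[q_\infty]\|_\infty\le C\bigl(1-(1-\beta)\alpha/2\bigr)^{(l-t-t_\alpha)/2}$ uniformly in $y$. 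Since Corollary~\ref{co4limit} already holds for arbitrary deterministic initializations, the geometric decay in the lag is available off the shelf; combining it with $\mathbb{E}\|q_t-\mathbb{E}[q_\infty]\|_\infty\le\mathbb{E}\|q_t-q_\infty\|_\infty+\sqrt{\operatorname{tr}\operatorname{Var}(q_\infty)}$ and summing gives the variance term directly. Incidentally, the bound your coupling would actually deliver is $\sqrt{\operatorname{Var}(q_s)}\cdot\mathcal{O}(1)\cdot\eta^{(t-s)/2}$ rather than $\sqrt{\operatorname{Var}(q_s)\operatorname{Var}(q_t)}\,\eta^{(t-s)/2}$, because the coupling controls $\mathbb{E}\|q_t-\tilde q_t\|_\infty^2$ by $\eta^{t-s}\mathbb{E}\|q_s-\tilde q_s\|_\infty^2=\mathcal{O}(\eta^{t-s})$, not by $\eta^{t-s}\operatorname{Var}(q_t)$; this still suffices for the final estimate but is less direct than the tower-property route.
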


For simplicity, let us consider the case $k_0=k/2$ and discuss the mean squared distance between the averaged-iterate $\bar{q}_{k/2, k}$ and $q^*$. From the analysis, we note the MSE can be decomposed into three parts: the asymptotic squared bias term $\| \E[\bar{q}_{\infty/2, \infty}-q^*]\|^2$ that is independent of $k$ and averaging; the second part for the variance of $\bar{q}_{k/2, k}$ that scales as $1/k$; and the optimization error $\| \E[\bar{q}_{\infty/2, \infty}-\bar{q}_{k/2, k}]\|^2$ that decays to 0 geometrically fast.
Importantly,  the larger the stepsize $\alpha$ is, the faster the variance and optimization error decay. This finding also justifies the benefit of using constant-stepsize.

\subsubsection{Richardson-Romberg Extrapolation}

Given the explicit expansion of the bias in stepsize $\alpha$ (cf.~Theorem~\ref{thm:bias}), we can leverage the Richardson-Romberg (RR) extrapolation technique from numerical analysis~\cite{gautschi2011numerical} to reduce the bias. Specifically, consider running two Q-learning recursions using the \emph{same} data stream $\{x_k\}_{k \geq 0}$, but with different stepsizes $\alpha$ and $2\alpha.$ Denote by $\bar{q}^{(\alpha)}_{k_0, k}$ and $\bar{q}^{(2\alpha)}_{k_0, k}$ the corresponding tail-averaged iterates. The corresponding RR extrapolated iterates are given by
\begin{align} \label{eq:RR_def}
\widetilde{q}^{(\alpha)}_{k_0, k} = 2\bar{q}^{(\alpha)}_{k_0, k} - \bar{q}^{(2\alpha)}_{k_0, k}.
\end{align}
With $k_0,k \to \infty$, Theorems \ref{limit4tabular} and \ref{thm:bias} imply that $\widetilde{q}^{(\alpha)}_{k_0, k}$ converges to $2q_\infty^{(\alpha)} - q_\infty^{(2\alpha)}$, which has a bias
\begin{align*}
2\mathbb{E} q_{\infty}^{(\alpha)} -\mathbb{E} q_{\infty}^{(2 \alpha)} - q^* 
&=2\left(\alpha B+\mathcal{O}({\alpha^2 + \alpha^2t_{\alpha^2}^2})\right)-\left(2 \alpha B+\mathcal{O}({\alpha^2 + \alpha^2t_{\alpha^2}^2})\right) \\
&=\mathcal{O}({\alpha^2 + \alpha^2t_{\alpha^2}^2}).
\end{align*}
Note that compared with $q^{(\alpha)}_{\infty}$ and $q^{(2\alpha)}_{\infty},$ the extrapolated sequence provides a new estimate that reduces the bias by a factor of $\alpha.$ We formally state the result in the following corollary, which quantifies the non-asymptotic behavior of the first two moments of extrapolated sequence $\{\widetilde{q}^{(\alpha)}_{k_0, k}\}_{k\geq0}.$ The proof is provided in Section~\ref{sec:proof_rr}.

\begin{cor}\label{rr}
Under the setting of Theorem \ref{thm:bias}, the RR extrapolated iterates \eqref{eq:RR_def} with stepsizes $\alpha$ and $2\alpha$ satisfy the following for all $k > k_0 \geq t_{\alpha^2}$: 
\begin{align}
&\mathbb{E}\left[\widetilde{q}_{k_0, k}^{(\alpha)}\right]-q^*=  \mathcal{O}({\alpha^2 + \alpha^2t_{\alpha^2}^2})+\mathcal{O}\left(\frac{1}{\alpha(k - k_0)} \exp\left(-\frac{\alpha(1-\beta)k_0}{4}\right)\right), \label{eq:RR_first} \\
&   \mathbb{E}\big[\big(\widetilde{q}_{k_0, k}^{(\alpha)}-q^*\big)\big(\widetilde{q}_{k_0, k}^{(\alpha)}-q^*\big)^{\top}\big] =\mathcal{O}\left({\alpha^4 + \alpha^{4}t_{\alpha^2}^4}\right)  
   + {\mathcal{O}\left(\frac{1}{(k-k_0)\alpha}\right)}+ {\mathcal{O}\left(    \frac{1}{(k - k_0)^2\alpha^2}\exp\left(-\frac{\alpha(1-\beta)k_0}{4}\right)  \right)}. \label{eq:RR_second}
\end{align}
\end{cor}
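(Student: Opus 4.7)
\textbf{Proof proposal for Corollary \ref{rr}.} The plan is to reduce the analysis of the extrapolated iterate $\widetilde{q}_{k_0,k}^{(\alpha)}=2\bar q_{k_0,k}^{(\alpha)}-\bar q_{k_0,k}^{(2\alpha)}$ directly to Corollary~\ref{co:pr}, applied at both stepsizes $\alpha$ and $2\alpha$. The key observation is that the bias expansion in Theorem~\ref{thm:bias} is \emph{exact} at linear order with coefficient $B$ independent of $\alpha$, so the linear contributions $2(\alpha B)-(2\alpha)B$ cancel identically. Since condition \eqref{eq:alpha_condition_convergence} holds for both $\alpha$ and $2\alpha$ whenever $\alpha\leq\alpha_0$ is small enough (and $t_{(2\alpha)^2}\leq t_{\alpha^2}$ so the burn-in hypothesis $k_0\geq t_{\alpha^2}$ covers both chains), Corollary~\ref{co:pr} applies to each averaged iterate.

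For the first moment \eqref{eq:RR_first}, I would just combine \eqref{eq:PR_first} at stepsizes $\alpha$ and $2\alpha$ by linearity:
\begin{align*}
\mathbb{E}[\widetilde q_{k_0,k}^{(\alpha)}]-q^*
&=2\bigl(\alpha B+\mathcal{O}(\alpha^2+\alpha^2 t_{\alpha^2}^2)\bigr)-\bigl(2\alpha B+\mathcal{O}(\alpha^2+\alpha^2 t_{\alpha^2}^2)\bigr)\\
&\quad+\mathcal{O}\!\left(\tfrac{1}{\alpha(k-k_0)}\exp(-\alpha(1-\beta)k_0/4)\right),
\end{align*}
where for the $2\alpha$ chain I absorb $t_{(2\alpha)^2}^2\leq t_{\alpha^2}^2$ and use $\exp(-2\alpha(1-\beta)k_0/4)\leq\exp(-\alpha(1-\beta)k_0/4)$ to match the displayed form.

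For the second moment \eqref{eq:RR_second}, set $u^{(\alpha)}:=\bar q_{k_0,k}^{(\alpha)}-q^*$ and $u^{(2\alpha)}:=\bar q_{k_0,k}^{(2\alpha)}-q^*$ and expand
\[
(\widetilde q_{k_0,k}^{(\alpha)}-q^*)(\widetilde q_{k_0,k}^{(\alpha)}-q^*)^\top=4\,u^{(\alpha)}(u^{(\alpha)})^\top-2\bigl(u^{(\alpha)}(u^{(2\alpha)})^\top+u^{(2\alpha)}(u^{(\alpha)})^\top\bigr)+u^{(2\alpha)}(u^{(2\alpha)})^\top.
\]
For the diagonal blocks, \eqref{eq:PR_second} gives the splitting into asymptotic squared bias $\alpha^2 BB^\top$, variance $\mathcal{O}(1/((k-k_0)\alpha))$, and the optimization error. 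The main technical point is the off-diagonal cross terms, where $u^{(\alpha)}$ and $u^{(2\alpha)}$ are \emph{correlated} through the common data stream $\{x_k\}$. I would handle these by writing $\mathbb{E}[u^{(\alpha)}(u^{(2\alpha)})^\top]=\mathbb{E}[u^{(\alpha)}]\mathbb{E}[u^{(2\alpha)}]^\top+\mathrm{Cov}(u^{(\alpha)},u^{(2\alpha)})$, bounding the second term componentwise with Cauchy--Schwarz $|\mathrm{Cov}(u^{(\alpha)}_i,u^{(2\alpha)}_j)|\leq\sqrt{\mathrm{Var}(u^{(\alpha)}_i)\mathrm{Var}(u^{(2\alpha)}_j)}=\mathcal{O}(1/((k-k_0)\alpha))$ (plus the optimization-error tail), and using the first-moment expansion \eqref{eq:PR_first} for the rank-one mean part. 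Collecting, the leading mean-squared contribution is $[2\mathbb{E} u^{(\alpha)}-\mathbb{E} u^{(2\alpha)}][2\mathbb{E} u^{(\alpha)}-\mathbb{E} u^{(2\alpha)}]^\top$, in which the $\alpha B$ terms cancel and what remains has each entry of order $\mathcal{O}(\alpha^2+\alpha^2 t_{\alpha^2}^2)$, producing the squared bias $\mathcal{O}(\alpha^4+\alpha^4 t_{\alpha^2}^4)$ in \eqref{eq:RR_second}. The variance and optimization error contributions combine linearly (up to constants $4,2,1$) to the same $\mathcal{O}(1/((k-k_0)\alpha))$ and $\mathcal{O}((k-k_0)^{-2}\alpha^{-2}\exp(-\alpha(1-\beta)k_0/4))$ rates.

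The step I expect to be delicate is the control of the cross-covariance $\mathrm{Cov}(u^{(\alpha)},u^{(2\alpha)})$: a Cauchy--Schwarz bound is enough to get the stated rate, and the cancellation we exploit only concerns the \emph{means}, so correlation between the two trajectories cannot destroy the $\alpha^2$ bias reduction; it merely contributes at the same variance order as each individual chain. Everything else is bookkeeping: comparing $t_{(2\alpha)^2}$ with $t_{\alpha^2}$ and $\exp(-2\alpha(1-\beta)k_0/4)$ with $\exp(-\alpha(1-\beta)k_0/4)$ to express both chains' bounds in a common form.
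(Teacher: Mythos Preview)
Your first-moment argument is exactly what the paper does: apply \eqref{eq:PR_first} at both stepsizes and cancel the $\alpha B$ terms.

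For the second moment, your approach is correct but organized differently from the paper. You center at $q^*$ and split $\mathbb{E}[\tilde u\tilde u^\top]=\mathbb{E}[\tilde u]\mathbb{E}[\tilde u]^\top+\mathrm{Var}(\tilde u)$, so the bias cancellation happens in the mean part and the variance part is controlled by Cauchy--Schwarz on the cross-covariance. The paper instead centers the two chains at their \emph{limiting means}: writing $u_1=\bar q^{(\alpha)}_{k_0,k}-\mathbb{E}[q_\infty^{(\alpha)}]$, $u_2=\bar q^{(2\alpha)}_{k_0,k}-\mathbb{E}[q_\infty^{(2\alpha)}]$ and the deterministic $v=2\mathbb{E}[q_\infty^{(\alpha)}]-\mathbb{E}[q_\infty^{(2\alpha)}]-q^*$, it applies the crude inequality $\|2u_1-u_2+v\|_2^2\le 3(4\|u_1\|_2^2+\|u_2\|_2^2+\|v\|_2^2)$, which eliminates cross terms altogether. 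The $\mathcal{O}(\alpha^4+\alpha^4 t_{\alpha^2}^4)$ then comes entirely from $\|v\|_2^2$ via Theorem~\ref{thm:bias}, and $\mathbb{E}\|u_i\|_2^2$ is bounded directly by the intermediate estimate \eqref{j} from the proof of Corollary~\ref{co:pr}. One point to be careful about in your route: the label ``variance'' in \eqref{eq:PR_second} is descriptive, not a statement that $\mathrm{Var}(\bar q^{(\alpha)}_{k_0,k})=\mathcal{O}(1/((k-k_0)\alpha))$; to justify your Cauchy--Schwarz step you need $\mathrm{Var}(\bar q^{(\alpha)}_{k_0,k})\preceq\mathbb{E}[(\bar q-\mathbb{E}[q_\infty])(\bar q-\mathbb{E}[q_\infty])^\top]$ and then invoke \eqref{j}, which is exactly the same intermediate bound the paper uses. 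So both arguments ultimately rest on \eqref{j}; the paper's centering just makes the separation of bias (in $v$) from fluctuation (in $u_1,u_2$) automatic and avoids tracking the cancellation of $\alpha^2 BB^\top$ across four terms.
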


Let us compare the MSE bounds~\eqref{eq:PR_second} on the PR-averaged iterates and
extrapolated sequence~\eqref{eq:RR_second}. It is important to note that the asymptotic squared bias is reduced from $\mathcal{O}(\alpha^2)$ to roughly $\mathcal{O}(\alpha^4)$ by RR extrapolation! Meanwhile, RR extrapolation still enjoys similar decaying rates of variance and optimization error. We remark that the RR procedure involves the computation of two parallel Q-learning iterates, which can use either the same or different data sequences. This makes the RR procedure inherently parallelizable, offering potential performance improvements when implemented on parallel computing architectures.

\section{Proof Outline}

\subsection{Proof Outline for Theorem~\ref{limit4tabular} on Convergence} \label{sec:proof_outline_convergence}

This sub-section is devoted to the discussion of the proof outline for the existence of the limit distribution, which is the most challenging part.

We first note that the space $\mathcal{P}(\mX\times \R^{|\mS|\times|\mA|})$ endowed with our extended 
 Wasserstein 2-distance $\Bar{W}_2$ is a Polish space, as indicated by Theorem 6.18 in \cite{villani2009optimal}. If we can establish that $\sum_{k = 0}^{\infty} \Bar{W}_2^2\left(\mathcal{L}\left(x_k,q_k\right), \mathcal{L}\left(x_{k+1}, q_{k+1}\right)\right)< \infty$, then the sequence $\{x_k, q_k\}_{k \geq 0}$ forms a Cauchy sequence. This, in turn, allows us to prove the existence of the limit distribution, using the fact that all Cauchy sequences converge in a Polish space.

The next step involves coupling through the construction of two Markov chains,$\{x_k^{[1]}, q_k^{[1]}\}_{k \geq 0}$ and $\{x_k^{[2]}, q_k^{[2]}\}_{k \geq 0},$ which share the same underlying data stream $\{x_k^{[1]}\}_{k \geq 0}=\{x_k^{[2]}\}_{k \geq 0}=\{x_k\}_{k \geq 0}$. We observe that the iterates difference $w_k := q_{k}^{[1]} - q_{k}^{[2]}$ exhibits the following dynamic, leading to the subsequent Proposition:
$
w_{k+1}(s_k, a_k) = (1-\alpha)w_k(s_k, a_k) + \alpha\gamma \big( \max \limits_{a} q_{k}^{[1]}(s_{k+1}, a) - \max \limits_{a} q_{k}^{[2]}(s_{k+1}, a) \big).
$

\begin{prop}\label{w}
Under Assumption \ref{MC}, if $\alpha t_\alpha \leq c_0 \frac{(1-\beta)^2}{\log(|\mathcal{S}\Vert\mathcal{A}|)}$, the following bound holds for all $k \geq t_\alpha$,
\[\mathbb{E}\left[\left\|w_k\right\|_{\infty}^2\right] \leq 12\mathbb{E}\left[\Vert w_0\Vert_\infty^2\right] \left(1-\frac{\left(1-\beta\right) \alpha}{2}\right)^{k-t_\alpha}.\]
\end{prop}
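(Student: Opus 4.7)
The aim is to show that $\mathbb{E}\|w_k\|_\infty^2$ contracts geometrically at rate $(1-(1-\beta)\alpha/2)$ per step once $k \geq t_\alpha$. Three features make this nontrivial: (i) $\|\cdot\|_\infty$ is nonsmooth, so we cannot run a naive Lyapunov-smoothness argument directly; (ii) the $\max_v$ inside the empirical Bellman operator means that the update $w_{k+1}(s_k,a_k)=(1-\alpha)w_k(s_k,a_k)+\alpha\gamma G_k$ depends on $G_k := \max_v q_k^{[1]}(s_{k+1},v)-\max_v q_k^{[2]}(s_{k+1},v)$, which is not linear in $w_k$ and only satisfies the $1$-Lipschitz bound $|G_k|\leq \|w_k\|_\infty$; (iii) the data $\{x_k\}$ is Markovian, so conditional on the past, $(s_k,a_k)$ is not distributed as $\mu_{\mathcal{S}}$.

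My plan is to apply the smoothed-Lyapunov plus mixing-time recipe, paralleling the Q-learning analysis of Chen et al. I would replace $\|\cdot\|_\infty$ with the $\ell_p$-norm for $p = c\log(|\mathcal{S}||\mathcal{A}|)$, so that $\|\cdot\|_\infty \leq \|\cdot\|_p \leq e^{1/c}\|\cdot\|_\infty$; this is the source of both the constant $12$ in the final bound and the log factor in the stepsize restriction. Since only the coordinate $(s_k,a_k)$ is modified in each step,
\[
\|w_{k+1}\|_p^p = \|w_k\|_p^p - |w_k(s_k,a_k)|^p + \bigl|(1-\alpha)w_k(s_k,a_k) + \alpha\gamma G_k\bigr|^p.
\]
Convexity of $x\mapsto x^p$ applied to the combination $(1-\alpha)\cdot|w_k(s_k,a_k)| + \alpha\cdot \gamma|G_k|$ yields
\[
\|w_{k+1}\|_p^p \leq \|w_k\|_p^p - \alpha |w_k(s_k,a_k)|^p + \alpha\gamma^p |G_k|^p.
\]

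To turn this into a contraction, I would condition on $\mathcal{F}_{k-t_\alpha}$; by Assumption \ref{MC} and Definition \ref{def:mixing_time}, the joint distribution of $(s_k,a_k,s_{k+1})$ given $\mathcal{F}_{k-t_\alpha}$ is within TV distance $\alpha$ of $\mu_{\mathcal{X}}$. At stationarity $\mathbb{E}[|w_k(s_k,a_k)|^p\mid w_k]=\sum_{s,a}\mu_{\mathcal{S}}(s,a)|w_k(s,a)|^p \geq \mu_{\min}\|w_k\|_p^p$, while $\mathbb{E}[|G_k|^p \mid w_k] \leq \|w_k\|_\infty^p \leq \|w_k\|_p^p$. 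A direct inspection of the one-coordinate update also gives the deterministic monotonicity $\|w_{j+1}\|_\infty \leq \|w_j\|_\infty$, which allows $\|w_k\|_\infty$ to be replaced by $\|w_{k-t_\alpha}\|_\infty$ in residual error terms. Choosing $c$ so that $\gamma^p \ll \mu_{\min}$, the drift $-\alpha\mu_{\min}\|w_k\|_p^p$ dominates the residual $\alpha\gamma^p |G_k|^p$; after absorbing the $O(\alpha^2 t_\alpha)$ error from the TV mismatch using the stepsize hypothesis, this produces a per-step contraction factor of $1-(1-\beta)\alpha/2$ in the $\ell_p$-norm. Iterating from $k = t_\alpha$ and converting back from $\|\cdot\|_p^2$ to $\|\cdot\|_\infty^2$ yields the claimed bound; the initial segment $k \leq t_\alpha$ is handled by the deterministic monotonicity of $\|w_j\|_\infty$.

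The main obstacle is the step above: the residual $|G_k|^p$ lives at coordinate $(s_{k+1},v^*)$, which is different from the updated coordinate $(s_k,a_k)$, so the $\ell_p$ recursion couples different coordinates in a way that has no counterpart in linear SA --- one cannot simply view $w_{k+1}$ as $(I-\alpha A_k)w_k$ for some stochastic matrix $A_k$. The log-scaled choice of $p$ is precisely what makes $\gamma^p$ small enough that this cross-coordinate coupling is dominated by the negative drift, and careful tracking of the mixing error through the $t_\alpha$-step conditioning is what produces the stepsize hypothesis $\alpha t_\alpha \lesssim (1-\beta)^2/\log(|\mathcal{S}||\mathcal{A}|)$.
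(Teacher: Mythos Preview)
Your plan differs substantially from the paper's, and the specific Lyapunov you chose does not deliver the claimed second-moment rate.

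The paper does not run a Lyapunov argument on $w_k$ at all. Instead it introduces two auxiliary sequences
\[
\bar w_{k+1}(s_k,a_k)=(1-\alpha)\bar w_k(s_k,a_k)+\alpha\gamma\max_{a'}\bar w_k(s_{k+1},a'),\qquad
\underline w_{k+1}(s_k,a_k)=(1-\alpha)\underline w_k(s_k,a_k)+\alpha\gamma\min_{a'}\underline w_k(s_{k+1},a'),
\]
with $\underline w_0=\bar w_0=w_0$, and shows by induction that $\underline w_k\le w_k\le\bar w_k$ coordinatewise, using only $\min_a(u_a-v_a)\le\max_a u_a-\max_a v_a\le\max_a(u_a-v_a)$. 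The point is that both $\bar w_k$ and $-\underline w_k$ are \emph{genuine Q-learning iterates} for the same data stream but with zero reward, so $q^*=0$ and $r_{\max}=0$. Invoking the existing second-moment bound (Theorem~\ref{thm:2order}, i.e.\ Theorem~3.1 of Chen et al.) for each of them, and then the sandwich, yields the proposition with exactly the constant $12=c_{Q,1}|_{q^*=0,\,r_{\max}=0}$. No new Lyapunov calculation is performed.

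The gap in your route is the conversion from $\|\cdot\|_p^p$ to $\|\cdot\|_\infty^2$. Your recursion is for $\mathbb E\|w_k\|_p^p$, and the one-step factor you obtain for that quantity is $1-\alpha(\mu_{\min}-\gamma^p)$. But the proposition concerns $\mathbb E\|w_k\|_\infty^2$. Passing from a $p$-th moment bound to a second-moment bound (via Jensen on the concave map $x\mapsto x^{2/p}$, which is the only tool available) turns the per-step factor into $(1-\alpha(\mu_{\min}-\gamma^p))^{2/p}\approx 1-\tfrac{2\alpha}{p}(\mu_{\min}-\gamma^p)$: you lose a factor of $p/2$ in the rate. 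Your sentence ``converting back from $\|\cdot\|_p^2$ to $\|\cdot\|_\infty^2$'' elides precisely this step---the recursion you wrote is for the $p$-th power, not the square, and the deterministic monotonicity $\|w_{j+1}\|_\infty\le\|w_j\|_\infty$ does not bridge the two. A secondary issue is that making $\gamma^p\ll\mu_{\min}$ requires $p\gtrsim\log(1/\mu_{\min})/\log(1/\gamma)$, which need not be of order $\log(|\mathcal S||\mathcal A|)$.

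If you want to argue on $w_k$ directly rather than via sandwiching, the fix is to use a \emph{squared} smoothed norm---the Generalized Moreau Envelope $M(w)=\tfrac12\|w\|_m^2$ of Chen et al.---as the Lyapunov function. Their negative-drift lemma needs only that the averaged update satisfies $\|\bar h(w)+w\|_\infty\le\beta\|w\|_\infty$, and this follows from your Lipschitz bound $|G_k|\le\|w_k\|_\infty$ alone. So the GME analysis does go through for $w_k$; the paper's sandwich simply lets one cite the finished theorem instead of redoing the calculation.
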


The key idea behind the proof of Proposition \ref{w} is to exploit the fact that the difference between two max operators can be lower bounded by the minimum of the difference and upper bounded by the maximum of the difference. This inpsires us to construct two new dynamics that serve as lower and upper bounds on $\{w_k\}_{k \geq 0}$. We prove that the lower/upper bound sequences decay geometrically fast to 0, which immediately implies a geometric decay rate of $\{w_k\}_{k \geq 0}$
 for all initial distribution of $q_0^{[1]}$ and $q_0^{[2]}$. Then, by carefully choosing the initial distribution of $q_0^{[2]}$, we can  ensure that $(x_k, q^{[2]}_k) \overset{d}{=} (x_{k+1}, q^{[1]}_{k+1})$. Consequently, $\bar{W}_2^2\big(\mathcal{L}\big(x_k,q_k\big), \mathcal{L}\big(x_{k+1}, q_{k+1}\big)\big)\rightarrow 0$ geometrically fast. This result can be applied to show $\sum_{k = 0}^{\infty} \bar{W}_2^2\big(\mathcal{L}\big(x_k,q_k\big), \mathcal{L}\big(x_{k+1}, q_{k+1}\big)\big)< \infty$, which establishes the existence of limit distribution.

\subsection{Proof Outline for Theorem~\ref{thm:bias} on Bias Expansion} \label{sec:proof_outline_bias}

In the proof of Theorem \ref{thm:bias}, a crucial technique employed is the linearization of the non-smooth operator $F(x,q)$. Specifically, for a fixed $x$, we linearize $F(x,q)$ around the optimal solution $q^*$ as per the following proposition.

\begin{prop}\label{gradient}
There exists a function $F_{q^*}^{\prime}: \mathcal{X}  \mapsto  \mathbb{R}^{|\mS\Vert\mA| \times |\mS\Vert\mA|}$ s.~t.~for any $(x, q) \in \mathcal{X} \times \mathbb{R}^{|\mS\Vert\mA|}$
\begin{equation}\label{eq:gradient}
F(x,q) = F(x,q^*) + (G_{q^*}(x) - I_d)(q - q^*) + R(x,q),
\end{equation}
with $d = |\mathcal{S}\Vert\mathcal{A}|$ and $\|R(x,q)\|_\infty = \mathcal{O}\left(\|q - q^*\|_\infty^4\right).$ Furthermore, $\mathbb{E}_{x \sim \mu_{\mathcal{X}}}G_{q^*}(x)$ doesn't have an eigenvalue of 1.

\end{prop}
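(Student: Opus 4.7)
The plan is to define $G_{q^*}(x)$ via the \emph{exact} linearization of $F(x, \cdot)$ in a neighborhood of $q^*$. By Assumption~\ref{assum:smooth}, for every $s\in\mS$ and every $q$ with $\|q - q^*\|_\infty < \Delta$,
\[
q(s, a^*_s) \geq q^*(s, a^*_s) - \Delta > q^*(s, a) + \Delta \geq q(s, a),\quad \forall a \neq a^*_s,
\]
so the argmax of $q(s, \cdot)$ is still $a^*_s$. Consequently $\max_v q(s_{k+1}, v) = q(s_{k+1}, a^*_{s_{k+1}})$, and $F(x, q)$ reduces to a linear function of $q$ on this neighborhood. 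I would then let $G_{q^*}(x) \in \R^{|\mS\Vert\mA|\times|\mS\Vert\mA|}$ be the identity in every row except row $(s_k, a_k)$, which I set to $\gamma\, e_{(s_{k+1}, a^*_{s_{k+1}})}^\top$. A direct expansion of $F(x,q)(s_k,a_k) - F(x,q^*)(s_k,a_k)$ then shows that \eqref{eq:gradient} holds \emph{exactly}, with $R(x, q) = 0$, whenever $\|q - q^*\|_\infty < \Delta$.

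For $q$ outside this neighborhood I would invoke boundedness: by property~(1) of Q-learning, $\|q_k\|_\infty \leq q_{\max}$, so $F(x, q)$, $F(x, q^*)$, and $(G_{q^*}(x) - I_d)(q - q^*)$ are all bounded in $\ell_\infty$ by a universal constant $C$ depending only on $q_{\max}$ and $r_{\max}$. On the regime $\|q - q^*\|_\infty \geq \Delta$ we then have $\|R(x, q)\|_\infty \leq C \leq C\Delta^{-4}\|q - q^*\|_\infty^4$, while $R(x, q) = 0$ on the complementary regime; together this gives $\|R(x, q)\|_\infty = \mathcal{O}(\|q - q^*\|_\infty^4)$ uniformly in $x$. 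The exponent $4$ is chosen to match the fourth-moment estimates used later in the bias proof; any polynomial exponent would in fact work here.

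For the eigenvalue claim, I would compute $\bar G := \mathbb{E}_{x\sim\mu_\mX}[G_{q^*}(x)]$ explicitly. Using that $(s_k, a_k)\sim\mu_\mS$ under $\mu_\mX$ and $s_{k+1}\mid(s_k, a_k)\sim T(\cdot\mid s_k, a_k)$, a straightforward entry-wise calculation yields
\[
\bar G = I_d - D_\mu\bigl(I_d - \gamma T^{\pi^*}\bigr),
\]
where $D_\mu = \operatorname{diag}\bigl(\mu_\mS(s, a)\bigr)$ and $T^{\pi^*}$ is the row-stochastic matrix with $[T^{\pi^*}]_{(s',a'),(s,a)} = T(s\mid s',a')\,\mathds{1}\{a = a^*_s\}$. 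Under Assumption~\ref{MC}, $\mu_\mS(s, a) > 0$ for every $(s, a)$, so $D_\mu$ is invertible. Since $T^{\pi^*}$ is row-stochastic its spectral radius is at most $1$, so $\gamma T^{\pi^*}$ has spectral radius at most $\gamma < 1$ and $I_d - \gamma T^{\pi^*}$ is invertible. Hence $D_\mu(I_d - \gamma T^{\pi^*})$ is invertible, and $\bar G v = v$ forces $v = 0$.

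The main obstacle is conceptual rather than computational: one must recognize that Assumption~\ref{assum:smooth}, combined with the piecewise-linear structure of the max operator, makes $F(x, \cdot)$ \emph{locally exactly linear} around $q^*$, not merely differentiable with a Taylor remainder. Once this is noticed, the remainder vanishes on the near-field and only a crude boundedness argument is needed on the far-field; no gradient or Hessian computation of the max operator is required. The eigenvalue statement then reduces to the standard invertibility of $I - \gamma T^{\pi^*}$ for the optimal-policy Bellman operator.
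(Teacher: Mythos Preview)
Your construction of $G_{q^*}(x)$ and the near-field argument (that $R(x,q)=0$ whenever $\|q-q^*\|_\infty<\Delta$) are exactly the paper's. The far-field step, however, has a gap: the proposition is stated for \emph{all} $q\in\mathbb{R}^{|\mS||\mA|}$, not just Q-learning iterates, so invoking the iterate bound $\|q_k\|_\infty\le q_{\max}$ does not control $F(x,q)$ or $(G_{q^*}(x)-I_d)(q-q^*)$ for arbitrary $q$. The paper avoids this by bounding $R$ directly from its explicit form: the only nonzero coordinate is
\[
[R(x,q)](s_0,a_0)=\gamma\Bigl(\max_a q(s_1,a)-q(s_1,a^*_{s_1})\Bigr),
\]
which satisfies $\bigl|[R(x,q)](s_0,a_0)\bigr|\le 2\gamma\|q-q^*\|_\infty$ for every $q$; on $\|q-q^*\|_\infty\ge\Delta$ this yields $\|R(x,q)\|_\infty\le(2\gamma/\Delta^{3})\|q-q^*\|_\infty^4$. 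This one-line replacement needs no boundedness assumption on $q$ and proves the proposition as stated. (Your weaker version restricted to bounded $q$ would in fact suffice for the downstream application, but it is not what the proposition asserts.)

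Your eigenvalue argument is correct and takes a different route from the paper. You identify $\bar G=I_d-D_\mu(I_d-\gamma T^{\pi^*})$ and conclude via invertibility of the two factors, using $\min_{(s,a)}\mu_{\mathcal S}(s,a)>0$ and the spectral-radius bound on the row-stochastic $T^{\pi^*}$. The paper instead gives an elementary coordinate argument: for any nonzero $H$ with $(s_h,a_h)=\arg\max_{(s,a)}H(s,a)$ and $H(s_h,a_h)>0$ (achievable by replacing $H$ with $-H$ if needed), one checks $[\bar G H](s_h,a_h)\le(\gamma p_h+1-p_h)H(s_h,a_h)<H(s_h,a_h)$, so $\bar G H\neq H$. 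Your approach has the merit of exposing the exact matrix structure $D_\mu(I_d-\gamma T^{\pi^*})$ that reappears later as $-\bar A$; the paper's argument is shorter and avoids computing $\bar G$ entrywise.
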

We next provide a finite-time upper bound on the fourth moment of the error, which shows the remaining term $R(x,q)$ is of a higher order of $\alpha$ by Proposition \ref{gradient}. We remark that  existing non-asymptotic results for Q-learning are limited to the first moment and second moment of the error.

\begin{prop}\label{44tabular} 
Suppose that Assumption~\ref{MC} holds. Consider Q-learning \eqref{eq:tabular} with constant stepsize $\alpha.$ There exists a constant $\alpha_0>0$ such that $\forall \alpha \in (0, \alpha_0)$ and $\forall k\geq t_{\alpha^2},$ 
\begin{equation}\label{eq:4th_bound}
\mathbb{E}[\Vert q_k - q^*\Vert_\infty^4] \leq b_1(1 -\alpha (1 - \gamma)^{2})^{k - t_{\alpha^2}} +b_2\alpha^{2} + b_3\alpha^2 t_{\alpha^2}^2,
\end{equation}
where $b_1$, $b_2$ and $b_3$ are  constants  independent of $\alpha$.
\end{prop}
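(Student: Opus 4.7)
\textbf{Proof plan for Proposition~\ref{44tabular}.}
My plan is to extend the Lyapunov drift analysis that \cite{chen2021lyapunov} use for the second moment of Q-learning to the fourth moment, while carefully tracking the Markovian noise contribution. Let $\Delta_k := q_k - q^*$. Two sources of difficulty must be handled: $\|\cdot\|_\infty$ is nonsmooth, and the data $\{x_k\}_{k\geq 0}$ is temporally correlated. To handle the first, I would replace $\|\Delta\|_\infty^2$ by a smoothed surrogate $M(\Delta)$ (the generalized Moreau envelope of \cite{chen2021lyapunov}) satisfying (i) $c_1\|\Delta\|_\infty^2 \leq M(\Delta) \leq c_2\|\Delta\|_\infty^2$, (ii) $M$ is $L$-smooth, and (iii) the expected operator is contractive in $M$: $\langle \nabla M(\Delta), \Bar F(q)\rangle \leq -c_3 \|\Delta\|_\infty^2$. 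I would then take $V(q) := M(\Delta)^2 \asymp \|\Delta\|_\infty^4$ as the Lyapunov function for the fourth moment and bound $\mathbb{E}[V(q_k)]$, which delivers \eqref{eq:4th_bound} up to constants.

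The one-step drift is computed by a second-order Taylor expansion of $M$ applied to $\Delta_{k+1} = \Delta_k + \alpha F(x_k, q_k)$. Using $\|q_k\|_\infty \leq q_{\max}$, $\|F(x_k,q_k)\|_\infty \leq c_F$, and $\|\nabla M(\Delta)\|_\infty \leq c_1'\|\Delta\|_\infty$, the expansion of $M^2$ yields
\[
V(q_{k+1}) \leq V(q_k) + 2\alpha\, M(\Delta_k)\,\nabla M(\Delta_k)^{\top} F(x_k, q_k) + \alpha^2 H_k,
\]
where $\mathbb{E}H_k = O(1+\mathbb{E}\|\Delta_k\|_\infty^2)$. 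To linearize the first-order term in conditional expectation, I would condition on $\mathcal{F}_{k-t_{\alpha^2}}$ and invoke the geometric mixing of $\{x_k\}$ given by Assumption~\ref{MC}, which replaces $F(x_k, q_{k-t_{\alpha^2}})$ by $\Bar F(q_{k-t_{\alpha^2}})$ at a total-variation cost of $O(\alpha^2)$. Since each update moves $q$ by at most $\alpha \cdot O(1)$ in $\ell_\infty$, the iterate drift across the lookback window satisfies $\|q_k - q_{k-t_{\alpha^2}}\|_\infty = O(\alpha t_{\alpha^2})$; this is the only place where the $\alpha^2 t_{\alpha^2}^2$ contribution enters.

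Putting the mixing decoupling, the contractivity of $M$, and the iterate-drift estimate together, I expect to derive a recursion
\[
\mathbb{E}\bigl[V(q_{k+1}) \,\big|\, \mathcal{F}_{k-t_{\alpha^2}}\bigr] \leq (1 - c_4 \alpha)\, V(q_{k-t_{\alpha^2}}) + C\,(\alpha^2 + \alpha^2 t_{\alpha^2}^2),
\]
after absorbing cross terms of the form $\alpha \cdot M(\Delta_{k-t_{\alpha^2}}) \cdot O(\alpha t_{\alpha^2})$ into the contraction via weighted AM-GM. Iterating over $k - t_{\alpha^2}$ steps and using $V \asymp \|\Delta\|_\infty^4$ would give \eqref{eq:4th_bound}; matching the specific rate $\alpha(1-\gamma)^2$ then reduces to tuning the Moreau smoothing parameter against the contraction constant of $\Bar F(q)+q$.

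The main obstacle I anticipate is the bookkeeping of cross terms in the fourth-moment expansion. Unlike the second moment, where Taylor gives only one drift term and one $\|F\|_\infty^2$ remainder, the square $M^2$ produces mixed terms of three kinds: (a) $M(\Delta)\cdot \nabla M(\Delta)^\top F$, (b) $(\nabla M(\Delta)^\top F)^2$, and (c) $M(\Delta)\cdot \|F\|_\infty^2$. Terms (b) and (c) are $O(\alpha^2 V^{1/2})$ in expectation and can only be controlled by bootstrapping from the already-established second-moment bound of \cite{chen2021lyapunov} used as a prior---a non-routine step. Term (a) is where the mixing argument matters, and showing that its conditional expectation deviates from the clean drift $-c\alpha V$ by at most $O(\alpha^2 + \alpha^2 t_{\alpha^2}^2)$ requires combining the iterate-drift estimate with Cauchy--Schwarz against the second and third moments of $\|\Delta\|_\infty$. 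This careful interplay between the moments of orders $2$ and $4$ is the crux of the argument.
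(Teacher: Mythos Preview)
Your proposal is correct and follows essentially the same route as the paper: take the Generalized Moreau Envelope $M$ of \cite{chen2021lyapunov}, use $V=M^2$ as the Lyapunov function, expand $M^2(\Delta_{k+1})$ via smoothness of $M$, extract the negative drift from $\langle\nabla M(\Delta_k),\bar F(q_k)\rangle$, and handle the Markovian noise term by a lookback of length $t_{\alpha^2}$ together with the iterate-drift bound $\|q_k-q_{k-t_{\alpha^2}}\|_\infty=O(\alpha t_{\alpha^2})$.

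Two small deviations from the paper are worth flagging. First, the paper does \emph{not} bootstrap from the second-moment bound: the cross terms you label (b) and (c) are bounded directly by $O(\alpha^2)\,M^2(\Delta_k)+O(\alpha^3)$ (using $\|F(x_k,q_k)\|_2^2\le C\bigl(M(\Delta_k)+1\bigr)$ from \cite{chen2021lyapunov} and the AM--GM split you already mention), so the recursion closes within $M^2$ alone and the ``non-routine'' prior you anticipate is unnecessary. Second, the paper derives a \emph{one-step} unconditional recursion $\mathbb{E}[M^2(\Delta_{k+1})]\le(1-\alpha(1-\gamma)^2)\,\mathbb{E}[M^2(\Delta_k)]+O(\alpha^3+\alpha^3t_{\alpha^2}^2)$ rather than a $t_{\alpha^2}$-step conditional one; this is achieved by using the iterate-drift bound to re-express all lookback quantities in terms of $q_k$ (not $q_{k-t_{\alpha^2}}$), and it is what produces the exact rate $(1-\alpha(1-\gamma)^2)^{k-t_{\alpha^2}}$ in the statement.
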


Note that the first term on RHS of equation \eqref{eq:4th_bound} decays geometrically in $k$, whereas the remaining two terms are independent of $k$. Consequently, as $k \to \infty$, the upper bound is of order $\mathcal{O}\left(\alpha^{2}+\alpha^2t_{\alpha^2}^2\right)$.

The proof is inspired by the work \cite{chen2021lyapunov} that constructs a Generalized Moreau Envelope (GME) $M(\cdot)$ to analyse $\|\cdot\|_\infty^2$. By some nice property of GME~\cite{chen2020finite, chen2021lyapunov}, we can derive the bound for $M(\cdot)^2$, which equivalently provides the bound for $\|\cdot\|_\infty^4.$ We defer the complete proof of Proposition~\ref{gradient} and \ref{44tabular} to Section~\ref{sec:linearization}.

Therefore, the RHS of equation \eqref{eq:gradient} can be viewed as a combination of a linear operator and a high-order remaining term $R(x,q)$. Assume we are in the limit $(x_\infty, q_\infty)$. Recall that Theorem \ref{44tabular} has established a non-asymptotic bound for the fourth moment. Then, by fatou's lemma, 
$$\mathbb{E}[\|q_\infty - q^*\|_\infty^4] \leq \lim \inf_{k \to \infty} \mathbb{E}[\|q_k - q^*\|_\infty^4] = \mathcal{O}\left(\alpha^{2}+\alpha^2t_{\alpha^2}^2\right).$$
We then can analyze the dynamic of $\{x_k,q_k\}_{k \geq 0}$ as a linear SA combined with a remaining term of order $\mathcal{O}\left(\alpha^{2}+\alpha^2t_{\alpha^2}^2\right).$

\section{Numerical Experiments}

We consider two MDPs: the first example is a $1 \times 3$ Gridword with two actions (left/right); the second one is a classical $4 \times 4$ Gridworld with the slippery mechanism in Frozen-Lake, and four actions (left/up/right/down).
For both MDPs, the discounted factor is $\gamma = 0.9$ and the Markovian data $\{x_k\}_{k \geq 0}$ is generated from a uniformly random behavior policy. 
We defer details of reward function and transition kernel for the MDPs to Section~\ref{sec:appx_experiment}.

For Markovian data case, we run Q-learning with initialization $q_0^{(\alpha)} = q^* + 10$ and stepsize $\alpha \in \{0.1,0.2,0.4\}$. We also consider two
diminishing stepsizes: a rescaled linear stepsize $\alpha_k = 1/\big(1+(1-\gamma)k\big)$ as suggested by prior work~\cite{qu2020finite,chen2020finite} and a polynomial stepsize $\alpha_k = 1/{k^{0.75}}$.
The simulation results for the two MDPs are illustrated in Figure~\ref{fig:Markov_simple_MDP} and \ref{fig:Markov_hard_MDP}. We plot the $\ell_1$-norm error $\|\bar{q}_{k/2,k}^{(\alpha)} - q^*\|_1$ for the tail-averaged (TA) iterates $\bar{q}_{k/2,k}^{(\alpha)}$,  the RR extrapolated iterates $\widetilde{q}_{k}^{(\alpha)}$ with stepsizes $\alpha$ and $2\alpha$, and iterates with diminishing stepsizes. 

We first observe that the larger the stepsize $\alpha,$ the faster it converges, as implied by Corollary~\ref{co:pr}. 
We note that the final TA error, which corresponds to the asymptotic bias, is approximately proportional to the stepsize, as indicated by the roughly equal space between three 
TA lines in the log scale plots. Moreover, RR extrapolated iterates reduce the bias, which can be observed by comparing, e.g, the solid orange line (TA with $\alpha = 0.2$) and the dotted red line (RR with $\alpha = 0.2$ and $0.4$). These results are consistent with Corollary~\ref{rr}.
Furthermore, the TA and RR-extrapolated iterates with constant stepsizes enjoy significantly faster initial convergence than those with diminishing stepsizes. Particularly for the more complicated the MDP, as shown in Figure \ref{fig:Markov_hard_MDP}, iterates with diminishing stepsize converge slowly, while TA and RR-extrapolated iterates converge quickly and then saturate. 
A general choice of diminishing stepsize is of the form $\alpha_k=a/(b+k^c),$ where $a,b$ and $c$ are hyper-parameters. 
Tuning the best hyper-parameters for diminishing stepsize is generally more challenging than a single parameter for  constant stepsize.

\begin{figure}[htbp]
  \centering
    \subfigure[$1\times 3$ Gridworld.]{              
        \includegraphics[width=0.45\textwidth]{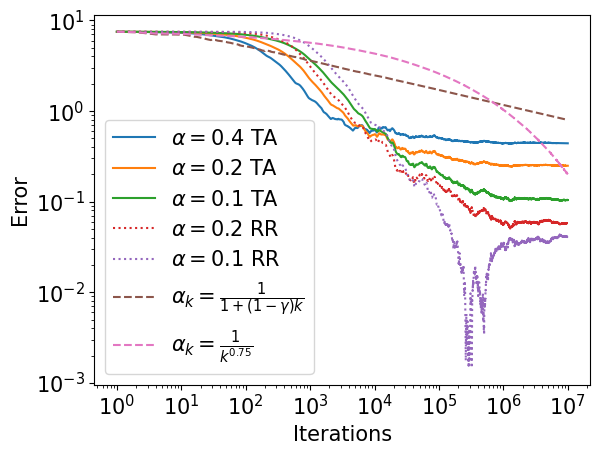}\label{fig:Markov_simple_MDP}}
    \hspace{15pt}
    \subfigure[$4\times 4$ Gridworld.]{
        \includegraphics[width=0.45\textwidth]{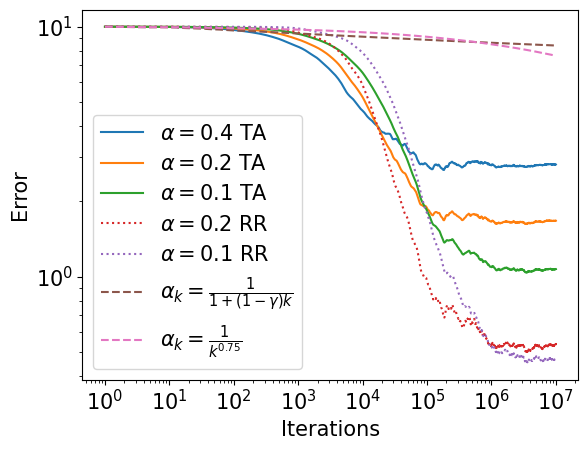}\label{fig:Markov_hard_MDP}}
    \caption{The errors of tail-averaged (TA) iterates and RR extrapolated iterates with different stepsizes.}\label{fig:markovian} 
\end{figure}

We also perform a similar set of experiments for MDPs with linear function approximation. We observe similar behaviors of the TA iterates and RR extrapolated iterates as the tabular case. Due to space constraint, we defer the details to Section~\ref{sec:appx_experiment}.

\section{Conclusions}

In this work, we provide a more comprehensive study of asynchronous Q-learning with constant stepsizes, through the framework of Markov chain theory. We establish the distributional convergence of the iterates, characterize the convergence rate, and prove a central limit theorem for the averaged iterates. Our convergence results lead to a refined characterization of the error. In particular, the explicit expansion of the asymptotic bias w.r.t.\ stepsize $\alpha$ allows one to use the RR extrapolation for bias reduction. There are a few interesting directions one can take to extend our work. First, our CLT, together with our bias characterization and the Richardson-Romberg de-biasing scheme, allow one to create confidence intervals for the output of the Q-learning algorithms. 
Second, we conjecture that our results extend to linear function approximation, as demonstrated by our empirical results. It will be interesting to generalize our analysis for this case. Our current results requires the assumption on the local linearity in the neighborhood of the optimal solution. Extending our analysis without this assumption is a direction worth pursuing.

\bibliography{arxiv.bib}
\bibliographystyle{plain}

\appendix
\section{More Related Work} \label{sec:appx_related}

\paragraph{Q-learning} Earlier work established the asymptotic convergence of Q-learning algorithm with diminishing stepsize \cite{Tsitsiklis1994-QLearn,Szepesvari97-QLearn-Rates}. Over the past few years, an increasing volume of work has been dedicated
to understanding finite-time guarantees of Q-learning in various scenarios. 
from tabular setting \cite{beck2012error,chen2021lyapunov,qu2020finite,wainwright2019stochastic,li2023q} to function approximation \cite{chen2022finite,xu2020finite,du2020agnostic,cai2019neural}. In this paper we focus on the classical asynchrounous Q-learning. There is another variant of Q-learning that concerns an \emph{synchronous} setting, where all state-action pairs are updated simultaneously at each step. This setting requires access to a simulator, which generates independent samples for each state-action pair. For synchronous Q-learning, the best-known sample complexity for mean error bound is $\tO({SA}{(1-\gamma)^{-5}\epsilon^{-2}})$ \cite{wainwright2019stochastic,chen2020finite}. The paper \cite{li2021tightening}
provides the state-of-art high probability sample complexity $\tO(\frac{SA}{(1-\gamma)^{4}\epsilon^{2}})$. In this paper, we focus on the classical asynchronous Q-learning which updates only a single state-action pair upon each observation. The Markovian noise inherited in the asynchronous model makes it considerably more challenging to analyze than the synchronous case.

We also note that there are other lines of work focusing on Q-learning
variants that aim to accelerate convergence and improve sample complexity,
such as variance-reduced Q-learning \cite{li2020sample,wainwright2019variance,sidford2018optimal},
 speedy Q-learning \cite{azar2011speedy} and double Q-learning \cite{weng2020mean}.
 Another direction considers Q-learning with sophisticated exploration
 strategies, with an emphasis on regret bound \cite{jin2018q,bai2019provably}.
 Regret is a metric fundamentally different from finite-sample bounds,
 and techniques for these two types of guarantees are quite different.
 A comparison with these results is beyond the scope of this paper.

\textbf{Stochastic approximation}. 
There is a rich literature on the study of SA. 
Classical SA theory mainly focuses on the asymptotic
convergence \cite{kushner2003-yin-sa-book,borkar08-SA-book,borkar2000-ode-sa,Blum54-SA},
typically assuming a diminishing stepsize sequence. More recent studies
have shifted the focus to non-asymptotic results. In particular, there
is a growing interest in investigating general SA and SGD algorithms
with constant stepsize. Most work along this line considers SA or
SGD with i.i.d.\ or martingale difference noise, and establishes
finite-time bounds. The paper \cite{chen20-contract-SA} considers
contractive SA and presents an upper bounds on the MSE. \cite{Lakshminarayanan18-LSA-Constant-iid}
analyzes linear SA (LSA) and establishes finite-time upper and lower
bounds on the MSE. The work \cite{Mou20-LSA-iid} refines these results,
providing tight bounds with the optimal dependence on problem-specific
constants as well as a central limit theorem (CLT) for the averaged
iterates. There are also some recent studies developing new bounds
on random matrix products to analyze LSA: \cite{durmus2021-LSA} establishes
tight concentration bounds of LSA, and \cite{durmus22-LSA} extends
these bounds to LSA with iterate averaging. In the context of SGD,
the work in \cite{Dieuleveut20-bach-SGD} considers strongly convex
and smooth functions. They prove that the iterates converge to a unique
stationary distribution by Markov chain theory. Subsequent work generalizes
this result to non-convex and non-smooth functions with quadratic
growth \cite{Yu21-stan-SGD}, and proves asymptotic normality of the
averaged SGD iterates. The work \cite{chen21-siva_asymptotic} exams
the limit of the stationary distribution as stepsize goes to zero.
All these results are established under the i.i.d.\ noise setting.
Additionally, \cite{bianchi2022convergence} explores SGD for non-smooth
non-convex functions with martingale difference noise, and establishes
the weak convergence of the iterates to the set of critical points
of the objective function.

\section{Proof of Theorem \ref{limit4tabular}} \label{sec:proof_limit4tabluar}

In this section, we provide the proof of Theorem \ref{limit4tabular}. The first part of the proof, Section~\ref{sec:proof_limit4tabular_coupling}, involves coupling through the construction of two iterates of Q-learning. Using the result of this step, we then establish the existence and uniqueness of the stationary distribution for the joint Markov chain $(x_k,q_k)_{k \geq 0}$ (part 1 and 2 of Theorem~\ref{limit4tabular}) in Section~\ref{sec:proof_limit4tabular_stationary}. We prove the convergence rate (part 3 of Theorem~\ref{limit4tabular}) in Section \ref{sec:proof_limit4tabular_rate}.

\subsection{Coupling and Geometric Convergence} \label{sec:proof_limit4tabular_coupling}

We construct a pair
of coupled Markov chains, $(x_k, q^{\left[1\right]}_k)_{k \geq 0}$ and $(x_k, q^{\left[2\right]}_k)_{k \geq 0}$, defined as 
\begin{equation}\label{couple}
\begin{aligned}
q_{k+1}^{[1]}(s_k, a_k) & = q_{k}^{[1]}(s_k, a_k)+\alpha\left(r(s_k, a_k) + \gamma \max \limits_{a} q_{k}^{[1]}(s_{k+1}, a) - q_{k}^{[1]}(s_k, a_k)\right), \\
q_{k+1}^{[2]}(s_k, a_k) & = q_{k}^{[2]}(s_k, a_k)+\alpha\left(r(s_k, a_k) + \gamma \max \limits_{a} q_{k}^{[2]}(s_{k+1}, a) - q_{k}^{[2]}(s_k, a_k)\right).
\end{aligned}
\end{equation}
Here $(q^{\left[1\right]}_k)_{k \geq 0}$ and $(q^{\left[2\right]}_k)_{k \geq 0}$ are two iterates generated by the Q-learning algorithm, coupled by sharing the
underlying data stream $(x_k)_{k \geq 0}$. We assume that the initial iterates $q^{\left[1\right]}_0$
and $q^{\left[2\right]}_0$ may depend on each other and
on $x_0$, but are independent of $(x_k)_{k \geq 1}$ given $x_0$.

Define the iterates difference as $w_k := q_{k}^{[1]} - q_{k}^{[2]}$. Note that the dynamic for $\{w_k\}_{k \geq 0}$ can be formulated as follows:
\[
w_{k+1}(s_k, a_k) = (1-\alpha)w_k(s_k, a_k) + \alpha\gamma \left( \max \limits_{a} q_{k}^{[1]}(s_{k+1}, a) - \max \limits_{a} q_{k}^{[2]}(s_{k+1}, a) \right).
\]

We can exploit the dynamic of $\{w_k\}_{k \geq 0}$ to establish its convergence rate, as stated in Proposition~\ref{w}. The proof of Proposition~\ref{w} is deferred to Section~\ref{sec:proof_w}. 

When $\alpha t_\alpha \leq c_0 \frac{(1-\beta)^2}{\log(|\mathcal{S}\Vert\mathcal{A}|)}$, we can apply Proposition~\ref{w} to bound the square of $W_2$ distance between $q_{k}^{[1]}$ and $q_{k}^{[2]}$ as follows: for all $k \geq t_\alpha$,
\begin{equation}\label{geo}
\begin{aligned}
W_2^2\left(\mathcal{L}\left(q_k^{[1]}\right), \mathcal{L}\left(q_k^{[2]}\right)\right) & \stackrel{(\mathrm{i})}{\leq} \bar{W}_2^2\left(\mathcal{L}\left(x_k, q^{[1]}_k\right), \mathcal{L}\left(x_k,q^{[2]}_k\right)\right) \\
& \stackrel{(\mathrm{ii})}{\leq} \mathbb{E}\left[\left\|q_k^{[1]}-q_k^{[2]}\right\|_\infty^2\right] \\
& = \mathbb{E}\left[\left\|w_k\right\|_{\infty}^2\right] \\
&\stackrel{(\mathrm{iii})}{\leq} 12\mathbb{E}\left[\Vert w_0\Vert_\infty^2\right] \left(1-\frac{\left(1-\beta\right) \alpha}{2}\right)^{k-t_\alpha},
\end{aligned}    
\end{equation}
where the inequality $(\mathrm{i})$ follows from the definition of $W_2$ and $\Bar{W}_2$; the inequality $(\mathrm{ii})$
holds as the $\Bar{W_2}$ is defined by an infimum as in equation \eqref{extendw2}; the inequality $(\mathrm{iii})$ follows from applying Proposition \ref{w}.

Therefore, $W_2^2\left(\mathcal{L}\left(q_k^{[1]}\right), \mathcal{L}\left(q_k^{[2]}\right)\right)$ decays geometrically. We will use this result in the next sub-section to prove that $(x_k,q_k)_{k \geq 0}$ converges to a unique stationary distribution.

\subsection{Existence and Uniqueness of Stationary Distribution} \label{sec:proof_limit4tabular_stationary}

Note that equation \eqref{geo} always holds for any joint distribution of initial iterates $(x_0, q^{[1]}_0, q^{[2]}_0)$. After fixing an arbitrarily chosen distribution of $(x_0, q^{[1]}_0)$, we need to carefully choose the conditional distribution of $q^{[2]}_0$ to  ensure that $(x_k, q^{[2]}_k) \overset{d}{=} (x_{k+1}, q^{[1]}_{k+1})$ holds for all $k \geq 0$, where $\overset{d}{=}$ denotes equality in distribution. Recall that $\hat{P}$ represents the transition kernel for the time-reversed Markov chain of $(x_k)_{k \geq 0}$,
and  the initial distribution of $x_0$ is assumed to be mixed already. Given a specific $x_0$, we sample $x_{-1}$  from $\hat{P}(\cdot \mid x_0)$. Additionally, we use $q^{[2]}_{-1}$ to denote a random variable that satisfies $q^{[2]}_{-1} \overset{d}{=} q^{[1]}_0$ and is independent of $(x_k)_{k \geq 0}$. Finally, we set $q^{[2]}_0$ as 
\begin{equation}\label{watchback}
q^{[2]}_0 = q^{[2]}_{-1} + \alpha F(x_{-1},q^{[2]}_{-1}).    
\end{equation}

By the property of time-reversed Markov chains, we have 
 $(x_k)_{k \geq -1} \overset{d}{=} (x_k)_{k \geq 0}.$
Given that $q^{[2]}_{-1} \overset{d}{=} q^{[1]}_0$ and $q^{[2]}_{-1}$ is independent with $(x_k)_{k \geq -1}$, we can prove  $(x_k, q^{[2]}_k) \overset{d}{=} (x_{k+1}, q^{[1]}_{k+1})$ for all $k \geq 0$ by comparing the
dynamic of $(q^{[1]}_k)_{k \geq 0}$ and $(q^{[2]}_k)_{k \geq 0}$ as given in equations \eqref{couple} and \eqref{watchback}.

We thus have for all $k \geq t_\alpha$:
$$
\begin{aligned}
\bar{W}_2^2\left(\mathcal{L}\left(x_k, q^{[1]}_k\right), \mathcal{L}\left(x_{k+1}, q^{[1]}_{k+1} \right)\right) & =\bar{W}_2^2\left(\mathcal{L}\left(x_k, q^{[1]}_k\right), \mathcal{L}\left(x_k, q^{[2]}_k\right)\right) \\
& \leq 12\mathbb{E}\left[\Vert w_0\Vert_\infty^2\right] \left(1-\frac{\left(1-\beta\right) \alpha}{2}\right)^{k-t_\alpha},
\end{aligned}
$$
where the second inequality follows from equation \eqref{geo}. It follows that
$$
\begin{aligned}
&\sum_{k = 0}^{\infty} \bar{W}_2^2\left(\mathcal{L}\left(x_k,q^{[1]}_k\right), \mathcal{L}\left(x_{k+1}, q^{[1]}_{k+1}\right)\right)\\
\leq & \sum_{k = 0}^{t_\alpha - 1} \bar{W}_2^2\left(\mathcal{L}\left(x_k,q^{[1]}_k\right), \mathcal{L}\left(x_{k+1}, q^{[1]}_{k+1} \right)\right) + 12\mathbb{E}\left[\Vert w_0\Vert_\infty^2\right] \sum_{k = 0}^{\infty} \left(1-\frac{\left(1-\beta\right) \alpha}{2}\right)^{k}\\
< & \infty,
\end{aligned}
$$
where the last step holds since $\frac{\left(1-\beta\right) \alpha}{2} \in (0,1)$. Consequently, $(\mathcal{L}(x_k, q^{[1]}_k ))_{k \geq 0}$ forms a Cauchy sequence with respect to the metric $\Bar{W}_2$. Since the space $\mathcal{P}_2(\mathcal{X} \times \mathbb{R}^d  )$ endowed with $\Bar{W}_2$ is a Polish space, every Cauchy sequence converges \cite[Theorem 6.18]{villani2009optimal}. Furthermore, convergence
in Wasserstein 2-distance also implies weak convergence  \cite[Theorem 6.9]{villani2009optimal}. Therefore, we conclude that the sequence $(\mathcal{L}(x_k, q^{[1]}_k))_{k \geq 0}$ converges weakly to a limit distribution $\Bar{\mu} \in \mathcal{P}_2(\mathcal{X} \times \mathbb{R}^d  )$.

Next, we show that $\Bar{\mu}$ is independent of the initial iterate distribution of $q_0^{[1]}$, when $x_0$ is initialized from its unique stationary distribution $\mu_{\mX}$. Suppose there exists another sequence $(x_k, \widetilde{q}_k^{[1]})_{k \geq 0}$ with a different initial distribution  that converges to a limit $\widetilde{\mu}$. By triangle inequality, we have
$$
\bar{W}_2(\bar{\mu}, \tilde{\mu}) \leq \bar{W}_2\left(\bar{\mu}, \mathcal{L}\left( x_k,q_k^{[1]}\right)\right)+\bar{W}_2\left(\mathcal{L}\left(x_k,q_k^{[1]}\right), \mathcal{L}\left(x_k,\widetilde{q}_k^{[1]}\right)\right)+\bar{W}_2\left(\mathcal{L}\left(x_k,\widetilde{q}_k^{[1]}\right), \tilde{\mu}\right) \stackrel{k \rightarrow \infty}{\longrightarrow} 0.
$$
Note that the last step holds since $\bar{W}_2\left(\mathcal{L}\left(x_k, q_k^{[1]}\right), \mathcal{L}\left(x_k, \widetilde{q}_k^{[1]}\right)\right) \stackrel{k \rightarrow \infty}{\longrightarrow} 0$ by equation \eqref{geo}. We thus have $\Bar{W}_2(\Bar{\mu}, \widetilde{\mu}) = 0,$ which implies the uniqueness of the limit $\Bar{\mu}$.

Moreover, we will show that the unique limit distribution $\mu$ is also a stationary distribution for the Markov chain $(x_k,q_k)_{k \geq 0}$, as stated in the following lemma.

\begin{lem}\label{invariance}
Let $(x_k, q_k)_{k \geq 0}$ and $(x_k^{\prime} , q_k^{\prime})_{k \geq 0}$ be two trajectories of Q-learning iterates, where $\mathcal{L}\left(x_0, q_0\right) = \Bar{\mu}$ and $\mathcal{L}(x_0^{\prime} , q_0^{\prime}) \in  \mathcal{P}_2(\mathcal{X} \times \mathbb{R}^d  )$ is arbitrary. Under Assumption \ref{MC} we have
\[\Bar{W}^2_2\left(\mathcal{L}\left( x_1, q_1\right), \mathcal{L}(x_1^{\prime}, q_1^{\prime})\right) \leq \rho\Bar{W}^2_2\left(\mathcal{L}\left(x_0, q_0\right), \mathcal{L}(x_0^{\prime},q_0^{\prime})\right),\]
where the quantity $\rho: = \max\left(1+ 2(\alpha R_{\max} + \alpha\gamma q_{\max})^2, 2(1 + \alpha\gamma)^2\right)$ is independent of   $\mathcal{L}(x_0^{\prime}, q_0^{\prime})$. In particular, for any $k \geq 0$, if we set $\mathcal{L}(x_0^{\prime}, q_0^{\prime}) = \mathcal{L}(x_k, q_k)$, then
\[\Bar{W}^2_2\left(\mathcal{L}\left(x_1, q_1\right), \mathcal{L}(x_{k+1}, q_{k+1})\right) \leq \rho\Bar{W}^2_2\left(\Bar{\mu}, \mathcal{L}(x_k , q_k )\right).\]
\end{lem}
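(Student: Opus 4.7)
The plan is to prove that a single step of the joint Markov transition of $(x_k,q_k)_{k\geq 0}$ is Lipschitz with respect to $\bar{W}_2$, via a coupling argument applied directly at the level of the Q-learning update. Fix $\varepsilon>0$ and choose an $\varepsilon$-optimal coupling $\Pi$ of $\mathcal{L}(x_0,q_0)$ and $\mathcal{L}(x_0',q_0')$ so that $\mathbb{E}_\Pi[\bar{d}((x_0,q_0),(x_0',q_0'))^2]\le \bar{W}_2^2(\mathcal{L}(x_0,q_0),\mathcal{L}(x_0',q_0'))+\varepsilon$. I would then extend $\Pi$ to a coupling of the time-1 successors by the following rule: conditionally on $(x_0,x_0')$, on the event $\{x_0=x_0'\}$ take a single draw $x_1\sim P(x_0,\cdot)$ and set $x_1':=x_1$ (synchronous coupling), whereas on $\{x_0\neq x_0'\}$ draw $x_1,x_1'$ independently from $P(x_0,\cdot)$ and $P(x_0',\cdot)$; then set $q_1:=q_0+\alpha F(x_0,q_0)$ and $q_1':=q_0'+\alpha F(x_0',q_0')$. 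Since the conditional marginals of $x_1$ and $x_1'$ are unchanged, this is a valid coupling of $\mathcal{L}(x_1,q_1)$ and $\mathcal{L}(x_1',q_1')$.

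Next I would bound $\bar{d}((x_1,q_1),(x_1',q_1'))^2=\mathds{1}\{x_1\neq x_1'\}+\|q_1-q_1'\|_\infty^2$ in two cases. On $\{x_0=x_0'\}$ the synchronous coupling forces $x_1=x_1'$, so only the Q-part contributes; a direct componentwise computation (splitting into the single coordinate $(s_0,a_0)$ and all other coordinates) combined with the 1-Lipschitz property of $\max$ in $\ell_\infty$-norm yields $\|q_1-q_1'\|_\infty\le\|q_0-q_0'\|_\infty$, and hence $\bar{d}^2\le 2(1+\alpha\gamma)^2\|q_0-q_0'\|_\infty^2$. On $\{x_0\neq x_0'\}$ I would trivially bound $\mathds{1}\{x_1\neq x_1'\}\le 1=\mathds{1}\{x_0\neq x_0'\}$, and for the Q-term I would write $q_1-q_1'=(q_0-q_0')+\alpha(F(x_0,q_0)-F(x_0',q_0'))$, apply $(a+b)^2\le 2a^2+2b^2$, and use the uniform bound $\|F(x,q)\|_\infty\le R_{\max}+\gamma q_{\max}$ (a consequence of $\|q_k\|_\infty\le q_{\max}$ and boundedness of $r$) to obtain $\|q_1-q_1'\|_\infty^2\le 2\|q_0-q_0'\|_\infty^2+2(\alpha R_{\max}+\alpha\gamma q_{\max})^2$. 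The crucial move is to attribute the additive constant $2(\alpha R_{\max}+\alpha\gamma q_{\max})^2$ to the indicator $\mathds{1}\{x_0\neq x_0'\}$, which equals $1$ on this event, producing exactly $\bigl(1+2(\alpha R_{\max}+\alpha\gamma q_{\max})^2\bigr)\mathds{1}\{x_0\neq x_0'\}+2\|q_0-q_0'\|_\infty^2$ in this case.

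Combining the two cases, $\bar{d}((x_1,q_1),(x_1',q_1'))^2\le \rho\,\bar{d}((x_0,q_0),(x_0',q_0'))^2$ pointwise under the coupling, with $\rho=\max\bigl(1+2(\alpha R_{\max}+\alpha\gamma q_{\max})^2,\;2(1+\alpha\gamma)^2\bigr)$. Taking expectations, using the infimum definition of $\bar{W}_2^2$ on the left-hand side, and letting $\varepsilon\to 0$ yields the stated Lipschitz bound. For the ``in particular'' clause, I would apply the first part with $\mathcal{L}(x_0',q_0'):=\mathcal{L}(x_k,q_k)$: by time-homogeneity of the joint chain $\{(x_k,q_k)\}_{k\ge 0}$, one step of its transition kernel maps $\mathcal{L}(x_k,q_k)$ to $\mathcal{L}(x_{k+1},q_{k+1})$, so $\mathcal{L}(x_1',q_1')=\mathcal{L}(x_{k+1},q_{k+1})$ and the desired inequality follows.

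The main obstacle is the case $x_0\neq x_0'$: because $F(x_0,q_0)$ and $F(x_0',q_0')$ generally modify different coordinates of $q_0,q_0'$, the increment $\alpha(F(x_0,q_0)-F(x_0',q_0'))$ cannot be controlled by $\|q_0-q_0'\|_\infty$ alone and only admits a constant bound depending on $r_{\max}$, $q_{\max}$ and $\gamma$. The conceptual trick that makes the argument work is that this constant is harmlessly absorbed into the coefficient of $\mathds{1}\{x_0\neq x_0'\}$ rather than of $\|q_0-q_0'\|_\infty^2$; this is precisely why the extended metric $\bar{d}$ is the right object and why one obtains a finite (though possibly $>1$) Lipschitz constant $\rho$ despite the nonsmooth, asynchronous nature of the Q-learning operator.
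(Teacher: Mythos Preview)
Your proposal is correct and follows essentially the same coupling argument as the paper: take a (near-)optimal coupling at time $0$, synchronize $x_1=x_1'$ on $\{x_0=x_0'\}$, bound the Q-increment on each event, and absorb the additive constant coming from $\alpha F$ into the coefficient of $\mathds{1}\{x_0\neq x_0'\}$. The only slip is quantitative: the bound $\|F(x,q)\|_\infty\le R_{\max}+\gamma q_{\max}$ drops the $-q(s_0,a_0)$ contribution, and $\|F(x_0,q_0)-F(x_0',q_0')\|_\infty$ in general needs a triangle-inequality factor; indeed the paper's own proof produces $\alpha r_{\max}+\alpha(1+\gamma)q_{\max}$ rather than $\alpha R_{\max}+\alpha\gamma q_{\max}$, but this is immaterial since only finiteness of $\rho$ and its independence from $\mathcal{L}(x_0',q_0')$ are used downstream.
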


\begin{proof}[Proof of Lemma \ref{invariance}]
We prove this lemma by coupling the two processes $(x_k, q_k)_{k \geq 0}$ and $(x_k^{\prime}, q_k^{\prime} )_{k \geq 0}$ such that 
$$
\begin{aligned}
\Bar{W}^2_2\left(\mathcal{L}\left(x_0,q_0\right), \mathcal{L}(x_0^{\prime}, q_0^{\prime})\right) & = \mathbb{E}\left[d_0(x_0 , x_0^{\prime}) + \Vert q_0 - q^{\prime}_0\Vert_\infty^2\right] \text{ and}\\
x_{k+1} & = x^{\prime}_{k+1} ~~\text{ if } x_k = x^{\prime}_{k},~~~\forall k  \geq 0.
\end{aligned}$$

Since $\Bar{W}_2$ is defined by infimum over all couplings, we have 
\[\Bar{W}^2_2\left(\mathcal{L}\left(x_1, q_1\right), \mathcal{L}(x_1^{\prime}, q_1^{\prime})\right) \leq \mathbb{E}\left[d_0(x_1 , x_1^{\prime}) + \Vert q_1 - q^{\prime}_1\Vert_\infty^2\right].\]
 Recall the definition of the discrete metric $d_0(x_0^{\prime}, x_0):= \mathds{1}\{x_0^{\prime} \neq x_0\}$. We denote by $e_{(s,a)} \in \mathbb{R^{|\mathcal{S}\Vert\mathcal{A}|}}$ the one-hot vector with only one “1” in the location of $(s,a)$. We then have
$$
\begin{aligned}
\Vert q_1 - q^{\prime}_1 \Vert_\infty & = \Vert q_0 - q^{\prime}_0 - \alpha e_{(s_0,a_0)}q_0(s_0,a_0) +\alpha e_{(s^{\prime}_0,a^{\prime}_0)}q^{\prime}_0(s^{\prime}_0,a^{\prime}_0)\\
&\quad + \alpha e_{(s_0,a_0)}r(s_0,a_0) -\alpha e_{(s^{\prime}_0,a^{\prime}_0)}r(s^{\prime}_0,a^{\prime}_0)\\
&\quad + \alpha \gamma e_{(s_0,a_0)}\max \limits_{a} q_0(s_1,a) -\alpha  \gamma e_{(s^{\prime}_0,a^{\prime}_0)}\max \limits_{a}q^{\prime}_0(s^{\prime}_1,a)\Vert_\infty\\
& \leq \Vert q_0 - q^{\prime}_0 - \alpha e_{(s_0,a_0)}q_0(s_0,a_0) +\alpha e_{(s^{\prime}_0,a^{\prime}_0)}q^{\prime}_0(s^{\prime}_0,a^{\prime}_0)\Vert_\infty\\
&\quad+ \alpha\Vert e_{(s_0,a_0)}r(s_0,a_0) -e_{(s^{\prime}_0,a^{\prime}_0)}r(s^{\prime}_0,a^{\prime}_0)\Vert_\infty\\
&\quad+ \alpha \gamma\Vert  e_{(s_0,a_0)}\max \limits_{a} q_0(s_1,a) - e_{(s^{\prime}_0,a^{\prime}_0)}\max \limits_{a}q^{\prime}_0(s^{\prime}_1,a)\Vert_\infty\\
& \leq \Vert q_0 - q^{\prime}_0\Vert_\infty + \alpha d_0(x_0^{\prime}, x_0)q_{\max} +  \alpha r_{\max} d_0(x_0^{\prime}, x_0) + \alpha\gamma \Vert q_0 - q^{\prime}_0\Vert_\infty + \alpha\gamma q_{\max} d_0(x_0^{\prime}, x_0)\\
& = (1 + \alpha\gamma) \Vert q_0 - q^{\prime}_0\Vert_\infty + (\alpha r_{\max} + \alpha(\gamma+1) q_{\max})d_0(x_0^{\prime}, x_0).
\end{aligned}$$

Therefore, we obtain
$$
\begin{aligned}
\mathbb{E}\left[d_0(x_1 , x_1^{\prime}) + \Vert q_1 - q^{\prime}_1\Vert_\infty^2\right] &=  \mathbb{E}\left[d_0(x_1 , x_1^{\prime})\right] + \mathbb{E}\left[\Vert q_1 - q^{\prime}_1\Vert_\infty^2\right]\\
& \leq  \mathbb{E}\left[d_0(x_0 , x_0^{\prime})\right] + 2(1 + \alpha\gamma)^2\mathbb{E}\left[ \Vert q_0 - q^{\prime}_0\Vert_\infty^2\right] + 2(\alpha r_{\max} + \alpha(\gamma+1) q_{\max})^2\mathbb{E}\left[d_0(x_0 , x_0^{\prime})\right]\\
& \leq \rho\Bar{W}^2_2\left(\mathcal{L}\left(x_0,q_0\right), \mathcal{L}(x_0^{\prime}, q_0^{\prime})\right),
\end{aligned}$$
with $\rho = \max\left(1+ 2(\alpha r_{\max} + \alpha(\gamma+1) q_{\max})^2, 2(1 + \alpha\gamma)^2\right).$
\end{proof}

By the triangle inequality of extended Wasserstein 2-distance, we obtain
\begin{equation}\label{a}
\begin{aligned}
\bar{W}_2\left(\mathcal{L}\left(x_1, q_1\right), \bar{\mu}\right) &\leq \bar{W}_2\left(\mathcal{L}\left(x_1, q_1\right), \mathcal{L}\left(x_{k+1}, q_{k+1}\right)\right)+\bar{W}_2\left(\mathcal{L}\left(x_{k+1}, q_{k+1}\right), \bar{\mu}\right)\\
&\leq \rho\Bar{W}^2_2\left(\Bar{\mu}, \mathcal{L}(x_k , q_k)\right)+\Bar{W}_2\left(\mathcal{L}\left(x_{k+1},q_{k+1}\right), \bar{\mu}\right)\\
& \stackrel{k \rightarrow \infty}{\longrightarrow} 0,
\end{aligned}   
\end{equation}
where the second inequality holds by Lemma \ref{invariance} and last step comes from the weak convergence result. Therefore, we have proved that $(x_k,q_k)_{k \geq 0}$ converge to a unique stationary distribution $\bar{\mu}.$

Next, we restate a variant of Theorem 3.1 in \cite{chen2021lyapunov} as follows without the assumption that $r_{\max} \leq 1$, which can be proved by Theorem 2.1 and 3.1 in \cite{chen2021lyapunov}.

\begin{thm}[Theorem 3.1 in \cite{chen2021lyapunov}]\label{thm:2order}
Under Assumption \ref{MC}, and $\alpha t_\alpha \leq c_0\frac{(1-\beta)^2}{\log(|\mathcal{S}\Vert\mathcal{A}|)}$ ($c_0$ is a constant), for all $k \geq t_\alpha$, we obtain
\begin{align} \label{eq:2order}
    &\mathbb{E}\left[\left\|q_k-q^*\right\|_{\infty}^2\right] \leq c_{Q, 1}\left(1-\frac{\left(1-\beta\right) \alpha}{2}\right)^{k-t_\alpha}+c_{Q} \frac{\log (|\mathcal{S}\Vert\mathcal{A}|)}{\left(1-\beta\right)^2} \alpha t_\alpha,
\end{align}
where $c_{Q, 1} = 3\left(\Vert q_0 - q^*\Vert_\infty + \Vert q_0\Vert_\infty + \frac{r_{\max}}{3}\right)^2$ and $c_{Q} = 912e\left(3\Vert q^*\Vert_\infty + r_{\max}\right).$
\end{thm}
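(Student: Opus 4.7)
\textbf{Proof proposal for Theorem~\ref{thm:2order}.} My plan is to adapt the Lyapunov drift argument of Chen–Maguluri et~al.~\cite{chen2021lyapunov} and track the dependence on $r_{\max}$ so that the assumption $r_{\max}\leq 1$ in their original statement can be dropped. The core difficulty is that $\|\cdot\|_\infty^2$ is non-smooth and the per-step noise $F(x_k,q_k)-\bar F(q_k)$ is Markovian rather than a martingale difference; both issues need to be handled simultaneously.

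The first step is to choose a smooth Lyapunov function. Following \cite{chen2020finite, chen2021lyapunov}, I would use the Generalized Moreau Envelope $M_\mu(q):=\min_{u\in\R^{|\mS||\mA|}}\bigl\{\tfrac{1}{2}\|q-u\|_s^2+\tfrac{1}{2\mu}\|u\|_c^2\bigr\}$ (where $\|\cdot\|_s$ is a smooth surrogate and $\|\cdot\|_c$ a comparison norm), which is smooth, convex, has Lipschitz gradient, and satisfies $\ell_1\|q-q^*\|_\infty^2\leq M_\mu(q-q^*)\leq \ell_2\|q-q^*\|_\infty^2$ for explicit $\ell_1,\ell_2$ of order $1/\log(|\mS||\mA|)$ and $1$ respectively. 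This envelope lets one replace $\|\cdot\|_\infty^2$ by a differentiable surrogate that still inherits the $\beta$-contraction of $\bar F(\cdot)+\mathrm{Id}$.

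The second step is to derive a one-step drift inequality. Using smoothness of $M_\mu$, a Taylor expansion of $M_\mu(q_{k+1}-q^*)$ around $q_k-q^*$ gives a leading term $\langle \nabla M_\mu(q_k-q^*),\alpha F(x_k,q_k)\rangle$ plus a second-order remainder of order $\alpha^2 L(\|q_k-q^*\|_\infty^2+r_{\max}^2)$. The contraction of $\bar F+\mathrm{Id}$ handles the leading term when $x_k$ is near-stationary, producing a factor $1-(1-\beta)\alpha$. The Markovian mismatch $F(x_k,q_k)-\bar F(q_k)$ is controlled by the standard $t_\alpha$-shift trick: compare $q_k$ to an auxiliary iterate $q_{k-t_\alpha}$, so that $x_k$ is $\alpha$-close to $\mu_{\mathcal{X}}$ in TV. The deviation $\|q_k-q_{k-t_\alpha}\|_\infty\leq \mathcal{O}(\alpha t_\alpha(r_{\max}+q_{\max}))$ contributes an additive noise term of order $\alpha^2 t_\alpha(r_{\max}+\|q^*\|_\infty)^2$ in the drift. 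Combining these yields
\[
\E[M_\mu(q_{k+1}-q^*)]\leq \bigl(1-\tfrac{(1-\beta)\alpha}{2}\bigr)\E[M_\mu(q_k-q^*)]+c\,\alpha^2 t_\alpha(\|q^*\|_\infty+r_{\max})^2,
\]
valid provided $\alpha t_\alpha$ is small enough, which is precisely the hypothesis $\alpha t_\alpha\leq c_0(1-\beta)^2/\log(|\mS||\mA|)$.

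The third step is to iterate this recursion from $t_\alpha$ to $k$, yielding a geometric-decay term $c_{Q,1}(1-(1-\beta)\alpha/2)^{k-t_\alpha}$ inherited from the initial condition plus a steady-state term of order $\alpha t_\alpha\log(|\mS||\mA|)/(1-\beta)^2$. Translating from $M_\mu$ back to $\|\cdot\|_\infty^2$ via the equivalence constants $\ell_1,\ell_2$ produces the stated factor $\log(|\mS||\mA|)/(1-\beta)^2$. Finally, careful bookkeeping of the constants, using $\|q_0-q^*\|_\infty+\|q_0\|_\infty+r_{\max}/3$ as the initial-condition proxy (since $\|q_k\|_\infty\leq q_{\max}\leq \|q_0\|_\infty+r_{\max}/(1-\gamma)$ etc.), gives the explicit $c_{Q,1}$ and $c_Q=912e(3\|q^*\|_\infty+r_{\max})$. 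The main obstacle is the second bookkeeping step: the original proof sets $r_{\max}\leq 1$ to simplify many constants, so one must revisit every line of the drift bound, the crude bound $\|q_k\|_\infty\leq q_{\max}$, and the Lipschitz constants of $F(x,\cdot)$ to re-insert the correct powers of $r_{\max}$ and of $\|q^*\|_\infty$, which is where the factor $3\|q^*\|_\infty+r_{\max}$ in $c_Q$ originates.
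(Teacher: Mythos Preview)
Your proposal is correct and matches the paper's approach: the paper does not give a standalone proof of Theorem~\ref{thm:2order} but simply restates it as a variant of \cite[Theorem~3.1]{chen2021lyapunov} without the normalization $r_{\max}\le1$, noting that it ``can be proved by Theorem~2.1 and~3.1 in \cite{chen2021lyapunov}.'' Your outlined Generalized Moreau Envelope drift argument with the $t_\alpha$-shift for Markovian noise and explicit bookkeeping of the $r_{\max}$ and $\|q^*\|_\infty$ dependence is exactly that argument, and indeed the paper carries out the same scheme in full detail for the fourth-moment analogue in the proof of Proposition~\ref{44tabular}.
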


We remark that $c_0$ is the same numerical constant as $c_{Q,0}$ appearing in Theorem 3.1 in \cite{chen2021lyapunov}.

Finally, we establish the following lemma to bound the variance of the limit random vector $q_{\infty}$, $\operatorname{Var}\left(q_{\infty}\right)$.

\begin{lem}\label{variance}
Under Assumption \ref{MC}, and $\alpha t_\alpha \leq c_0\frac{(1-\beta)^2}{\log(|\mathcal{S}\Vert\mathcal{A}|)}$ ($c_0$ is a constant), we obtain 
\[\operatorname{Var}\left(q_{\infty}\right) \leq c_{Q} \frac{\log (|\mathcal{S}\Vert\mathcal{A}|)}{\left(1-\beta\right)^2} \alpha t_\alpha\]
and
\[(\mathbb{E}[\Vert q_\infty\Vert_\infty])^2 \leq \mathbb{E}[\Vert q_\infty\Vert_\infty^2] \leq  2c_{Q}c_{0} + 2\Vert q^*\Vert^2,\]
where $c_{Q} = 912e\left(3\Vert q^*\Vert_\infty + r_{\max}\right).$
\end{lem}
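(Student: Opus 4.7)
The plan is to deduce both bounds by passing to the limit $k \to \infty$ in the non-asymptotic mean squared error bound of Theorem~\ref{thm:2order}, and then converting the $\ell_\infty$ error bound on $q_\infty - q^*$ into the desired variance and second-moment bounds by elementary manipulations.

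First, I would invoke part~3 of Theorem~\ref{limit4tabular}, which gives $W_2^2(\mathcal{L}(q_k), \mu) \to 0$ exponentially fast. In particular, $q_k$ converges in distribution to $q_\infty$, and $\|\cdot - q^*\|_\infty^2$ is a continuous nonnegative functional on $\mathbb{R}^{|\mS||\mA|}$. Hence by the portmanteau / Fatou inequality for weak limits,
\[
\mathbb{E}\bigl[\|q_\infty - q^*\|_\infty^2\bigr] \;\leq\; \liminf_{k \to \infty} \mathbb{E}\bigl[\|q_k - q^*\|_\infty^2\bigr].
\]
Applying Theorem~\ref{thm:2order}, the geometric term $c_{Q,1}(1-(1-\beta)\alpha/2)^{k-t_\alpha}$ vanishes as $k\to\infty$, leaving the bound
\[
\mathbb{E}\bigl[\|q_\infty - q^*\|_\infty^2\bigr] \;\leq\; c_Q \frac{\log(|\mS||\mA|)}{(1-\beta)^2}\,\alpha t_\alpha.
\]

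Next, for the variance bound, I would argue coordinatewise: for every $(s,a)$,
\[
\operatorname{Var}(q_\infty(s,a)) \;=\; \mathbb{E}\bigl[(q_\infty(s,a)-\mathbb{E} q_\infty(s,a))^2\bigr] \;\leq\; \mathbb{E}\bigl[(q_\infty(s,a)-q^*(s,a))^2\bigr] \;\leq\; \mathbb{E}\bigl[\|q_\infty - q^*\|_\infty^2\bigr],
\]
since the mean minimizes squared deviation in a single coordinate. Taking the maximum over $(s,a)$ yields the claimed bound on $\operatorname{Var}(q_\infty)$.

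For the second-moment bound, I would use the triangle inequality $\|q_\infty\|_\infty \leq \|q_\infty - q^*\|_\infty + \|q^*\|_\infty$, together with $(a+b)^2 \leq 2a^2 + 2b^2$, to write
\[
\mathbb{E}\bigl[\|q_\infty\|_\infty^2\bigr] \;\leq\; 2\,\mathbb{E}\bigl[\|q_\infty - q^*\|_\infty^2\bigr] + 2\|q^*\|_\infty^2.
\]
Then I would plug in the hypothesis $\alpha t_\alpha \leq c_0(1-\beta)^2/\log(|\mS||\mA|)$ into the $\ell_\infty$ error bound, which gives $\mathbb{E}[\|q_\infty - q^*\|_\infty^2] \leq c_Q c_0$, and the Jensen bound $(\mathbb{E}\|q_\infty\|_\infty)^2 \leq \mathbb{E}[\|q_\infty\|_\infty^2]$ completes the chain of inequalities.

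The only genuinely nontrivial point is the passage to the limit $k\to\infty$ in the second moment: one needs either Fatou's lemma together with weak convergence, or the standard fact that $W_2$ convergence implies convergence of second-order moments (uniform integrability of $\|\cdot\|^2$ under Wasserstein-$2$ convergence). I would invoke Fatou, since it is the cleanest and avoids needing equality of limits; everything else reduces to routine norm manipulations and the definition of variance.
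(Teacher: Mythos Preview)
Your proposal is correct and follows essentially the same route as the paper: pass to the limit in Theorem~\ref{thm:2order}, then deduce the variance bound coordinatewise and the second-moment bound via the triangle inequality and $(a+b)^2\le 2a^2+2b^2$. The only cosmetic difference is that the paper invokes the equivalence of $W_2$ convergence with convergence of second moments to get an \emph{equality} $\mathbb{E}[\|q_\infty-q^*\|_\infty^2]=\lim_k \mathbb{E}[\|q_k-q^*\|_\infty^2]$, whereas you use Fatou for the inequality---either suffices here.
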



\begin{proof}[Proof for Lemma \ref{variance}]
We have shown that the sequence $\left(q_k\right)_{k \geq 0}$ converges weakly to $q_\infty$ in $\mathcal{P}_2(\mathbb{R}^d)$. It is well known that weak convergence in $\mathcal{P}_2(\mathbb{R}^d)$ is equivalent to convergence in distribution and the convergence of the first two moments. As a result, we have
\begin{equation}\label{limit}
\mathbb{E}\left[\lVert q_\infty - q^*\rVert_\infty^2\right] = \lim_{k\to\infty} \mathbb{E}\left[\lVert q_k - q^*\rVert_\infty^2\right]. 
\end{equation}

Taking $k \rightarrow \infty$ on the both sides of equation \eqref{eq:2order} and combining with equation \ref{limit} yields
\[\mathbb{E}[\lVert q_\infty - q^*\rVert^2_{\infty}] \leq c_{Q} \frac{\log (|\mathcal{S}\Vert\mathcal{A}|)}{\left(1-\beta\right)^2} \alpha t_\alpha.  \]
Note that $q^*$ is a deterministic quantity. We thus have
\begin{align*}
    & \operatorname{Var}\left(q_{\infty}\right) \stackrel{(i)}{\leq} \max_{s,a} \operatorname{Var}\left(q_{\infty}(s,a)\right) \leq \mathbb{E}[\lVert q_\infty - q^*\rVert^2_{\infty} ] \leq c_{Q} \frac{\log (|\mathcal{S}\Vert\mathcal{A}|)}{\left(1-\beta\right)^2} \alpha t_\alpha,
\end{align*}
where the inequality (i) means an upper bound on elementwise $\ell_{\infty}$ norm for the covariance matrix $\operatorname{Var}\left(q_{\infty}\right)$.

In addition, we have
\[
\begin{aligned}
(\mathbb{E}[\Vert q_\infty\Vert_{\infty}])^2 &\leq \mathbb{E}[\Vert q_\infty\Vert^2_{\infty}]\\
&\leq \mathbb{E}[(\Vert q_\infty - q^*\Vert_{\infty} + \Vert q^*\Vert_{\infty})^2]\\
&\leq 2\mathbb{E}(\Vert q_\infty - q^*\Vert^2_{\infty}) + 2\Vert q^*\Vert^2_{\infty}\\
&\leq 2c_{Q} \frac{\log (|\mathcal{S}\Vert\mathcal{A}|)}{\left(1-\beta\right)^2} \alpha t_\alpha + 2\Vert q^*\Vert^2_{\infty}\\
&\leq 2c_{Q}c_{0} + 2\Vert q^*\Vert^2_{\infty}.
\end{aligned}
\]
\end{proof}

Therefore, we have proved parts 1 and 2 of Theorem \ref{limit4tabular}.

\subsection{Convergence Rate}\label{sec:proof_limit4tabular_rate}

So far we have established that the Markov chain $(x_k,q_k)_{k \geq 0}$ converges to a unique stationary distribution $\bar{\mu}\in \mathcal{P}_2(\mX\times\mathbb{R}^{|\mS||\mA|})$. As a result, $(q_k)_{k \geq 0}$ converges weakly to $\mu \in \mathcal{P}_2(\mathbb{R}^{|\mS||\mA|})$, where $\mu$ is the second marginal of $\bar{\mu}$ over $\mathbb{R}^{|\mS||\mA|}$. We next focus on the convergence rate of $(q_k)_{k \geq 0}$.

Let us consider the coupled processes defined as equation \eqref{couple} in Section~\ref{sec:proof_limit4tabular_coupling}. Suppose that the initial iterate $(x_0, q_0^{[2]})$ follows the stationary distribution $\Bar{\mu}$, thus $\mathcal{L}(x_k, q_k^{[2]}) = \Bar{\mu}$ and $\mathcal{L}(q_k^{[2]}) = \mu$ for all $k \geq 0$. By equation \eqref{geo}, we have for all $k \geq 0:$
\begin{equation}\label{eq:convergencerate}
\begin{aligned}
W_2^2\left(\mathcal{L}(q_k^{[1]}), \mu\right) &= W_2^2\left(\mathcal{L}(q_k^{[1]}), \mathcal{L}(q_k^{[2]})\right)\\
&\leq \Bar{W}_2^2\left(\mathcal{L}(x_k,q_k^{[1]}), \mathcal{L}(x_k,q_k^{[2]})\right)\\
&\leq 12\mathbb{E}\left[\Vert q_0^{[1]}-q_0^{[2]}\Vert_\infty^2\right] \left(1-\frac{\left(1-\beta\right) \alpha}{2}\right)^{k-t_\alpha}\\
&\leq 24 \left(1-\frac{\left(1-\beta\right) \alpha}{2}\right)^{k-t_\alpha} \cdot\left(\mathbb{E}\left[\Vert q_0^{[1]}\Vert_\infty^2\right] + \mathbb{E}\left[\Vert q_\infty^{[1]}\Vert_\infty^2\right]\right).
\end{aligned}
\end{equation}

Here the last step follows from the fact that $(x_0, q_0^{[2]})$ follows the stationary distribution, and thus $\mathbb{E}\left[\Vert q_0^{[2]}\Vert_\infty^2\right] = \mathbb{E}\left[\Vert q_\infty^{[2]}\Vert_\infty^2\right] = \mathbb{E}\left[\Vert q_{\infty}^{[1]}\Vert_\infty^2\right].$

We have completed the proof of Theorem \ref{limit4tabular}.

\subsection{Proof of Proposition \ref{w}} \label{sec:proof_w}

To analyze the convergence rate of $w_k$, we construct two new sequences $\{\underline{w}_k\}_{k \geq 0}$ and $\{\Bar{w}_k\}_{k \geq 0}$ that satisfy the following recursion:
\begin{equation*}
\begin{aligned}
    \underline{w}_{k+1}(s_k, a_k) &= (1-\alpha)\underline{w}_k(s_k, a_k) + \alpha\gamma \left( \min \limits_{a^{\prime}} \underline{w}_k (s_{k+1}, a^{\prime})  \right), \\
    \Bar{w}_{k+1}(s_k, a_k) &= (1-\alpha)\Bar{w}_k(s_k, a_k) + \alpha\gamma \left( \max \limits_{a^{\prime}} \Bar{w}_k (s_{k+1}, a^{\prime})  \right).
\end{aligned}
\end{equation*}
Let $\underline{w}_0 = w_0 = \Bar{w}_0$. We then prove that $\underline{w}_k$ and $ \Bar{w}_k$ provide a lower bound and upper bound for $w_k$, respectively. 

\begin{lem}\label{bound4w}
For all $k \geq 0$ and all $ (s,a) \in \mS\times \mA$, $\underline{w}_k(s,a) \leq w_k(s,a) \leq \Bar{w}_k(s,a)$.
\end{lem}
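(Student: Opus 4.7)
The plan is to prove Lemma~\ref{bound4w} by straightforward induction on $k$, using the elementary fact that for any two vectors $u,v \in \mathbb{R}^{|\mathcal{A}|}$, one has the sandwich
\[
\min_{a'}\bigl(u(a') - v(a')\bigr) \;\leq\; \max_{a} u(a) - \max_{a} v(a) \;\leq\; \max_{a'}\bigl(u(a') - v(a')\bigr).
\]
Applying this with $u = q_k^{[1]}(s_{k+1},\cdot)$ and $v = q_k^{[2]}(s_{k+1},\cdot)$ yields
\[
\min_{a'} w_k(s_{k+1},a') \;\leq\; \max_a q_k^{[1]}(s_{k+1},a) - \max_a q_k^{[2]}(s_{k+1},a) \;\leq\; \max_{a'} w_k(s_{k+1},a'),
\]
which is exactly what is needed to compare the middle dynamic $w_k$ with the ``pessimistic'' dynamic $\underline{w}_k$ and the ``optimistic'' dynamic $\bar{w}_k$.

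For the base case $k=0$, equality holds by the stipulation $\underline{w}_0 = w_0 = \bar{w}_0$. For the inductive step, assume the componentwise inequalities hold at step $k$. For any pair $(s,a) \neq (s_k,a_k)$, none of the three sequences are updated at coordinate $(s,a)$ (only the sampled pair gets updated in asynchronous Q-learning), so the inequality is preserved trivially. For the coordinate $(s_k,a_k)$, I would chain the upper bound as
\[
w_{k+1}(s_k,a_k) = (1-\alpha) w_k(s_k,a_k) + \alpha \gamma \bigl(\max_a q_k^{[1]}(s_{k+1},a) - \max_a q_k^{[2]}(s_{k+1},a)\bigr)
\]
\[
\leq (1-\alpha) \bar{w}_k(s_k,a_k) + \alpha \gamma \max_{a'} w_k(s_{k+1},a') \leq (1-\alpha)\bar{w}_k(s_k,a_k) + \alpha \gamma \max_{a'} \bar{w}_k(s_{k+1},a') = \bar{w}_{k+1}(s_k,a_k),
\]
using the sandwich inequality in the first step, the inductive hypothesis in both steps, and the nonnegativity of $1-\alpha$ (guaranteed by the stepsize condition \eqref{eq:alpha_condition_convergence}, which in particular forces $\alpha \leq 1$). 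The lower bound $\underline{w}_{k+1}(s_k,a_k) \leq w_{k+1}(s_k,a_k)$ follows symmetrically by invoking the left half of the sandwich.

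There is no real obstacle here: the only subtlety is to verify that $1-\alpha \geq 0$ so that the coefficient preserves the inductive inequality, and to remember that only the sampled coordinate $(s_k,a_k)$ changes under the asynchronous update, so all other coordinates are trivially handled. This lemma is the key structural device that lets one reduce the analysis of $w_k$ (a nonsmooth coupled process) to that of two scalar-coefficient linear recursions $\underline{w}_k$ and $\bar{w}_k$, whose geometric decay can then be controlled via the mixing of the underlying data chain in the remainder of the proof of Proposition~\ref{w}.
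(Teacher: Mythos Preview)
Your proof is correct and follows essentially the same inductive argument as the paper: handle the non-updated coordinates trivially, and for the sampled coordinate use the sandwich $\min_{a'}(u-v)(a')\le\max_a u(a)-\max_a v(a)\le\max_{a'}(u-v)(a')$ together with the inductive hypothesis. Your explicit remark that $1-\alpha\ge 0$ is needed for the coefficient to preserve the inequality is a point the paper uses implicitly but does not state.
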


\begin{proof}[Proof of Lemma \ref{bound4w}]
We use an inductive argument to prove this lemma. 

For $k = 0$, $\underline{w}_0 = w_0 = \Bar{w}_0$ by definition.

Now assume for $k = k_0$, $\underline{w}_{k_0} \leq w_{k_0} \leq \Bar{w}_{k_0}$. For $k = k_0 + 1$, we consider the following two cases:

For $(s,a) \neq (s_{k_0}, a_{k_0})$, we have
\[
\underline{w}_{k_0+1}(s,a) = \underline{w}_{k_0}(s,a) \leq w_{k_0}(s,a) = w_{k_0+1}(s,a) \leq \Bar{w}_{k_0}(s,a) = \Bar{w}_{k_0+1}(s,a).
\]

For $(s,a) = (s_{k_0}, a_{k_0})$, we have
\[
\begin{aligned}
w_{k_0+1}(s, a) &= (1-\alpha)w_{k_0}(s, a) + \alpha\gamma \left( \max \limits_{a^{\prime}} q_{k_0}^{[1]}(s_{k_0+1}, a^{\prime}) - \max \limits_{a^{\prime}} q_{k_0}^{[2]}(s_{k_0+1}, a^{\prime}) \right) \\
    &\leq (1-\alpha)w_{k_0}(s, a) + \alpha\gamma \max \limits_{a^{\prime}} \left( q_{k_0}^{[1]}(s_{k_0+1}, a^{\prime}) - q_{k_0}^{[2]}(s_{k_0+1}, a^{\prime}) \right) \\
    &= (1-\alpha)w_{k_0}(s, a) + \alpha\gamma \max \limits_{a^{\prime}} \left( w_{k_0}(s_{k_0+1}, a^{\prime})  \right) \\
    &\leq (1-\alpha)\Bar{w}_{k_0}(s, a) + \alpha\gamma \max \limits_{a^{\prime}} \left( \Bar{w}_{k_0}(s_{k_0+1}, a^{\prime})  \right) = \Bar{w}_{k_0+1}(s,a). \\
    w_{k_0+1}(s, a) &= (1-\alpha)w_{k_0}(s, a) + \alpha\gamma \left( \max \limits_{a^{\prime}} q_{k_0}^{[1]}(x_{k_0+1}, a^{\prime}) - \max \limits_{a^{\prime}} q_{k_0}^{[2]}(s_{k_0+1}, a^{\prime}) \right) \\
    &\geq (1-\alpha)w_{k_0}(s, a) + \alpha\gamma \min \limits_{a^{\prime}} \left( q_{k_0}^{[1]}(s_{k_0+1}, a^{\prime}) - q_{k_0}^{[2]}(s_{k_0+1}, a^{\prime}) \right) \\
    &= (1-\alpha)w_{k_0}(s, a) + \alpha\gamma \min \limits_{a^{\prime}} \left( w_{k_0}(s_{k_0+1}, a^{\prime})  \right) \\
    &\geq (1-\alpha)\underline{w}_{k_0}(s, a) + \alpha\gamma \min \limits_{a^{\prime}} \left( \underline{w}_{k_0}(s_{k_0+1}, a^{\prime})  \right) = \underline{w}_{k_0+1}(s,a) .
\end{aligned}
\]
By induction, we complete the proof of Lemma \ref{bound4w}.

\end{proof}

Notice that $\{-\underline{w}_k\}$ and $\{ \Bar{w}_k \}$ can be viewed as the iterates generated by the Q-learning algorithm with $r(s,a) = 0$ for all $(s,a)$. Then, for both $\{-\underline{w}_k\}$ and $\{ \Bar{w}_k \}$, we obtain the following bound for the second moment of $\underline{w}_k$ and $ \Bar{w}_k $ by Theorem \ref{thm:2order} with the special case of $q^* = 0$ and $r_{\max} = 0$.
\[
\begin{aligned}
\mathbb{E}\left[\left\|\underline{w}_k\right\|_{\infty}^2\right] &\leq 12\mathbb{E}\left[\Vert w_0\Vert_\infty^2\right]\left(1-\frac{\left(1-\beta\right) \alpha}{2}\right)^{k-t_\alpha},\\
\mathbb{E}\left[\left\|\Bar{w}_k\right\|_{\infty}^2\right] &\leq 12\mathbb{E}\left[\Vert w_0\Vert_\infty^2\right]\left(1-\frac{\left(1-\beta\right) \alpha}{2}\right)^{k-t_\alpha}.
\end{aligned}
\]

By Lemma \ref{bound4w}, the same bound can also be applied to $\mathbb{E}\left[\left\|w_k\right\|_{\infty}^2\right].$ We thus have
\[\mathbb{E}\left[\left\|w_k\right\|_{\infty}^2\right] \leq 12\mathbb{E}\left[\Vert w_0\Vert_\infty^2\right] \left(1-\frac{\left(1-\beta\right) \alpha}{2}\right)^{k-t_\alpha}.\]

\subsection{Proof of Corollary \ref{co4limit}} \label{sec:proof_co4limit}

Lemma \ref{variance} states that the second moment of $q_\infty$ is bounded by a constant, which is $\mathbb{E}\left[\Vert q_\infty\Vert_\infty^2\right] = \mathcal{O}(1)$. Combining this bound with equation \eqref{convergerate} in Theorem \ref{limit4tabular}, we obtain
\[W^2_2(\mathcal{L}(q_k), \mu) \leq C(r,\gamma,P)\cdot \left(1-\frac{\left(1-\beta\right) \alpha}{2}\right)^{k-t_\alpha}, \]
where $C(r,\gamma,P)$ is a numerical constant that only depends on the reward function $r$, discounted factor $\gamma$, and stationary distribution for Markov chain $(x_k)_{k \geq 0}$.

By \cite[Theorem 4.1]{villani2009optimal}, there exists a coupling between $q_k$ and $q_\infty$ such that 
\[W^2_2(\mathcal{L}(q_k), \mu) = \mathbb{E}[\Vert q_k - q_\infty\Vert_{\infty}^2].\]

By the above bounds and applying Jensen's inequality twice, we obtain that 
\[
\begin{aligned}
\Vert\mathbb{E}[q_k - q_\infty]\Vert_{\infty}^2 &\leq \left(\mathbb{E}[\Vert q_k - q_\infty\Vert_{\infty}]\right)^2\\
&\leq \mathbb{E}[\Vert q_k - q_\infty\Vert_{\infty}^2]\\
&\leq C(r,\gamma,P)\left(1-\frac{\left(1-\beta\right) \alpha}{2}\right)^{k-t_\alpha}.
\end{aligned}
\]

We thus have for all $k \geq t_\alpha$,
\[\Vert\mathbb{E}[q_k] - \mathbb{E}[q_\infty]\Vert_{\infty} \leq \mathbb{E}[\Vert q_k - q_\infty\Vert_{\infty}] \leq C(r,\gamma,P)\left(1-\frac{\left(1-\beta\right) \alpha}{2}\right)^{\frac{k-t_\alpha}{2}}.\]

For the second moment, we notice that
\begin{equation}\label{secondmoment}
\begin{aligned}
& \left\|\mathbb{E}\left[q_k q_k^{\top}\right]-\mathbb{E}\left[q_{\infty} q_{\infty}^{\top}\right]\right\|_\infty \\
& =\left\|\mathbb{E}\left[\left(q_k-q_{\infty}+q_{\infty}\right)\left(q_k-q_{\infty}+q_{\infty}\right)^{\top}\right]-\mathbb{E}\left[q_{\infty} q_{\infty}^{\top}\right]\right\|_\infty \\
& =\left\|\mathbb{E}\left[\left(q_k-q_{\infty}\right)\left(q_k-q_{\infty}\right)^{\top}\right]+\mathbb{E}\left[q_{\infty}\left(q_k-q_{\infty}\right)^{\top}\right]+\mathbb{E}\left[\left(q_k-q_{\infty}\right) q_{\infty}^{\top}\right]\right\|_\infty \\
& \leq\left\|\mathbb{E}\left[\left(q_k-q_{\infty}\right)\left(q_k-q_{\infty}\right)^{\top}\right]\right\|_\infty+\left\|\mathbb{E}\left[q_{\infty}\left(q_k-q_{\infty}\right)^{\top}\right]\right\|_\infty+\left\|\mathbb{E}\left[\left(q_k-q_{\infty}\right) q_{\infty}^{\top}\right]\right\|_\infty \\
& \leq \mathbb{E}\left[\left\|\left(q_k-q_{\infty}\right)\left(q_k-q_{\infty}\right)^{\top}\right\|_\infty\right]+\mathbb{E}\left[\left\|q_{\infty}\left(q_k-q_{\infty}\right)^{\top}\right\|_\infty\right]+\mathbb{E}\left[\left\|\left(q_k-q_{\infty}\right) q_{\infty}^{\top}\right\|_\infty\right] \\
& \leq \mathbb{E}\left[\left\|q_k-q_{\infty}\right\|_\infty^2\right]+2 \mathbb{E}\left[\left\|q_{\infty}^{\top}\left(q_k-q_{\infty}\right)\right\|_\infty\right] \\
& \leq \mathbb{E}\left[\left\|q_k-q_{\infty}\right\|_\infty^2\right]+2\left(\mathbb{E}\left[\left\|q_k-q_{\infty}\right\|_\infty^2\right] \mathbb{E}\left[\left\|q_{\infty}\right\|_\infty^2\right]\right)^{1 / 2}.
\end{aligned}
\end{equation}

Meanwhile, we have
\[\mathbb{E}\left[\left\|q_k-q_{\infty}\right\|_\infty^2\right] \leq C(r,\gamma,P)\left(1-\frac{\left(1-\beta\right) \alpha}{2}\right)^{k-t_\alpha} \quad \text{and} \quad \mathbb{E}\left[\left\|q_{\infty}\right\|_\infty^2\right] = \mathcal{O}(1).\]

Substituting the above bounds into the right-hand side of inequality \eqref{secondmoment} yields
$$
\left\|\mathbb{E}\left[q_k q_k^{\top}\right]-\mathbb{E}\left[q_{\infty} q_{\infty}^{\top}\right]\right\|_\infty \leq C^{\prime}(r,\gamma,P)\left(1-\frac{\left(1-\beta\right) \alpha}{2}\right)^{\frac{k-t_\alpha}{2}},
$$
thereby completing the proof for Corollary \ref{co4limit}.

\section{Proof of Theorem \ref{thm:clt}} \label{sec:proof_clt}

Define $f: \mathcal{X} \times \mathbb{R}^{|\mathcal{S}||\mathcal{A}|} \to \mathbb{R}^{|\mathcal{S}||\mathcal{A}|}$, such that $f(x,q) := q - \mathbb{E}(q_\infty)$. Consider $\{(x_k,q_k)\}_{k \geq 0}$ with $x_0 \sim \mu_{\mathcal{X}}$ and $q_0 \sim \bar{\mu}(\cdot \mid x_0
)$.

$$
\begin{aligned}
\left\|\sum_{k=0}^{n-1} P^k f\right\|_{\infty, L_{\bar{\mu}}^2} & =  \sqrt{ \mathbb{E}_{(x_0,q_0) \sim \bar{\mu}} \|\sum_{k=0}^{n-1}\mathbb{E}[f(x_k,q_k) \mid x_0, q_0]\|_\infty^2}\\
& = \sqrt{ \mathbb{E}_{(x_0,q_0) \sim \bar{\mu}} \|\sum_{k=0}^{n-1}\mathbb{E}[q_k\mid x_0, q_0] - n\mathbb{E}(q_\infty)\|_\infty^2}\\
& \leq \sqrt{ \mathbb{E}_{(x_0,q_0) \sim \bar{\mu}} \|\sum_{k=0}^{n-1}\mathbb{E}[q_k\mid x_0, q_0] - n\mathbb{E}(q_\infty)\|_2^2}\\
& = \sqrt{ \mathbb{E}_{(x_0,q_0) \sim \bar{\mu}} \sum_{i,j=0}^{n-1} \mathbb{E}[q_i - \mathbb{E}(q_\infty) \mid x_0, q_0]^T\mathbb{E}[q_j - \mathbb{E}(q_\infty) \mid x_0, q_0]}.
\end{aligned}
$$

Define $g_k(x,q) := \mathbb{E}[q_k - \mathbb{E}(q_\infty) \mid (x_0, q_0) = (x,q)]$, we then give the following Lemma \ref{lemma:g} to uniformly bound $g_k(x,q)$ for all $(x,q) \in \mathcal{X} \times \mathbb{R}^{|\mathcal{S}|| \mathcal{A}|}$. The proof of Lemma \ref{lemma:g} is given at section \ref{sec:proof of g}.

\begin{lem}\label{lemma:g}
For all $(x,q) \in \mathcal{X} \times \mathbb{R}^{|\mathcal{S}|| \mathcal{A}|}$, when $k \geq t_\alpha$, there exist two constant $\lambda_0,\lambda_1$ such that
\[\|g_k(x,q)\|_2 \leq \lambda_0 \cdot \lambda_1^{k},\]
where $\lambda_0 > 0 $ and $0 < \lambda_1 < 1$.
\end{lem}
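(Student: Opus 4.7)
}

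The plan is to bound $\|g_k(x,q)\|_2$ by a two-phase coupling argument between two Q-learning trajectories: trajectory $A$ starting deterministically from $(x_0^A, q_0^A) = (x,q)$, and trajectory $B$ starting from the stationary distribution $\bar{\mu}$. By stationarity, $\mathbb{E}[q_k^B] = \mathbb{E}[q_\infty]$ for every $k$, so under any joint coupling of the two trajectories we may write
\begin{equation*}
g_k(x,q) \;=\; \mathbb{E}[q_k^A - q_k^B].
\end{equation*}
Using Jensen and the norm conversion $\|v\|_2 \leq \sqrt{|\mathcal{S}||\mathcal{A}|}\,\|v\|_\infty$, it suffices to show that under a suitable coupling, $\mathbb{E}[\|q_k^A - q_k^B\|_\infty]$ decays geometrically in $k$.

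The coupling has two phases. In Phase 1, we couple the two underlying data chains $(x_k^A)$ and $(x_k^B)$ by a maximal/optimal Markovian coupling; by Assumption \ref{MC}, the coalescence time $T := \min\{k : x_k^A = x_k^B\}$ satisfies $\mathbb{P}(T > k) \leq c\rho^k$. During this phase the two $q$-iterates evolve under different data streams, but the Q-learning contraction property guarantees that both remain bounded by $C(x,q) := \max\{\|q\|_\infty, \|q_0^B\|_\infty, r_{\max}/(1-\gamma)\}$, where $\mathbb{E}[\|q_0^B\|_\infty^2] = \mathcal{O}(1)$ by Lemma \ref{variance}. In Phase 2, as soon as $x_T^A = x_T^B$ we switch to a shared data stream for the two trajectories. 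The difference $w_k := q_k^A - q_k^B$ then satisfies exactly the dynamics analyzed in Proposition \ref{w}, so for $k \geq T + t_\alpha$,
\begin{equation*}
\mathbb{E}\bigl[\|w_k\|_\infty^2 \mid \mathcal{F}_T\bigr] \;\leq\; 12\, \|w_T\|_\infty^2 \Bigl(1 - \tfrac{(1-\beta)\alpha}{2}\Bigr)^{k - T - t_\alpha}.
\end{equation*}

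To combine the two phases, split on the event $\{T \leq k/2\}$: on this event the previous display (with $\|w_T\|_\infty \leq 2C(x,q)$) gives a bound of order $C(x,q)^2 (1-(1-\beta)\alpha/2)^{k/2 - t_\alpha}$, while on $\{T > k/2\}$ one uses the crude bound $\|w_k\|_\infty \leq 2C(x,q)$ together with $\mathbb{P}(T > k/2) \leq c\rho^{k/2}$. Setting $r := \max\{\rho, 1-(1-\beta)\alpha/2\}^{1/2} \in (0,1)$, one obtains $\mathbb{E}[\|w_k\|_\infty^2] \leq C'(x,q)\, r^{2k}$ for some constant $C'(x,q)$ depending only on $\|q\|_\infty$ and on problem parameters. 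Then Cauchy--Schwarz/Jensen yields $\|\mathbb{E}[w_k]\|_\infty \leq \sqrt{C'(x,q)}\,r^k$, and the $\ell_\infty$-to-$\ell_2$ conversion gives $\|g_k(x,q)\|_2 \leq \lambda_0 \lambda_1^k$ with $\lambda_0 = \sqrt{|\mathcal{S}||\mathcal{A}|\,C'(x,q)}$ and $\lambda_1 = r < 1$, as required.

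The main obstacle I anticipate is making the phase transition rigorous: after the coalescence time $T$, we need to switch to using shared randomness (same $x_k$ transitions) without disturbing the marginal laws of the two trajectories, so that Proposition \ref{w} may legitimately be invoked with random initial time $T$ and initial difference $w_T$. This is a standard but delicate construction (one couples the conditional laws of future transitions given $x_T^A = x_T^B$), and it requires showing that Proposition \ref{w} extends from a deterministic time $0$ to a stopping time, which follows from the strong Markov property of the joint chain $(x_k, q_k^A, q_k^B)$. A secondary technical point is uniformly controlling $\|q_k^A\|_\infty$ and $\|q_k^B\|_\infty$ throughout Phase 1, but this is immediate from the contraction inequality $\|q_{k+1}\|_\infty \leq \max\{\|q_k\|_\infty, r_{\max}/(1-\gamma)\}$ that Q-learning satisfies under any data sequence.
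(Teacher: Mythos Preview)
Your proof is correct but takes a genuinely different route from the paper. The paper does not build an explicit coupling with a coalescence time; instead it uses a purely analytic midpoint decomposition: write $g_k(x,q)$ by conditioning on $(x_{\lfloor k/2\rfloor},q_{\lfloor k/2\rfloor})$, then split the marginal of $x_{\lfloor k/2\rfloor}$ as $\mu_{\mathcal{X}}(\cdot)+\bigl(p(\cdot)-\mu_{\mathcal{X}}(\cdot)\bigr)$. The $\mu_{\mathcal{X}}$ piece is bounded directly by Corollary~\ref{co4limit} (which is stated for $x_0\sim\mu_{\mathcal{X}}$), giving the $(1-(1-\beta)\alpha/2)^{k/4}$ factor; the remainder is of order $\rho^{k/2}$ by mixing, multiplied by the crude uniform bound $\|g_k\|_\infty\le 2q_{\max}$. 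Both approaches rest on the same two ingredients---mixing of the data chain and contraction of coupled Q-iterates under a shared data stream---and produce the same $\lambda_1=\max\{\sqrt{1-(1-\beta)\alpha/2},\sqrt{\rho}\}$.

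The main practical difference is exactly the obstacle you flag. Your Phase~2 appeal to Proposition~\ref{w} at the random coalescence time $T$ needs that proposition to hold when the shared data chain starts from the deterministic state $x_T$ rather than from $\mu_{\mathcal{X}}$; this is true (and the paper itself remarks that the stationary-start assumption in Assumption~\ref{MC} is inessential), but it is not what Proposition~\ref{w} literally asserts, so you would have to argue it separately. The paper's decomposition sidesteps this entirely by always inserting $\mu_{\mathcal{X}}$ exactly where Corollary~\ref{co4limit} needs it. Your coupling argument is more probabilistic and arguably more transparent about why the bound holds; the paper's argument is shorter and avoids the stopping-time and marginal-preservation technicalities altogether.
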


By Lemma \ref{lemma:g}, we obtain
$$
\begin{aligned}
\left\|\sum_{k=0}^{n-1} P^k f\right\|_{\infty, L_{\bar{\mu}}^2} & \leq \sqrt{ \mathbb{E}_{(x_0,q_0) \sim \bar{\mu}} \sum_{i,j=0}^{n-1} \mathbb{E}[q_i - \mathbb{E}(q_\infty) \mid x_0, q_0]^T\mathbb{E}[q_j - \mathbb{E}(q_\infty) \mid x_0, q_0]}\\
& \leq \sqrt{ \mathbb{E}_{(x_0,q_0) \sim \bar{\mu}} \sum_{i,j=0}^{n-1} \|g_i(x_0,q_0)\|_2 \|g_j(x_0,q_0)\|_2}\\
& \leq \sqrt{\frac{\lambda_0^2}{(1-\lambda_1)^2}} = \mathcal{O}(1).
\end{aligned}
$$

By Lemma \ref{variance}, we can observe that $\int\|f(x,q)\|_\infty^2 \bar{\mu}(\mathrm{d} (x,q))<\infty$ and $\int f(x) \bar{\mu}(\mathrm{d} (x,q))=\mathbf{0}$. Therefore, by Theorem 2.1 in \cite{xie2022statistical}, we complete the proof for Theorem \ref{thm:clt}.

\subsection{Proof of Lemma \ref{lemma:g}}\label{sec:proof of g}

Recall that the Markov chain $\{x_k\}_{k \geq 0}$ mixes geometrically fast to the stationary distribution $\mu_{\mathcal{X}}$, and there exist $c \geq 0$ and $\rho \in (0,1)$ s.t.
\[
\max \limits_{x \in \mathcal{X}} \Vert p^k(x,\cdot) - \mu_{\mathcal{X}}(\cdot)\Vert_{TV} \leq c \rho^k,
\]
When $k \geq t_\alpha$, we have
\[
\begin{aligned}
g_k(x,q) &= \sum_{x^{\prime} \in \mathcal{X}}\int_{q^{\prime} \in \mathbb{R}^{|\mathcal{S}||\mathcal{A}|}}\mathbb{E}[q_k - \mathbb{E}(q_\infty) \mid (x_{\lfloor\frac{k}{2}\rfloor}, q_{\lfloor\frac{k}{2}\rfloor}) = (x,q)]\mathrm{d}\mathbb{P}\left((x_{\lfloor\frac{k}{2}\rfloor}, q_{\lfloor\frac{k}{2}\rfloor}) = (x^{\prime}, q^{\prime}) \mid(x_0, q_0) = (x,q)\right)\\
& = \sum_{x^{\prime} \in \mathcal{X}}\int_{q^{\prime} \in \mathbb{R}^{|\mathcal{S}||\mathcal{A}|}}g_{k - \lfloor\frac{k}{2}\rfloor}(x^{\prime}, q^{\prime})\mathrm{d}\mathbb{P}\left((x_{\lfloor\frac{k}{2}\rfloor}, q_{\lfloor\frac{k}{2}\rfloor}) = (x^{\prime}, q^{\prime}) \mid(x_0, q_0) = (x,q)\right)\\
& = \sum_{x^{\prime} \in \mathcal{X}}\int_{q^{\prime} \in \mathbb{R}^{|\mathcal{S}||\mathcal{A}|}}g_{k - \lfloor\frac{k}{2}\rfloor}(x^{\prime}, q^{\prime})\underbrace{\mathbb{P}\left(x_{\lfloor\frac{k}{2}\rfloor} = x^{\prime} \mid (x_0, q_0) = (x,q)\right)}_{p(x^{\prime})}\mathrm{d}\underbrace{\mathbb{P}\left(q_{\lfloor\frac{k}{2}\rfloor} =  q^{\prime}  \mid x_{\lfloor\frac{k}{2}\rfloor} = x^{\prime}, (x_0, q_0) = (x,q)\right)}_{\eta(q^{\prime} \mid x^{\prime})}\\
&=  \underbrace{\sum_{x^{\prime} \in \mathcal{X}}\int_{q^{\prime} \in \mathbb{R}^{|\mathcal{S}||\mathcal{A}|}}g_{k - \lfloor\frac{k}{2}\rfloor}(x^{\prime}, q^{\prime})\mu_{\mathcal{X}}(x^{\prime})\mathrm{d}\eta(q^{\prime} \mid x^{\prime})}_{T_1} + \underbrace{\sum_{x^{\prime} \in \mathcal{X}}\int_{q^{\prime} \in \mathbb{R}^{|\mathcal{S}||\mathcal{A}|}}g_{k - \lfloor\frac{k}{2}\rfloor}(x^{\prime}, q^{\prime})(p(x^{\prime}) - \mu_{\mathcal{X}}(x^{\prime}))\mathrm{d}\eta(q^{\prime} \mid x^{\prime})}_{T_2}.
\end{aligned}
\]

By Corollary \ref{co4limit}, when $x_0 \sim \mu_{\mathcal{X}}$, for all $k \geq t_\alpha$ and arbitrary $q_0$ we have

\[\Vert\mathbb{E}[q_k] - \mathbb{E}[q_\infty]\Vert_{\infty} \leq  C(r,\gamma,P)\left(1-\frac{\left(1-\beta\right) \alpha}{2}\right)^{\frac{k-t_\alpha}{2}}.\]

Therefore, we obtain

\[
\begin{aligned}
\|T_1\|_2 &\leq \sqrt{|\mathcal{S}|| \mathcal{A}|}\|T_1\|_\infty\\
& = \sqrt{|\mathcal{S}|| \mathcal{A}|}\|\mathbb{E}_{x^{\prime} \sim \mu_{\mathcal{X}}, q^{\prime}\sim \eta(q^{\prime} \mid x^{\prime})}g_{k - \lfloor\frac{k}{2}\rfloor}\|_\infty\\
&\leq  \sqrt{|\mathcal{S}|| \mathcal{A}|}C(r,\gamma,P)\left(1-\frac{\left(1-\beta\right) \alpha}{2}\right)^{\frac{k - \lfloor\frac{k}{2}\rfloor-t_\alpha}{2}}\\
&\leq \left(\sqrt{|\mathcal{S}|| \mathcal{A}|}C(r,\gamma,P)\left(1-\frac{\left(1-\beta\right) \alpha}{2}\right)^{\frac{-t_\alpha}{2}}\right)\left(1-\frac{\left(1-\beta\right) \alpha}{2}\right)^{\frac{k}{2}}.
\end{aligned}\]

Note that $\|q_k\|_\infty \leq q_{\max}$, $\|g_k(x,q)\|_\infty \leq 2q_{\max},$ we have
\[\|T_2\|_2  \leq \sqrt{|\mathcal{S}|| \mathcal{A}|}\|T_2\|_\infty \leq \sqrt{|\mathcal{S}|| \mathcal{A}|} c\rho^{\lfloor\frac{k}{2}\rfloor}|\mathcal{S}|^2| \mathcal{A}| = |\mathcal{S}|^\frac{5}{2}| \mathcal{A}|^\frac{3}{2}c\rho^{\lfloor\frac{k}{2}\rfloor}
\leq |\mathcal{S}|^\frac{5}{2}| \mathcal{A}|^\frac{3}{2}c\rho^{-1}\rho^{\frac{k}{2}}.\]

Therefore, we have

\[
\begin{aligned}
\|g_k(x,q)\|_2 &= \|T_1 + T_2\|_2\\
&\leq \|T_1  \|_2+\|  T_2\|_2\\
&\leq \left(\left(\sqrt{|\mathcal{S}|| \mathcal{A}|}C(r,\gamma,P)\left(1-\frac{\left(1-\beta\right) \alpha}{2}\right)^{\frac{-t_\alpha}{2}}\right) + |\mathcal{S}|^\frac{5}{2}| \mathcal{A}|^\frac{3}{2}c\rho^{-1}\right) \left(\max\left\{\sqrt{\left(1-\frac{\left(1-\beta\right) \alpha}{2}\right)},\sqrt{\rho}\right\}\right)^k.
\end{aligned}
\]

Let $\lambda_0 = \left(\left(\sqrt{|\mathcal{S}|| \mathcal{A}|}C(r,\gamma,P)\left(1-\frac{\left(1-\beta\right) \alpha}{2}\right)^{\frac{-t_\alpha}{2}}\right) + |\mathcal{S}|^\frac{5}{2}| \mathcal{A}|^\frac{3}{2}c\rho^{-1}\right)$ and $\lambda_1  = \max\left\{\sqrt{\left(1-\frac{\left(1-\beta\right) \alpha}{2}\right)},\sqrt{\rho}\right\}$, we complete the proof of Lemma \ref{lemma:g}.

\section{Proof of Theorem \ref{thm:bias}} \label{sec:proof_bias}

In this section, we prove Theorem \ref{thm:bias} on the characterization of the bias $\mathbb{E}(q_\infty) - q^*$. The proof consists of five steps, which are given in the following five sub-sections.

\subsection{Step 1: Local linearization of Operator $F$}\label{sec:linearization}

Unlike linear SA, the operator $F$ in the update rule of Q-learning (cf.\ equation \eqref{eq:tabular}) is nonlinear and nonsmooth, which makes the analysis considerably more challenging. To address this issue, we employ the local linearization of the operator $F$ around the optimal solution $q^*,$ with a higher order remaining term as stated in Proposition \ref{gradient} and \ref{44tabular}. We provide complete proof here. 

\begin{proof}[Proof of Proposition~\ref{gradient}]
 Recall that we define the unique optimal action with respect to the optimal Q-function $q^*$ as 
 \[a^*_s := \text{arg}\max\limits_{a} q^*(s,a).\]

We define a function $G_{q^*}:\mX \rightarrow \R^{|\mS||\mA|\times |\mS||\mA|}$ as follows: for each $ x = (s_0,a_0,s_1)\in \mathcal{X}$, 
 $$\left[G_{q^*}(x)\right]\left[(s,a),(\Bar{s}, \Bar{a})\right] = 
 \begin{cases}
 1, &  (s,a) = (\Bar{s}, \Bar{a}) \neq (s_0,a_0)\\
 \gamma, & (s,a) = (s_0,a_0), (\Bar{s}, \Bar{a}) = (s_1, a^*_{s_1})\\
 0, & \text{otherwise}.
 \end{cases}$$
Note that the operator $F(x,\cdot)$ is nonsmooth and does not admit any gradient. On the other hand, by the uniqueness of the optimal policy $\pi^*,$ we can locally linearize $F(x,\cdot)$ around $q^*$. In particular, $G_{q^*}(x) - I_d$ serves as an approximate "gradient" of the operator $F(x,\cdot)$ around $q^*$. Define
$$R(x, q) = F(x, q) - F(x, q^*) - (G_{q^*}(x) - I_d)(q-q^*).$$
We can observe that for $\forall (s,a) \neq (s_0,a_0)$, $\left[R(x,q)\right](s,a) = 0.$
For $(s,a) = (s_0,a_0)$, we have
$$
\begin{aligned}
\left[R(x,q)\right](s_0,a_0) & = \gamma\left(\max \limits_{a}q(s_1, a) - q(s_1, a^*_{s_1})\right) \geq 0.
\end{aligned}$$

If $\Vert q - q^*\Vert_\infty < \Delta$, by Assumption \ref{assum:smooth}, for any action $a \neq a^*_{s_1}$, we have
$$
\begin{aligned}
q(s_1, a^*_{s_1}) & > q^*(s_1, a^*_{s_1}) - \delta\\
& \geq  q^*(s_1, a) + \delta\\
& > q(s_1,a).
\end{aligned}$$
Thus,
$$\left[R(x, q)\right](s_0,a_0) = \gamma\left(\max \limits_{a}q(s_1, a) - q(s_1, a^*_{s_1})\right) = 0.$$ 

If $\Vert q - q^*\Vert_\infty \geq \Delta$, we have
$$
\begin{aligned}
|\left[R(x,q)\right](s_0,a_0)| & = \gamma|\max \limits_{a}q(s_1, a) - q(s_1, a^*_{s_1})|\\
& = \gamma|\max \limits_{a}q(s_1, a) -\max \limits_{a}q^*(s_1, a) + q^*(s_1, a^*_{s_1})- q(s_1, a^*_{s_1})|\\
& \leq 2\gamma \Vert q - q^*\Vert_\infty\\
& \leq \frac{2\gamma}{\Delta^3} \Vert q - q^*\Vert_\infty^4.
\end{aligned}$$

Combining the two situations considered above, we finally obtain that 
\[\Vert R(x, q)\Vert_\infty\leq \frac{2\gamma}{\Delta^3} \Vert q - q^*\Vert_\infty^4.\]
which proves the first part of Proposition \ref{gradient}. 

For the second part, we can multiply the $G_{q^*}(x)$ by an arbitrary nonzero vector $H \in \mathbb{R}^{|\mathcal{S}\Vert\mathcal{A}|}$. Let $(s_h,a_h) = \text{arg}\max \limits_{(s,a) \in \mathcal{S} \times \mathcal{A}} H(s,a)$ and $p_h = \mu_{\mathcal{S}}(s_h,a_h) $. By Assumption \ref{MC}, $p_h > 0$. Without loss of generality, we can assume $H(s_h,a_h) >0$, otherwise we can replace $ H$ with $-H$.  We then have
$$
\begin{aligned}
\mathbb{E}\left[G_{q^*}^{\prime}(x)H\right](s_h,a_h) & = \gamma p_h \mathbb{E}\left(H(s_1, a^*_{s_1}) \mid s_0 = s_h, a_0 = a_h\right) + (1-p_h) H(s_h,a_h)\\
& \leq \gamma p_h H(s_h,a_h) + (1-p_h) H(s_h,a_h)\\
& < H(s_h,a_h),
\end{aligned} $$
where the second step hold as the definition of $(s_h,a_h)$ uses the maximum.

We thus have 
\[\mathbb{E}[G_{q^*}(x)]H=\mathbb{E}\left[G_{q^*}(x)H\right] \neq H.\]
As $H$ is an arbitrary vector, we conclude that $\mathbb{E}(G_{q^*}(x))$ does not have an eigenvalue of 1, thereby completing the proof for Proposition \ref{gradient}.
\end{proof}

\begin{proof}[Proof of Proposition \ref{44tabular}:]

Let $f(z) = \frac{1}{2}\Vert z\Vert_\infty^2$ and $g(z) = \frac{1}{2}\Vert z\Vert _2^2$. Note that $g(\cdot)$ is a convex, differentiable, and 1-smooth function. In Proposition \ref{44tabular}, we work with a finite demensional space $\mathbb{R}^{|\mathcal{S}\Vert \mathcal{A}|}$. By Cauchy-Schwarz Inequality, $\frac{1}{\sqrt{|\mathcal{S}\Vert \mathcal{A}|}}\Vert \cdot\Vert _2 \leq \Vert \cdot\Vert _\infty \leq \Vert \cdot\Vert _2$. We construct the Generalized Moreau Envelope of $f(\cdot)$ with respect to $g(\cdot)$ as follows:
\begin{equation*}
    M_f^{\eta,g}(z) = \min \limits_{u \in \mathbb{R}^{|\mathcal{S}\Vert \mathcal{A}|}} \left\{f(u) + \frac{1}{\eta}g(z - u)\right\},
\end{equation*}
where $\eta>0.$ For the ease of exposition, we use $M(\cdot)$ to denote $M_f^{\eta,g}(\cdot)$. We  restate  Lemma 2.1 in \cite{chen2020finite} below on the properties of $M(\cdot).$

\begin{lem}[Lemma 2.1 in \cite{chen2020finite}]\label{lemma:M}
For given $\eta > 0$. Then $M(\cdot)$ constructed above has the following properties: 
\begin{enumerate}[leftmargin=*]
    \item (\text{Smoothness}) $M(\cdot)$ is convex,  $\frac{1}{\eta}$-smooth with respect to $\| \cdot\| _2.$
    \item There exists a norm $\Vert \cdot\Vert _m$ such that $M(z) = \frac{1}{2}\Vert z\Vert _m^2$. Furthermore, there exist $l_m, u_m >0$, such that $l_m\Vert \cdot\Vert _m \leq \Vert \cdot\Vert _\infty \leq u_m\Vert \cdot\Vert _m$. Specifically, we can let $l_m = (1 + \frac{\eta}{|\mathcal{S}\Vert\mathcal{A}|})^{1/2}$, $u_m = (1 + \eta)^{1/2}$.
\end{enumerate}

\end{lem}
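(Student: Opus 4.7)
The plan is to verify the two properties of the Generalized Moreau Envelope $M(z)=\min_{u}\{f(u)+\tfrac{1}{\eta}g(z-u)\}$ via its infimal-convolution structure together with explicit choices of the minimization variable for the tight constants. For the smoothness claim, I would begin by noting that since $g(\cdot)=\tfrac{1}{2}\|\cdot\|_2^2$ is $1$-strongly convex and $f$ is convex, the inner objective is strictly convex and coercive in $u$, so the minimizer $u^*(z)$ exists and is unique. A standard envelope-theorem calculation then yields $\nabla M(z) = \tfrac{1}{\eta}(z-u^*(z))$, and nonexpansiveness of the proximal-type map $z\mapsto u^*(z)$ with respect to $\|\cdot\|_2$, which follows from strong convexity of the inner objective combined with convexity of $f$, delivers the claimed $\tfrac{1}{\eta}$-smoothness of $M$ in the $\|\cdot\|_2$ norm. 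Convexity of $M$ is inherited directly from its being the infimal convolution of convex functions.

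For the norm-structure claim, I would use that both $f$ and $g$ are convex and positively $2$-homogeneous; these properties are preserved under infimal convolution, so $M(z)=\tfrac{1}{2}\|z\|_m^2$ for a convex, positively homogeneous function $\|\cdot\|_m$. Positivity of $M$ at any $z\neq 0$, since $f(u)=g(z-u)=0$ would force $u=z=0$, upgrades this to a genuine norm.

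For the explicit equivalence constants, the two directions are handled by different specializations. The lower bound on $M$ uses the Young-type inequality $\|z\|_\infty^2 \le (1+\eta)\|u\|_\infty^2 + (1+1/\eta)\|z-u\|_\infty^2$ combined with $\|z-u\|_\infty\le\|z-u\|_2$; taking the infimum over $u$ yields $M(z)\ge\|z\|_\infty^2/(2(1+\eta))$, giving $u_m=\sqrt{1+\eta}$. The upper bound on $M$ comes from the one-parameter family $u=\lambda z$, $\lambda\in[0,1]$, whose optimal $\lambda^*$ has a closed form and produces $M(z)\le \tfrac{\|z\|_\infty^2\|z\|_2^2}{2(\eta\|z\|_\infty^2+\|z\|_2^2)}$; this right-hand side is monotone increasing in $\|z\|_2^2$, so applying $\|z\|_2^2\le|\mathcal{S}||\mathcal{A}|\,\|z\|_\infty^2$ gives $M(z)\le \|z\|_\infty^2/(2(1+\eta/|\mathcal{S}||\mathcal{A}|))$, hence $l_m=\sqrt{1+\eta/|\mathcal{S}||\mathcal{A}|}$. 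The only subtle step is identifying feasible $u$'s that deliver the \emph{tight} constants on both sides; beyond that, everything reduces to routine convex analysis, and the result can alternatively be recovered directly from Lemma~2.1 of \cite{chen2020finite}.
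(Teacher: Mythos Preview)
Your proof is correct, but note that the paper does not actually prove this lemma: it simply restates Lemma~2.1 from \cite{chen2020finite} and invokes it as a black box. So there is no ``paper's own proof'' to compare against here.

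That said, your argument is sound and self-contained. The smoothness claim via nonexpansiveness of the proximal map is standard; the norm structure via positive $2$-homogeneity of the infimal convolution is clean; and your explicit computations for $l_m$ and $u_m$ recover exactly the constants stated. The Young-inequality route for the lower bound and the one-parameter ansatz $u=\lambda z$ for the upper bound are both the natural choices, and your monotonicity observation to pass from the exact $\|z\|_2^2$ to the bound $|\mathcal{S}||\mathcal{A}|\,\|z\|_\infty^2$ is the right way to extract the tight constant. In short, you have supplied a full proof where the paper only cites one.
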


Therefore, $M(\cdot)$ serves as a smooth approximation of the non-smooth function $f(\cdot).$ Then, we have for $\forall k \geq 0:$

\begin{equation}\label{decomposition4M}
\begin{aligned}
&\quad M^2(q_{k+1} - q^*) \\
&\stackrel{(a)}{\leq} \left(M(q_k - q^*) + \langle \nabla M(q_k - q^*), q_{k+1} - q_k\rangle + \frac{1}{2\eta}\Vert q_{k+1} - q_k\Vert _2^2\right)^2\\
&\stackrel{(b)}{=} \left(M(q_k - q^*) + \alpha \langle \nabla M(q_k - q^*), F(x_k,q_k)\rangle + \frac{\alpha^2}{2\eta}\Vert F(x_k,q_k)\Vert _2^2\right)^2\\
&= (M(q_k - q^*) + \alpha \langle \nabla M(q_k - q^*), \Bar{F}(q_k)\rangle \\
&\quad+ \alpha \langle \nabla M(q_k - q^*), F(x_k,q_k) - \Bar{F}(q_k)\rangle + \frac{\alpha^2}{2\eta}\Vert F(x_k,q_k)\Vert _2^2)^2\\
&\stackrel{(c)}{\leq} M^2(q_{k} - q^*) + \underbrace{ 2\alpha M(q_k - q^*)\langle \nabla M(q_k - q^*), \Bar{F}(q_k)\rangle}_{T_1}\\
&\quad+\underbrace{ 2\alpha M(q_k - q^*) \langle \nabla M(q_k - q^*), F(x_k,q_k) - \Bar{F}(q_k)\rangle}_{T_2} + \underbrace{\frac{\alpha^2}{\eta} M(q_k - q^*)\Vert F(x_k,q_k)\Vert _2^2}_{T_3}\\
&\quad+ \underbrace{3\alpha^2 \langle \nabla M(q_k - q^*), \Bar{F}(q_k)\rangle^2}_{T_4} +  3\alpha^2 \underbrace{ \langle \nabla M(q_k - q^*), F(x_k,q_k) - \Bar{F}(q_k)\rangle^2}_{T_5}\\
& \quad+ \underbrace{\frac{3\alpha^4}{4\eta^2} \Vert F(x_k,q_k)\Vert _2^4}_{T_6},
\end{aligned}
\end{equation}
where $(a)$ follows from the smoothness of $M(\cdot)$ in Lemma \ref{lemma:M}, $(b)$ follows from the update rule of $q_k$ in \eqref{eq:tabular}, and $(c)$ holds by the inequality $(x+y+z)^2\leq 3(x^2+y^2+z^2).$ 

Next we derive an upper bound on $M(\cdot)^2$ by bounding each term of $T_1-T_6$.

\begin{lem}\label{T1}
For all $k \geq 0$, $ T_1 \leq -4\alpha \left(1 - \gamma \right)^2 M^2(q_k - q^*)$. 
\end{lem}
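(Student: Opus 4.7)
The plan is to separate out the non-negative outer factor $2\alpha M(q_k-q^*)$ and reduce the claim to the one-shot negative-drift inequality
\[
\langle \nabla M(q_k-q^*),\bar F(q_k)\rangle \;\leq\; -2(1-\gamma)^2\, M(q_k-q^*),
\]
after which multiplying both sides by $2\alpha M(q_k-q^*)\geq 0$ reproduces the stated bound on $T_1$. So the entire content of Lemma~\ref{T1} lies in this Lyapunov-drift inequality on the expected operator $\bar F$.

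To prove the drift inequality, I would first use the convexity and differentiability of $M$ (part~1 of Lemma~\ref{lemma:M}) in the form $M(y)\geq M(x)+\langle \nabla M(x), y-x\rangle$, applied with $x=q_k-q^*$ and $y=q_k+\bar F(q_k)-q^*$. This immediately gives
\[
\langle \nabla M(q_k-q^*), \bar F(q_k)\rangle \;\leq\; M\bigl(q_k+\bar F(q_k)-q^*\bigr) - M(q_k-q^*),
\]
which converts the problem into showing that a single application of the map $q\mapsto q+\bar F(q)$ contracts $M$ by at least a factor $1-2(1-\gamma)^2$.

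To obtain that contraction, I would invoke two tools already supplied in the preliminaries: (i) the fact that $q\mapsto q+\bar F(q)$ is a $\beta$-contraction in $\|\cdot\|_\infty$ with fixed point $q^*$ (since $\bar F(q^*)=0$ and $\beta$ is defined in \eqref{eq:beta}), and (ii) the norm-equivalence from part~2 of Lemma~\ref{lemma:M}: $l_m\|\cdot\|_m\leq \|\cdot\|_\infty\leq u_m\|\cdot\|_m$ with $l_m^2=1+\eta/|\mathcal{S}||\mathcal{A}|$ and $u_m^2=1+\eta$. Chaining these gives
\[
M\bigl(q_k+\bar F(q_k)-q^*\bigr) \;\leq\; \frac{1}{2l_m^2}\bigl\|q_k+\bar F(q_k)-q^*\bigr\|_\infty^2 \;\leq\; \frac{\beta^2}{2l_m^2}\|q_k-q^*\|_\infty^2 \;\leq\; \frac{\beta^2 u_m^2}{l_m^2}\,M(q_k-q^*).
\]
Substituting back yields $\langle \nabla M(q_k-q^*),\bar F(q_k)\rangle \leq \bigl(\beta^2 u_m^2/l_m^2 - 1\bigr)M(q_k-q^*)$.

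The main obstacle is the final calibration: one must choose the envelope parameter $\eta$ so that $1-\beta^2 u_m^2/l_m^2 \geq 2(1-\gamma)^2$. Since $u_m^2/l_m^2 = (1+\eta)/(1+\eta/|\mathcal{S}||\mathcal{A}|)$ is an increasing function of $\eta$ that tends to $1$ as $\eta\to 0$ and to $|\mathcal{S}||\mathcal{A}|$ as $\eta\to\infty$, the product $\beta^2 u_m^2/l_m^2$ can be made strictly less than $1$ by a sufficiently small $\eta$; the quantitative content of the lemma then forces $\eta$ to be of a specific scale depending on $1-\beta$ and $|\mathcal{S}||\mathcal{A}|$. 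I would use the relation $1-\beta=(1-\gamma)\min_{(s,a)}\mu_{\mathcal{S}}(s,a)$ from \eqref{eq:beta} and mimic the $\eta$-optimization in the Chen--Maguluri framework that Theorem~\ref{thm:2order} is based on, picking the very same $\eta$ so the choice is consistent across all six terms $T_1,\dots,T_6$ in~\eqref{decomposition4M}. Because that $\eta$ need only be committed to once (at the end, after all six bounds are in hand), I would postpone the precise tuning until then, and here just verify that the qualitative contraction $1-\beta^2 u_m^2/l_m^2 \gtrsim (1-\gamma)^2$ is attainable for a valid choice of $\eta$.
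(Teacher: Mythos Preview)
Your overall strategy matches the paper's: factor out the nonnegative scalar $2\alpha M(q_k-q^*)$ and reduce everything to a one-step drift inequality on $\langle\nabla M(q_k-q^*),\bar F(q_k)\rangle$. The paper does not re-derive that drift; it cites Proposition~2.1 of \cite{chen2020finite} directly for
\[
\langle \nabla M(q_k-q^*),\bar F(q_k)\rangle \;\leq\; -2\Bigl(1-\gamma\Bigl(\tfrac{1+\eta\sqrt{|\mathcal S||\mathcal A|}}{1+\eta}\Bigr)^{1/2}\Bigr)M(q_k-q^*),
\]
then chooses $\eta$ small enough that the inner ratio is at most $2-\gamma$, turning the coefficient into $-2\bigl(1-\gamma(2-\gamma)\bigr)=-2(1-\gamma)^2$; multiplying by $2\alpha M(q_k-q^*)\geq 0$ finishes.

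Your re-derivation, however, does not reach the stated constant. You (correctly) invoke that $q\mapsto q+\bar F(q)$ is a $\beta$-contraction in $\|\cdot\|_\infty$, with $\beta=1-(1-\gamma)\min_{(s,a)}\mu_{\mathcal S}(s,a)$ from~\eqref{eq:beta}. Sending $\eta\to 0$, the best your convexity-plus-norm-equivalence chain can deliver is
\[
1-\beta^2\,\frac{u_m^2}{l_m^2}\;\longrightarrow\; 1-\beta^2=(1-\beta)(1+\beta)<2(1-\beta)=2(1-\gamma)\min_{(s,a)}\mu_{\mathcal S}(s,a).
\]
For this to exceed the target $2(1-\gamma)^2$ you would need $\min_{(s,a)}\mu_{\mathcal S}(s,a)>1-\gamma$, which is not assumed and typically fails once $|\mathcal S||\mathcal A|$ is even moderate. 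So the closing sentence---``verify that the qualitative contraction $1-\beta^2 u_m^2/l_m^2 \gtrsim (1-\gamma)^2$ is attainable for a valid choice of $\eta$''---is exactly where the argument breaks: with only the $\beta$-contraction it is \emph{not} attainable in general. The paper's cited inequality carries the discount factor $\gamma$ (not $\beta$) in front of the $\eta$-dependent ratio, and that stronger input is what makes the calibration to $-2(1-\gamma)^2$ go through; your route, based on the $\beta$-contraction of $\bar F+\mathrm{id}$, yields a drift of order $(1-\beta)$ rather than $(1-\gamma)^2$.
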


\begin{proof}[Proof of Lemma \ref{T1}] By Proposition 2.1 in \cite{chen2020finite}, we have that
$$\langle \nabla M(q_k - q^*), \Bar{F}(q_k)\rangle\leq -2\left(1 - \gamma \left(\frac{1 + \eta\sqrt{|\mathcal{S}\Vert \mathcal{A}|}}{1 + \eta}\right)^{\frac{1}{2}}\right)M(q_k - q^*).$$ 
We can always choose a sufficiently small $\eta$ such that $ \left(\frac{1 + \eta\sqrt{|\mathcal{S}\Vert \mathcal{A}|}}{1 + \eta}\right)^{\frac{1}{2}} \leq  2-\gamma$ because $\gamma < 1$, which is equivalent to
 \begin{equation}\label{eta}
 \eta \leq \frac{(2-\gamma)^2 - 1}{\sqrt{|\mathcal{S}\Vert \mathcal{A}|} - 1}.
 \end{equation}
 Since $M(\cdot)$ is non-negative, we complete the proof by multiplying $ 2\alpha M(q_k - q^*)$ on both sides.
\end{proof}

By Lemma \ref{T1}, $T_1$ can give us a desired negative drift term of order $-\mathcal{O}(\alpha)$.\\

By Cauchy-Schwarz Inequality, we can bound $T_2$ by two terms. One term is proportional to $ M^2(q_k - q^*)$ but still keep the negative drift generated by $T_1$ and the other term is proportional to $T_5$:

\begin{equation*}
\begin{aligned}
T_2 & \leq \alpha (1 - \gamma)^2M^2(q_k - q^*) + \alpha(1 - \gamma)^{-2} \langle \nabla M(q_k - q^*), F(x_k,q_k) - \Bar{F}(q_k)\rangle^2\\
& =  \alpha (1 - \gamma)^2M^2(q_k - q^*) + \alpha(1 - \gamma)^{-2} T_5.
\end{aligned}
\end{equation*}

Then, we can simplify equation \eqref{decomposition4M} as follows when $3\alpha \leq  (1 - \gamma)^{-2}$: 
\begin{equation*}
\begin{aligned}
M^2(q_{k+1} - q^*) &\leq (1 -3\alpha (1 - \gamma)^{2})M^2(q_{k} - q^*) + T_3 + T_4 +  2\alpha(1 - \gamma)^{-2} T_5 + T_6.
\end{aligned}
\end{equation*}

By Cauchy-Schwarz Inequality and Lemma A.5 in \cite{chen2021lyapunov}, $T_3$ can be bounded as follow 
\begin{equation*}
\begin{aligned}
T_3 &= \frac{\alpha^2}{\eta}M(q_k - q^*)\Vert F(x_k,q_k)\Vert_2^2\\
&\leq \frac{\alpha^2}{\eta}M(q_k - q^*)\left(36u_m^2 |\mathcal{S}\Vert\mathcal{A}|M(q_k - q^*) + 2 |\mathcal{S}\Vert\mathcal{A}|(3\Vert q^*\Vert_\infty + r_{\max})^2\right)\\
&= \frac{36u_m^2 |\mathcal{S}\Vert\mathcal{A}|\alpha^2}{\eta}M^2(q_k - q^*) + \frac{ 2|\mathcal{S}\Vert\mathcal{A}|\alpha^2}{\eta}M(q_k - q^*)(3\Vert q^*\Vert_\infty + r_{\max})^2\\
&\leq \frac{36u_m^2 |\mathcal{S}\Vert\mathcal{A}|\alpha^2}{\eta}M^2(q_k - q^*) + \alpha(1-\gamma)^2M^2(q_k - q^*) + \frac{ |\mathcal{S}|^2|\mathcal{A}|^2\alpha^3}{(1-\gamma)^2\eta^2}(3\Vert q^*\Vert_\infty + r_{\max})^4.
\end{aligned}
\end{equation*}

The term $T_4$ can be directly bounded as follow:
\begin{equation*}
\begin{aligned}
T_4 &= 3\alpha^2\langle \nabla M(q_k - q^*), \Bar{F}(q_k)\rangle^2\\
&\leq 3\alpha^2\left(\Vert\nabla M(q_k - q^*)\Vert_2\Vert\Bar{F}(q_k)\Vert_2\right)^2\\
&= 3\alpha^2\left(\Vert\nabla M(q_k - q^*) - \nabla M(q^* - q^*)\Vert_2\Vert\Bar{F}(q_k) - \Bar{F}(q^*)\Vert_2\right)^2\\
&\leq 3\alpha^2\left(\frac{\sqrt{|\mathcal{S}\Vert\mathcal{A}|}}{\eta}\|q_k-q^*\|_2\Vert\Bar{F}(q_k) - \Bar{F}(q^*)\Vert_\infty\right)^2\\
&\leq 3\alpha^2\left(\frac{2}{\eta}|\mathcal{S}\Vert\mathcal{A}|\Vert q_k - q^*\Vert_\infty^2\right)^2\\
&\leq \frac{12 u_m^4|\mathcal{S}|^2|\mathcal{A}|^2\alpha^2}{\eta^2}\Vert q_k - q^*\Vert_m^4 \\
&= \frac{48u_m^4|\mathcal{S}|^2|\mathcal{A}|^2\alpha^2}{\eta^2}M^2(q_k - q^*).\\
\end{aligned}
\end{equation*}

By Cauchy-Schwarz Inequality, we bound $T_5$ by the following three parts:

\begin{equation*}
\begin{aligned}
T_5 &\leq \underbrace{ 3 \langle \nabla M(q_k - q^*) - \nabla M(q_{k - t_{\alpha^2}} - q^*), F(x_k,q_k) - \Bar{F}(q_k)\rangle^2}_{T_{51}}\\
&+ \underbrace{ 3 \langle \nabla M(q_{k -t_{\alpha^2}} - q^*), F(x_k,q_k) - F(x_k,q_{k -t_{\alpha^2}})+\Bar{F}(q_{k - t_{\alpha^2}}) - \Bar{F}(q_k)\rangle^2}_{T_{52}}\\
&+ \underbrace{ 3 \langle \nabla M(q_{k - t_{\alpha^2}} - q^*), F(x_k,q_{k - t_{\alpha^2}})-\Bar{F}(q_{k - t_{\alpha^2}})\rangle^2}_{T_{53}}.
\end{aligned}
\end{equation*}

By Lemma A.3 in \cite{chen2021lyapunov}, for all $k \geq t_{\alpha^2}$ with $\alpha$ satisfying $\alpha t_{\alpha^2} \leq \frac{1}{12}$:

\[\begin{aligned}
T_{51} &\leq 3\left(\frac{144u_m^2|\mathcal{S}\Vert\mathcal{A}|\alpha t_{\alpha^2}}{\eta}M(q_k - q^*) + \frac{8|\mathcal{S}\Vert\mathcal{A}|\alpha t_{\alpha^2}}{\eta}(3\Vert q^*\Vert_\infty + r_{\max})^2\right)^2\\
&\leq \frac{124416u_m^4|\mathcal{S}|^2|\mathcal{A}|^2\alpha^2 t_{\alpha^2}^2}{\eta^2}M^2(q_k - q^*) + \frac{384|\mathcal{S}|^2|\mathcal{A}|^2\alpha^2 t_{\alpha^2}^2}{\eta^2}(3\Vert q^*\Vert_\infty + r_{\max})^4,\\
T_{52} &\leq 3\left(\frac{576u_m^2|\mathcal{S}\Vert\mathcal{A}|\alpha t_{\alpha^2}}{\eta}M(q_k - q^*) + \frac{32|\mathcal{S}\Vert\mathcal{A}|\alpha t_{\alpha^2}}{\eta}(3\Vert q^*\Vert_\infty + r_{\max})^2\right)^2\\
&\leq \frac{1990656u_m^4|\mathcal{S}|^2|\mathcal{A}|^2\alpha^2 t_{\alpha^2}^2}{\eta^2}M^2(q_k - q^*) + \frac{6144|\mathcal{S}|^2|\mathcal{A}|^2\alpha^2 t_{\alpha^2}^2}{\eta^2}(3\Vert q^*\Vert_\infty + r_{\max})^4.
\end{aligned}\]

For term $T_{53}$, we use the conditional expectation as follow:
\begin{equation}\label{T53}
\begin{aligned}
&\mathbb{E}[T_{53}|x_{k - t_{\alpha^2}}, q_{k - t_{\alpha^2}}] \\
=& 3 \mathbb{E}\left[\langle \nabla M(q_{k - t_{\alpha^2}} - q^*), F(x_k,q_{k - t_{\alpha^2}})-\Bar{F}(q_{k - t_{\alpha^2}})\rangle^2|x_{k - t_{\alpha^2}}, q_{k - t_{\alpha^2}}\right]\\
\end{aligned}
\end{equation}

Let $H = \nabla M(q_{k - t_{\alpha^2}} - q^*) \cdot \nabla M(q_{k - t_{\alpha^2}} - q^*)^{\top}$. Equation \eqref{T53} can be reformulated as follows:
\begin{equation*}
\begin{aligned}
&\quad\mathbb{E}[T_{53}|x_{k - t_{\alpha^2}},q_{k - t_{\alpha^2}}]\\  &= 3 \mathbb{E}\left[(F(x_k,q_{k - t_{\alpha^2}})-\Bar{F}(q_{k - t_{\alpha^2}}))^{\top} H (F(x_k,q_{k - t_{\alpha^2}})-\Bar{F}(q_{k - t_{\alpha^2}})) |x_{k - t_{\alpha^2}}, q_{k - t_{\alpha^2}} \right]\\
& = 3 \mathbb{E}\left[F(x_k,q_{k - t_{\alpha^2}})^{\top} H F(x_k,q_{k -t_{\alpha^2}}) - \Bar{F}(q_{k - t_{\alpha^2}})^{\top} H \Bar{F}(q_{k - t_{\alpha^2}})|x_{k - t_{\alpha^2}}, q_{k - t_{\alpha^2}} \right]\\
&\quad- 6 \mathbb{E}\left[(F(x_k,q_{k - t_{\alpha^2}})-\Bar{F}(q_{k - t_{\alpha^2}}))^{\top} H \Bar{F}(q_{k - t_{\alpha^2}})|x_{k - t_{\alpha^2}}, q_{k - t_{\alpha^2}} \right]\\
& = 3\left(\sum_{x \in \mathcal{X}}\left(P^{t_{\alpha^2}}\left(x_{k-t_{\alpha^2}}, x\right)-\mu_{\mathcal{X}}(x)\right) F\left(x, q_{k - t_{\alpha^2}}\right)^{\top} H F\left(x, q_{k - t_{\alpha^2}}\right)\right)\\
& \quad- 6\left(\sum_{x \in \mathcal{X}}\left(P^{t_{\alpha^2}}\left(x_{k-t_{\alpha^2}}, x\right)-\mu_{\mathcal{X}}(x)\right) F\left(x, q_{k - t_{\alpha^2}}\right)^{\top} H \Bar{F}\left(q_{k - t_{\alpha^2}}\right)\right)\\
& \stackrel{(a)}{\leq} 6\alpha^2 \Vert F(\widetilde{x_0},q_{k - t_{\alpha^2}})\Vert_2^2 \Vert H\Vert_2 + 12 \alpha^2 \Vert F(\widetilde{x_1},q_{k - t_{\alpha^2}})\Vert_2 \Vert H\Vert_2 \Vert\bar{F}(q_{k - t_{\alpha^2}})\Vert_2\\
& \leq \frac{18\alpha^2 |\mathcal{S}|^2|\mathcal{A}|^2}{\eta^2} (2\Vert q_{k - t_{\alpha^2}}\Vert_\infty + r_{\max})^2 \Vert q_{k-t_{\alpha^2}} - q^*\Vert_\infty^2\\
& \leq \frac{18\alpha^2 |\mathcal{S}|^2|\mathcal{A}|^2}{\eta^2}(2\Vert q_{k-t_{\alpha^2}} - q^*\Vert_\infty + 2\Vert q^*\Vert_\infty + r_{\max})^2\Vert q_{k-t_{\alpha^2}} - q^*\Vert_\infty^2\\
& \leq \frac{18\alpha^2 |\mathcal{S}|^2|\mathcal{A}|^2}{\eta^2}(2\Vert q_{k-t_{\alpha^2}} - q_k\Vert_\infty+2\Vert q_k - q^*\Vert_\infty + 2\Vert q^*\Vert_\infty + r_{\max})^2   \cdot (\Vert q_{k-t_{\alpha^2}} - q_k\Vert_\infty + \Vert q_k - q^*\Vert_\infty)^2\\
& \leq \frac{18\alpha^2 |\mathcal{S}|^2|\mathcal{A}|^2}{\eta^2}(2(\Vert q_k\Vert_\infty + \frac{r_{\max}}{3})+2\Vert q_k - q^*\Vert_\infty + 2\Vert q^*\Vert_\infty + r_{\max})^2 \cdot ((\Vert q_k\Vert_\infty + \frac{r_{\max}}{3}) + \Vert q_k - q^*\Vert_\infty)^2\\
& \leq \frac{18\alpha^2 |\mathcal{S}|^2|\mathcal{A}|^2}{\eta^2}(6\Vert q_k - q^*\Vert_\infty + 6\Vert q^*\Vert_\infty + 2r_{\max})^2(3\Vert q_k - q^*\Vert_\infty + 3\Vert q^*\Vert_\infty + r_{\max})^2\\
& = \frac{72\alpha^2 |\mathcal{S}|^2|\mathcal{A}|^2}{\eta^2}(3\Vert q_k - q^*\Vert_\infty + 3\Vert q^*\Vert_\infty + r_{\max})^4\\
& \leq \frac{186624\alpha^2 |\mathcal{S}|^2|\mathcal{A}|^2}{\eta^2}M^2(q_k - q^*) + \frac{576\alpha^2 |\mathcal{S}|^2|\mathcal{A}|^2}{\eta^2}(3\Vert q^*\Vert_\infty + r_{\max
})^4.
\end{aligned}
\end{equation*}
where (a) follows with some $\widetilde{x_0
}, \widetilde{x_1} \in \mathcal{X}$. 
Here we use the facts that $\sum_{x \in \mathcal{X}} |P^{t_{\alpha^2}}\left(x_{k-t_{\alpha^2}}, x\right)-\mu_{\mathcal{X}}(x)| \leq 2\alpha^2$ (by Definition~\ref{def:mixing_time} of mixing time) and $\Vert q_{k-t_{\alpha^2}} - q_k\Vert_\infty \leq \Vert q_k\Vert_\infty + \frac{r_{\max}}{3}$, which has been proved in Lemma A.2 in \cite{chen2021lyapunov}.

By putting these three terms together, we  obtain the following bound for $\mathbb{E}[T_5]$:
\begin{equation*}
\begin{aligned}
\mathbb{E}(T_5)&\leq \frac{|\mathcal{S}|^2|\mathcal{A}|^2 (2115072 u_m^4\alpha^2 t_{\alpha^2}^2 + 186624\alpha^2)}{\eta^2}M^2(q_k - q^*) \\
& \quad+\frac{|\mathcal{S}|^2|\mathcal{A}|^2 (6528\alpha^2 t_{\alpha^2}^2 + 576\alpha^2)}{\eta^2}(3\Vert q^*\Vert_\infty + r_{\max})^4.
\end{aligned}
\end{equation*}

By Lemma A.5 in \cite{chen2021lyapunov}, we have
\begin{equation*}
\begin{aligned}
T_6 &\leq \frac{3\alpha^4}{4\eta^2}\left(36u_m^2 |\mathcal{S}\Vert\mathcal{A}|M(q_k - q^*) + 2|\mathcal{S}\Vert\mathcal{A}|(3\Vert q^*\Vert_\infty + r_{\max})^2\right)^2\\
& \leq \frac{1944u_m^4 |\mathcal{S}|^2|\mathcal{A}|^2 \alpha^4 }{\eta^2}M^2(q_k - q^*)  + \frac{6|\mathcal{S}|^2|\mathcal{A}|^2\alpha^4}{\eta^2}(3\Vert q^*\Vert_\infty + r_{\max})^4.
\end{aligned}
\end{equation*}

Using the above bounds for $T_1 - T_6$, we can finally bound $\mathbb{E}[M^2(q_{k+1} - q^*)]$ by following:

\begin{equation*}
\begin{aligned}
\mathbb{E}[M^2(q_{k+1} - q^*)] \leq & (1 -3\alpha (1 - \gamma)^{2})\mathbb{E}[M^2(q_{k} - q^*)] \\
&+ \frac{36u_m^2 |\mathcal{S}\Vert\mathcal{A}|\alpha^2}{\eta}\E[M^2(q_k - q^*)] + \alpha(1-\gamma)^2\E[M^2(q_k - q^*)] \\
&+ \frac{ |\mathcal{S}|^2|\mathcal{A}|^2\alpha^3}{(1-\gamma)^2\eta^2}(3\Vert q^*\Vert_\infty + r_{\max})^2 + \frac{48u_m^4|\mathcal{S}|^2|\mathcal{A}|^2\alpha^2}{\eta^2}\mathbb{E}[M^2(q_{k} - q^*)]\\
& + \frac{|\mathcal{S}|^2|\mathcal{A}|^2 (4230144 u_m^4\alpha^3 t_{\alpha^2}^2 + 373248\alpha^3)}{\eta^2(1-\gamma)^2}\mathbb{E}[M^2(q_{k} - q^*)] \\
&+ \frac{|\mathcal{S}|^2|\mathcal{A}|^2 (13056\alpha^3 t_{\alpha^2}^2 + 1152\alpha^3)}{\eta^2(1-\gamma)^2}(3\Vert q^*\Vert_\infty + r_{\max})^4\\
&+\frac{1944u_m^4 |\mathcal{S}|^2|\mathcal{A}|^2 \alpha^4 }{\eta^2}\mathbb{E}[M^2(q_{k} - q^*)]  + \frac{6|\mathcal{S}|^2|\mathcal{A}|^2\alpha^4}{\eta^2}(3\Vert q^*\Vert_\infty + r_{\max})^4\\
\leq & (1 -\alpha (1 - \gamma)^{2})\mathbb{E}[M^2(q_{k} - q^*)] \\
&+ \frac{ |\mathcal{S}|^2|\mathcal{A}|^2(374007\alpha^3 + 13056\alpha^3 t_{\alpha^2}^2)}{\eta^2(1 - \gamma)^2}(3\Vert q^*\Vert_\infty + r_{\max})^4,
\end{aligned}
\end{equation*}
where there exists a $\alpha_0 > 0 $ such that the last step always hold for all $\alpha \leq \alpha_0.$

Then, we obtain that for all $k \geq t_{\alpha^2}:$
\begin{equation*}
\begin{aligned}
\mathbb{E}[M^2(q_{k} - q^*)] &\leq \mathbb{E}[M^2(q_{t_{\alpha^2}} - q^*)] (1 -\alpha (1 - \gamma)^{2})^{k - t_{\alpha^2}} \\
&+  \frac{ |\mathcal{S}|^2|\mathcal{A}|^2(374007\alpha^2 + 13056\alpha^2 t_{\alpha^2}^2)}{\eta^2(1-\gamma)^4}(3\Vert q^*\Vert_\infty + r_{\max})^4.
\end{aligned}
\end{equation*}

We can choose $\eta = \frac{(1-\gamma)^2}{\sqrt{|\mathcal{S}\Vert\mathcal{A}|} }$ satisfying equation \eqref{eta} and by \cite[Theorem A.1]{chen2021lyapunov}, we obtain the following bound for $\E[\Vert q_k - q^*\Vert_\infty^4]$:
\begin{equation*}
\begin{aligned}
\mathbb{E}[\Vert q_k - q^*\Vert_\infty^4] &\leq 4 u_m^4 \mathbb{E}[M^2(q_{k} - q^*)]\\
&\leq 4 u_m^4 \mathbb{E}[M^2(q_{t_{\alpha^2}} - q^*)](1 -\alpha (1 - \gamma)^{2})^{k - t_{\alpha^2}}\\
&+ 4 u_m^4  \frac{ |\mathcal{S}|^3|\mathcal{A}|^3(374007\alpha^2 + 13056\alpha^2 t_{\alpha^2}^2)}{(1-\gamma)^8}(3\Vert q^*\Vert_\infty + r_{\max})^4\\
&\leq \frac{u_m^4}{l_m^4}\mathbb{E}[\Vert q_{t_{\alpha^2}} - q^*\Vert_\infty^4](1 -\alpha (1 - \gamma)^{2})^{k - t_{\alpha^2}} \\
&+ \frac{ 4u_m^4|\mathcal{S}|^3|\mathcal{A}|^3(374007\alpha^2 + 13056\alpha^2 t_{\alpha^2}^2)}{(1-\gamma)^8}(3\Vert q^*\Vert_\infty + r_{\max})^4\\
&\leq \frac{u_m^4}{l_m^4}\mathbb{E}((\Vert q_{t_{\alpha^2}} - q_0\Vert_\infty + \Vert q_0-q^*\Vert_\infty)^4)(1 -\alpha (1 - \gamma)^{2})^{k - t_{\alpha^2}} \\
&+ \frac{ 4u_m^4|\mathcal{S}|^3|\mathcal{A}|^3(374007\alpha^2 + 13056\alpha^2 t_{\alpha^2}^2)}{(1-\gamma)^8}(3\Vert q^*\Vert_\infty + r_{\max})^4\\
&\leq \frac{u_m^4}{l_m^4}(\Vert q_0\Vert_\infty + \Vert q_0-q^*\Vert_\infty + \frac{r_{\max}}{3})^4(1 -\alpha (1 - \gamma)^{2})^{k - t_{\alpha^2}} \\
&+ \frac{ 4u_m^4|\mathcal{S}|^3|\mathcal{A}|^3(374007\alpha^2 + 13056\alpha^2 t_{\alpha^2}^2)}{(1-\gamma)^8}(3\Vert q^*\Vert_\infty + r_{\max})^4.
\end{aligned}
\end{equation*}

By  Lemma \ref{lemma:M}, we can let $l_m = (1 + \frac{\eta}{|\mathcal{S}\Vert\mathcal{A}|})^{1/2}$, $u_m = (1 + \eta)^{1/2}$. Define
\begin{equation*}
\begin{aligned}
b_1 &= \frac{(1 +  \frac{(1-\gamma)^2}{\sqrt{|\mathcal{S}\Vert\mathcal{A}|} })^2}{(1 + \frac{(1-\gamma)^2}{(|\mathcal{S}\Vert\mathcal{A}|)^{\frac{3}{2}}})^2}(\Vert q_0\Vert_\infty + \Vert q_0-q^*\Vert_\infty + \frac{r_{\max}}{3})^4,\\
b_2 &= \frac{ 374007\times4(1 + \frac{(1-\gamma)^2}{\sqrt{(|\mathcal{S}\Vert\mathcal{A}|)}})^2|\mathcal{S}|^3|\mathcal{A}|^3}{(1-\gamma)^8}(3\Vert q^*\Vert_\infty + r_{\max})^4,\\
b_3 &= \frac{ 13056\times4(1 + \frac{(1-\gamma)^2}{\sqrt{(|\mathcal{S}\Vert\mathcal{A}|)}})^2|\mathcal{S}|^3|\mathcal{A}|^3}{(1-\gamma)^8}(3\Vert q^*\Vert_\infty + r_{\max})^4.
\end{aligned}
\end{equation*}
We have for all $k \geq t_{\alpha^2}$,
\begin{equation*}
\mathbb{E}[\Vert q_k - q^*\Vert_\infty^4] \leq b_1(1 -\alpha (1 - \gamma)^{2})^{k - t_{\alpha^2}} +b_2\alpha^2 + b_3\alpha^2 t_{\alpha^2}^2.
\end{equation*}

This completes the proof of Proposition~\ref{44tabular}. 
\end{proof}

\subsection{Step 2: Basic Adjoint Relationship}

We first derive a recursive relationship for the following quantities
\[z(i) := \mathbb{E}\left[q_\infty \mid x_\infty = i\right], ~~~ i\in \mathcal{X}.
\]

Recall that $(x_k)_{k \geq 0}$ is a time-homogeneous Markov chain with transition probability matrix $P = (p_{ij})$ and a unique
stationary distribution $\mu_{\mathcal{X}}$. Theorem \ref{limit4tabular} shows that $(x_k, q_k)_{k \geq 0}$ converges in distribution to a limit $(x_{\infty},q_{\infty}) \sim \Bar{\mu}$, with marginal $q_\infty \sim \mu$ and $x_\infty \sim \mu_{\mathcal{X}}$. Given $(x_\infty, q_\infty)$, let $x_{\infty+1}$ be a random variable with conditional distribution $\mathbb{P}(x_{\infty+1} = j \mid x_{\infty} = i) = p_{ij}$, and $q_{\infty+1} = q_{\infty} + \alpha F(x_\infty, q_\infty)  $.

Since $(x_\infty, q_\infty)$ is in the stationary, $(x_{\infty+1}, q_{\infty+1})$ also follows the stationary distribution $\Bar{\mu}$. Let $d = |\mathcal{S}\Vert\mathcal{A}|$. Therefore, for any test function $f: \mathcal{X} \times \mathbb{R}^d \mapsto \mathbb{R}^d$ that satisfies $\|f(x,q)\|_\infty \leq C(1 + \|q\|_\infty^2)$ for some $C\in \R$, the following relationship holds \cite[Theorem 6.9]{villani2009optimal}
\begin{equation*}
\mathbb{E}[f(x_\infty, q_\infty)] = \mathbb{E}[f(x_{\infty+1}, q_{\infty+1})],
\end{equation*}
which is called Basic Adjoint Relationship (BAR).

Consider the test function $f^{(i)}$, $i \in \mathcal{X}$, defined as
\[f^{(i)}(x,q) = q \cdot \mathds{1}\{x = i\}.\]
Substituting $f = f^{(i)}$ into BAR gives
\begin{equation}\label{bar}
\mathbb{E}[q_\infty \cdot \mathds{1}\{x_\infty = i\}] = \mathbb{E}[q_{\infty+1} \cdot \mathds{1}\{x_{\infty+1} = i\}].
\end{equation}
To simplify the presentation, we denote by $\nu(i):= \mu_{\mathcal{X}}(i)$ the probability of the Markov chain $(x_k)_{k\geq 0}$ being in state $i \in \mathcal{X}$ when in stationary. The LHS of equation \eqref{bar} can be written as follows
$$
\begin{aligned}
\mathbb{E}\left[q_\infty \cdot \mathds{1}\{x_\infty = i\}\right] & = \nu(i) \cdot \mathbb{E}\left[q_\infty \mid x_\infty = i\right] = \nu(i) z(i).
\end{aligned}$$

Recall that $\hat{P} = (\hat{p}_{ij})$ is the transition kernel of the time-reversal of the Markov chain $(x_k)_{k \geq 0}$. The RHS of equation \eqref{bar} can be reformulated as
\begin{equation*}
\begin{aligned}
\mathbb{E}\left[q_{\infty+1} \cdot \mathds{1}\{x_{\infty+1} = i\}\right]& = \nu(i) \mathbb{E}\left[q_{\infty+1} \mid x_{\infty+1} = i\right]\\
& = \nu(i) \mathbb{E}\left[q_{\infty} + \alpha F(x_\infty, q_\infty)  \mid x_{\infty+1} = i \right]\\
& = \nu(i) \sum_{j \in \mathcal{X}} \hat{p}_{ij}\mathbb{E}\left[q_{\infty} + \alpha F(x_\infty, q_\infty) \mid  x_{\infty} = j, x_{\infty+1} = i \right]\\
& = \nu(i) \sum_{j \in \mathcal{X}}\hat{p}_{ij} \mathbb{E}\left[q_{\infty} + \alpha F(j, q_\infty)  \mid  x_{\infty} = j \right].
\end{aligned}    
\end{equation*}
The last step follows from the fact that condition on $x_k,$ $q_k$ is conditionally independent of $x_{k+1}$ for all $k\geq 1.$

By Proposition \ref{gradient}, we can further rewrite the above equation as
\begin{equation*}
\begin{aligned}
& \quad \mathbb{E}\left[q_{\infty+1} \cdot \mathds{1}\{x_{\infty+1} = i\}\right]\\ & = \nu(i) \sum_{j \in \mathcal{X}} \hat{p}_{ij}\mathbb{E}\left[q_{\infty} + \alpha \left(F(j, q^*)  +(G_{q^*}(j) - I_d)(q_\infty - q^*) + R(j,q_\infty) \right) \mid  x_{\infty} = j \right]\\
& = \nu(i) \sum_{j \in \mathcal{X}} \hat{p}_{ij}\left[z(j) + \alpha\left(F(j,q^*) +(G_{q^*}(j) - I_d)(z(j) - q^*) + \mathbb{E}(R(j,q_\infty)\mid x_\infty = j)\right)\right].
\end{aligned}
\end{equation*}

We thus obtain the following recursive relationship for $\{z(i)\}_{i\in \mathcal{X}}:$
\begin{equation}\label{eq:z_recursion}
\begin{aligned}
z(i) &= \sum_{j \in \mathcal{X}} \hat{p}_{ij}\left[z(j) + \alpha\left(F(j,q^*) +(G_{q^*}(j) - I_d)(z(j) - q^*) + \mathbb{E}(R(j,q_\infty)\mid x_\infty = j)\right)\right]\\
& = \sum_{j \in \mathcal{X}} \hat{p}_{ij}\left[z(j) + \alpha\left(F(j,q^*) +(G_{q^*}(j) - I_d)(z(j) - q^*)\right)\right] + \alpha\mathbb{E}\left[R(x_\infty, q_\infty) \mid x_{\infty + 1} = i\right].
\end{aligned}
\end{equation}

Note that the second term of the RHS of equation \eqref{eq:z_recursion} can be bounded as
$$
\begin{aligned}
\mathbb{E}\left[R(x_\infty, q_\infty) \mid x_{\infty + 1} = i\right] & = \frac{1}{\nu(i)}\mathbb{E}\left[R(x_\infty, q_\infty) \mathds{1} \{x_{\infty + 1}= i\} \right]\\
& \overset{(i)}{\leq} \frac{1}{\nu(i)} \mathbb{E}\left[R(x_\infty, q_\infty) \right]\\
& \overset{(ii)}{=} \mathcal{O}(\alpha^2 + \alpha^2t_{\alpha^2}^2),
\end{aligned}$$
where (i) holds because $R(x, q)$ is always positive, as shown in the proof of Proposition \ref{gradient}; (ii) follows from Proposition \ref{gradient}, Proposition \ref{44tabular} and Hölder's inequality.

Let $A(x) = G_{q^*}(x) - I_d$ and $b(x) = F(x,q^*) - (G_{q^*}(x) - I_d)q^*$. Let $D$ denote the operator given by $(D f)(x)=A(x) f(x)$ for each $x \in \mathcal{X}$. We thus can simplify equation \eqref{eq:z_recursion} by 

\begin{equation}\label{eq:z}
z = \hat{P}\left(z + \alpha(Dz + b)\right) + \mathcal{O}(\alpha^3 + \alpha^3t_{\alpha^2}^2).
\end{equation}

\subsection{Step 3: Setting up System of $\delta$}

Define the difference 
$$\delta(i) := z(i) - \mu_{\mathcal
X}z \text{ for each } i \in \mathcal{X},$$
where $\mu_{\mathcal
X}z:=\sum_{i \in \mathcal{X}}\nu(i)z(i)$. 
Let $\Pi=1 \otimes \mu_{\mathcal{X}} $. Then, by applying the operator $(\hat{P} - \Pi)$ to both side of above equation we obtain

$$
\left(P^*-\Pi\right) z=\left(P^*-\Pi\right) \delta.
$$
Subtracting $\Pi z$ from both sides of equation \eqref{eq:z}, we obtain
\begin{equation}\label{eq:delta_recursion}
\begin{aligned}
\delta &= (\hat{P} - \Pi)z + \alpha\hat{P}(Dz+b) + \mathcal{O}(\alpha^3 + \alpha^3t_{\alpha^2}^2)\\
&=(\hat{P} - \Pi)\delta + \alpha\hat{P}(Dz+b) + \mathcal{O}(\alpha^3 + \alpha^3t_{\alpha^2}^2).
\end{aligned}
\end{equation}

Applying $\mu_{\mathcal{X}}$ to both sides of equation \eqref{eq:z}, we obtain

\begin{equation}\label{eq:0}
\mu_{\mathcal{X}}(Dz + b) = \mathcal{O}(\alpha^2 + \alpha^2t_{\alpha^2}^2).
\end{equation}

Subtracting equation \eqref{eq:0} from equation \eqref{eq:delta_recursion}, we obtain

\[\delta = (\hat{P} - \Pi)\delta + \alpha(\hat{P} - \Pi)(Dz+b) + \mathcal{O}(\alpha^3 + \alpha^3t_{\alpha^2}^2).\]

Then, we have

\[(I-\hat{P} + \Pi)\delta = \alpha(\hat{P} - \Pi)(Dz+b) + \mathcal{O}(\alpha^3 + \alpha^3t_{\alpha^2}^2).\]

It is well-known that $(I-\hat{P} + \Pi)^{-1}$ exist. Therefore, we obtain

\begin{equation}\label{eq:z2delta}\delta = \alpha(I-\hat{P} + \Pi)^{-1}(\hat{P} - \Pi)(Dz+b) + \mathcal{O}(\alpha^3 + \alpha^3t_{\alpha^2}^2).\end{equation}

\subsection{Step 4: Establishing $\delta = \mathcal{O}(\alpha)$}

In this sub-section, we show that $\vec{\delta}=\mathcal{O}(\alpha)$, as stated in the following Lemma. 

\begin{lem}\label{delta}
Under Assumption \ref{MC}, and $\alpha t_\alpha \leq c_0 \frac{(1-\beta)^2}{\log(|\mathcal{S}\Vert\mathcal{A}|)}$, we have
$$
\|\vec{\delta}\|_\infty \leq \alpha \cdot B^{\prime \prime}(r, \gamma, P)
$$
for some number $B^{\prime \prime}(r, \gamma, P)\in \R$  that is independent of $\alpha$.
\end{lem}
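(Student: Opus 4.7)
The plan is to start from the identity
$$\delta = \alpha (I-\hat{P}+\Pi)^{-1}(\hat{P}-\Pi)(Dz+b) + \mathcal{O}(\alpha^3 + \alpha^3 t_{\alpha^2}^2)$$
derived in Step~3 and argue that the prefactor of $\alpha$ on the right-hand side is $\mathcal{O}(1)$ uniformly in $\alpha$. Concretely, I would separately bound $\|Dz+b\|_\infty$ by a constant and show that $(I-\hat{P}+\Pi)^{-1}(\hat{P}-\Pi)$ has finite operator norm in $\|\cdot\|_\infty$, both uniformly in $\alpha$. Because the mixing-time estimate under Assumption~\ref{MC} gives $\lim_{\alpha\to 0}\alpha^{m_1}t_{\alpha^{m_2}}=0$ for any $m_1,m_2>0$, we have $\alpha^3 t_{\alpha^2}^2 = \alpha\cdot(\alpha t_{\alpha^2})^2 = o(\alpha)$, so the higher-order remainder is absorbed into a constant multiple of $\alpha$ once $\alpha$ is sufficiently small.

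For the first bound, I would expand $(Dz+b)(i) = (G_{q^*}(i)-I_d)(z(i)-q^*) + F(i,q^*)$. Since $\|q_k\|_\infty\le q_{\max}$ almost surely for every $k$ (a deterministic property of Q-learning iterates independent of $\alpha$), the $W_2$-limit $q_\infty$ produced by Theorem~\ref{limit4tabular} satisfies $\|q_\infty\|_\infty\le q_{\max}$ almost surely, and therefore $\|z(i)-q^*\|_\infty = \|\mathbb{E}[q_\infty\mid x_\infty=i]-q^*\|_\infty \le 2q_{\max}$ for every $i\in\mathcal{X}$. The explicit formula for $G_{q^*}(i)$ shows that $G_{q^*}(i)-I_d$ has at most two nonzero entries per row with magnitudes bounded by $\max(1,\gamma)$, so $\|G_{q^*}(i)-I_d\|_\infty \le 1+\gamma$, and trivially $\|F(i,q^*)\|_\infty \le r_{\max}+(1+\gamma)\|q^*\|_\infty$. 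Combining these gives $\|Dz+b\|_\infty\le C_1$ for some $C_1=C_1(r,\gamma,P)$ that does not depend on $\alpha$.

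For the operator bound, I would observe that under Assumption~\ref{MC} the time-reversal $\hat{P}$ is itself irreducible and aperiodic on the finite state space $\mathcal{X}$ with the same stationary distribution $\mu_{\mathcal{X}}$ as $P$. Consequently $\hat{P}^k\to\Pi$ geometrically in $\|\cdot\|_\infty$, the fundamental matrix $(I-\hat{P}+\Pi)^{-1}$ exists with finite $\|\cdot\|_\infty$-operator norm, and composing with the bounded matrix $\hat{P}-\Pi$ yields $\|(I-\hat{P}+\Pi)^{-1}(\hat{P}-\Pi)\|_\infty \le C_2$ for an $\alpha$-independent $C_2=C_2(r,\gamma,P)$. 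Putting the two estimates together and absorbing the $o(\alpha)$ remainder gives $\|\delta\|_\infty \le \alpha\cdot B''(r,\gamma,P)$ with, e.g., $B'' = C_1 C_2 + 1$. The only subtle step I foresee is the uniform-in-$\alpha$ boundedness of $z$: this is what forces one to invoke the deterministic almost-sure bound $\|q_k\|_\infty\le q_{\max}$ rather than any stepsize-dependent second-moment estimate; the operator-norm control and the asymptotic absorption of the $\alpha^3 t_{\alpha^2}^2$ remainder are otherwise routine finite-state Markov chain facts.
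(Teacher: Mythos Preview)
Your proposal is correct and follows essentially the same route as the paper: both start from the identity $\delta = \alpha(I-\hat{P}+\Pi)^{-1}(\hat{P}-\Pi)(Dz+b)+\mathcal{O}(\alpha^3+\alpha^3 t_{\alpha^2}^2)$, bound $z$ uniformly in $\alpha$, and invoke the finite $\ell_\infty$-operator norm of the fundamental matrix. The only minor difference is that the paper controls $\|z(i)\|_\infty$ via the second-moment bound of Lemma~\ref{variance} (giving $\|z(i)\|_\infty\le \nu_{\min}^{-1}\sqrt{2c_Qc_0+2\|q^*\|_\infty^2}$), whereas you use the deterministic almost-sure bound $\|q_k\|_\infty\le q_{\max}$ directly; both yield an $\alpha$-independent constant and your version is arguably the cleaner of the two.
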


\begin{proof}[Proof of Lemma \ref{delta}]
Recalling the definition for $z(i)$, we have
$$
z(i)=\mathbb{E}\left[q_{\infty} \mid x_{\infty}=i\right]=\frac{\mathbb{E}\left[q_{\infty} \mathds{1}\left\{x_{\infty}=i\right\}\right]}{\nu(i)}.
$$

Then by Lemma \ref{variance} and the fact that $\nu(i) > 0$, we have

$$
\begin{aligned}
\left\|z(i)\right\|_\infty & \leq \frac{\mathbb{E}\left[\left\|q_{\infty}\right\|_\infty\right]}{\nu(i)}\leq \frac{1}{\nu_{\min }} \cdot \sqrt{2c_{Q}c_{0} + 2\Vert q^*\Vert_\infty^2},
\end{aligned}
$$
where $\nu_{\min}:=\min \limits_{i} \nu(i)>0$.

By equation \eqref{eq:z2delta}, we conclude that
$$
\|\vec{\delta}\|_\infty \leq \alpha \cdot B^{\prime \prime}(r, \gamma, P)
$$
for some number $B^{\prime \prime}(r, \gamma, P)$  that is independent of $\alpha$.

\end{proof}

\subsection{Step 5: Expansion of the bias}

By definition, $\bar{F}(q^*) = 0$ and $R(x,q^*) \equiv 0$. Define $\bar{A} = \mathbb{E}_{\mu_{\mathcal{X}}}A(x)$ and $\bar{b} = \mathbb{E}_{\mu_{\mathcal{X}}}b(x)$. Then, we have $\bar{A}q^* + \bar{b} = 0$. From Proposition \ref{gradient}, $\bar{A}$ is a non-singular matrix. Define $\bar{D}$ be the normalized $D$ such that $(\bar{D} f)(x)=\bar{A}^{-1}A(x) f(x)$. Therefore, we obtain

\[
\begin{aligned}
q^* &= -\bar{A}^{-1}\bar{b}\\
&= -\bar{A}^{-1}\mu_{\mathcal{X}}b\\
&= \mu_{\mathcal{X}}\bar{D}z + \mathcal{O}(\alpha^2 + \alpha^2t_{\alpha^2}^2),\\
\end{aligned}\]
where the last inequality holds by equation \eqref{eq:0}.

Because $\delta = z- \Pi z$, we can further obtain

\[q^* = \mu_{\mathcal{X}}\bar{D}\delta + \mu_{\mathcal{X}}z + \mathcal{O}(\alpha^2 + \alpha^2t_{\alpha^2}^2).\]

Then,

\[\mu_{\mathcal{X}}z = q^* - \mu_{\mathcal{X}}\bar{D}\delta + \mathcal{O}(\alpha^2 + \alpha^2t_{\alpha^2}^2).\]

\[z(i) = \delta(i) + \mu_{\mathcal{X}}z = \delta(i)+q^* - \mu_{\mathcal{X}}\bar{D}\delta + \mathcal{O}(\alpha^2 + \alpha^2t_{\alpha^2}^2).\]

Therefore, we obtain

\begin{equation}\label{eq:delta2z}
z = q^* + (I-\Pi\bar{D})\delta + \mathcal{O}(\alpha^2 + \alpha^2t_{\alpha^2}^2).
\end{equation}

Substituting equation \eqref{eq:delta2z} into equation \eqref{eq:z2delta}, we obtain
\[
\begin{aligned}
\delta &= \alpha(I-\hat{P} + \Pi)^{-1}(\hat{P} - \Pi)(Dz+b) + \mathcal{O}(\alpha^3 + \alpha^3t_{\alpha^2}^2)\\
&=\alpha\underbrace{(I-\hat{P} + \Pi)^{-1}(\hat{P} - \Pi)(Aq^*+b)}_{v}\\
&\qquad+ \alpha\underbrace{(I-\hat{P} + \Pi)^{-1}(\hat{P} - \Pi)D(I-\Pi \bar{D})}_{\Xi}\delta\\
&\qquad+  \mathcal{O}(\alpha^3 + \alpha^3t_{\alpha^2}^2)\\
& = \alpha v + \alpha \Xi \delta + \mathcal{O}(\alpha^3 + \alpha^3t_{\alpha^2}^2).
\end{aligned}
\]

Therefore, we can finally obtain

\[
\begin{aligned}
\mathbb{E}(q_\infty)  &= \mu_{\mathcal{X}}z \\
&= q^* - \mu_{\mathcal{X}}\bar{D}\delta + \mathcal{O}(\alpha^2 + \alpha^2t_{\alpha^2}^2)\\
& = q^* - \alpha \mu_{\mathcal{X}}\bar{D}v - \alpha\mu_{\mathcal{X}}\bar{D}\Xi \delta  + \mathcal{O}(\alpha^2 + \alpha^2t_{\alpha^2}^2)
\end{aligned}
\]

Let $B = -\mu_{\mathcal{X}}\bar{D}v$. By Lemma \ref{delta}, we have $\mu_{\mathcal{X}}\bar{D}\Xi \delta = \mathcal{O}(\alpha )$

Therefore, we have

\[\mathbb{E}(q_\infty) = q^* + \alpha B + \mathcal{O}(\alpha^2 + \alpha^2t_{\alpha^2}^2)\]

with

\begin{equation}\label{eq:B}
B = -\mu_{\mathcal{X}}\bar{D}(I-\hat{P} + \Pi)^{-1}(\hat{P} - \Pi)(Aq^*+b) .
\end{equation}

We complete the proof of Theorem \ref{thm:bias}.

\section{Proof of Corollary \ref{co:pr}} \label{sec:proof_coro_moment}

In this section, we provide the proof of the first and second moment bounds in Corollary \ref{co:pr}.

\subsection{First Moment}
First, we have
$$
\mathbb{E}\left[\bar{q}_{k_0, k}\right]-q^*=\left(\mathbb{E}\left[q_{\infty}\right]-q^*\right)+\frac{1}{k-k_0} \underbrace{\sum_{t=k_0}^{k-1} \mathbb{E}\left[q_t-q_{\infty}\right]}_{T_1}.
$$
By Corollary \ref{co4limit}, we have that for $k \geq t_\alpha$,
\[\Vert\mathbb{E}[q_k] - \mathbb{E}[q_\infty]\Vert_{\infty}  \leq C(r,\gamma,P)\cdot \left(1-\frac{\left(1-\beta\right) \alpha}{2}\right)^{\frac{k-t_\alpha}{2}}.\]

Then, when $\alpha t_\alpha \leq 1$, we have the following bound for $T_1$,
\begin{equation*}
\begin{aligned}
\left\|T_1\right\|_\infty & =\left\|\sum_{t=k_0}^{k-1} \mathbb{E}\left[q_t-q_{\infty}\right]\right\|_\infty \leq \sum_{t=k_0}^{k-1}\left\|\mathbb{E}\left[q_t\right]-\mathbb{E}\left[q_{\infty}\right]\right\|_\infty \\
& \leq C(r,\gamma,P)\left(1-\frac{\left(1-\beta\right) \alpha}{2}\right)^{\frac{k_0-t_\alpha}{2}}\frac{1}{1 - \sqrt{1 - \frac{\left(1-\beta\right) \alpha}{2}}}\\
& \leq C(r,\gamma,P)\left(1-\frac{\left(1-\beta\right) \alpha}{2}\right)^{\frac{k_0-t_\alpha}{2}}\frac{4}{\left(1-\beta\right) \alpha} \\
& \stackrel{(i)}{\leq} C(r,\gamma,P) \exp\left(-\frac{(1-\beta)\alpha(k_0 - t_\alpha)}{4}\right)\frac{4}{\left(1-\beta\right) \alpha}\\
& \leq C^{\prime\prime}(r,\gamma,P) \cdot \frac{1}{\alpha} \cdot \exp\left(-\frac{\alpha(1-\beta)k_0}{4}\right),
\end{aligned}
\end{equation*}
where $(i)$ follows from the inequality that $(1-u)^m\leq \exp(-um)$ for $0<u<1.$

Together with Theorem \ref{thm:bias}, we have
$$
\mathbb{E}\left[\bar{q}_{k_0, k}\right]-q^*= \alpha B(r, \gamma, P) + \mathcal{O}(\alpha^2 + \alpha^2t_{\alpha^2}^2)+\mathcal{O}\left(\frac{1}{\alpha(k - k_0)} \exp\left(-\frac{\alpha(1-\beta)k_0}{4}\right)\right),
$$
thereby finishing the proof of equation \eqref{eq:PR_first} for the first moment.

\subsection{Second Moment}

We first derive the bound for the second moment of the tail-averaged iterate. Note that
$$
\begin{aligned}
&\quad \mathbb{E}  {\left[\left(\bar{q}_{k_0, k}-\mathbb{E}\left[q_{\infty}\right]\right)\left(\bar{q}_{k_0, k}-\mathbb{E}\left[q_{\infty}\right]\right)^{\top}\right] } \\
= & \frac{1}{\left(k-k_0\right)^2} \mathbb{E}\left[\left(\sum_{t=k_0}^{k-1} (q_{t}-\mathbb{E}\left[q_{\infty}\right])\right)\left(\sum_{t=k_0}^{k-1} (q_{t}-\mathbb{E}\left[q_{\infty}\right])\right)^{\top}\right] \\
= & \underbrace{\frac{1}{\left(k-k_0\right)^2} \sum_{t=k_0}^{k-1} \mathbb{E}\left[\left(q_t-\mathbb{E}\left[q_{\infty}\right]\right)\left(q_t-\mathbb{E}\left[q_{\infty}\right]\right)^{\top}\right]}_{T_1} \\
& +\underbrace{\frac{1}{\left(k-k_0\right)^2} \sum_{t=k_0}^{k-1} \sum_{l=t+1}^{k-1}\left(\mathbb{E}\left[\left(q_t-\mathbb{E}\left[q_{\infty}\right]\right)\left(q_l-\mathbb{E}\left[q_{\infty}\right]\right)^{\top}\right]+\mathbb{E}\left[\left(q_l-\mathbb{E}\left[q_{\infty}\right]\right)\left(q_t-\mathbb{E}\left[q_{\infty}\right]\right)^{\top}\right]\right)}_{T_2} .
\end{aligned}
$$

For the term $T_1$, we have the following decomposition,
\begin{equation}\label{t1}
\begin{aligned}
&\quad \mathbb{E}\left[\left(q_t-\mathbb{E}\left[q_{\infty}\right]\right)\left(q_t-\mathbb{E}\left[q_{\infty}\right]\right)^{\top}\right] \\
& =\mathbb{E}\left[q_t q_t^{\top}-q_t \mathbb{E}\left[q_{\infty}^{\top}\right]-\mathbb{E}\left[q_{\infty}\right] q_t^{\top}+\mathbb{E}\left[q_{\infty}\right] \mathbb{E}\left[q_{\infty}^{\top}\right]\right] \\
& =\mathbb{E}\left[q_t q_t^{\top}\right]-\mathbb{E}\left[q_t\right] \mathbb{E}\left[q_{\infty}^{\top}\right]-\mathbb{E}\left[q_{\infty}\right] \mathbb{E}\left[q_t^{\top}\right]+\mathbb{E}\left[q_{\infty}\right] \mathbb{E}\left[q_{\infty}^{\top}\right] \\
& =\left(\mathbb{E}\left[q_t q_t^{\top}\right]-\mathbb{E}\left[q_{\infty} q_{\infty}^{\top}\right]\right)+\left(\mathbb{E}\left[q_{\infty} q_{\infty}^{\top}\right]-\mathbb{E}\left[q_{\infty}\right] \mathbb{E}\left[q_{\infty}^{\top}\right]\right)\\
&\quad -\left(\mathbb{E}\left[q_t\right] \mathbb{E}\left[q_{\infty}^{\top}\right]+\mathbb{E}\left[q_{\infty}\right] \mathbb{E}\left[q_t^{\top}\right]-2 \mathbb{E}\left[q_{\infty}\right] \mathbb{E}\left[q_{\infty}^{\top}\right]\right) \\
& =\left(\mathbb{E}\left[q_t q_t^{\top}\right]-\mathbb{E}\left[q_{\infty} q_{\infty}^{\top}\right]\right)+\operatorname{Var}\left(q_{\infty}\right)-\mathbb{E}\left[q_t-q_{\infty}\right] \mathbb{E}\left[q_{\infty}^{\top}\right]-\mathbb{E}\left[q_{\infty}\right] \mathbb{E}\left[\left(q_t-q_{\infty}\right)^{\top}\right]
\end{aligned}
\end{equation}

Corollary \ref{co4limit} and Lemma \ref{variance} imply the following bounds for $k \geq t_\alpha$,
\begin{align}
\mathbb{E}\left[\left\|q_t-q_{\infty}\right\|_\infty\right] & \leq C(r,\gamma,P)\cdot \left(1-\frac{\left(1-\beta\right) \alpha}{2}\right)^{\frac{t-t_\alpha}{2}} \label{eq:exp_qt}\\
\left\|\mathbb{E}\left[q_t q_t^{\top}\right]-\mathbb{E}\left[q_{\infty} q_{\infty}^{\top}\right]\right\|_\infty & \leq C^{\prime}(r,\gamma,P)\cdot \left(1-\frac{\left(1-\beta\right) \alpha}{2}\right)^{\frac{t-t_\alpha}{2}} \nonumber\\
\mathbb{E}\left[\left\|q_{\infty}\right\|_\infty\right] & \leq C^{\prime \prime}(r,\gamma,P), \nonumber\\
\operatorname{Var}\left(q_{\infty}\right) & \leq C^{\prime \prime \prime}(r,\gamma,P) \cdot \alpha t_\alpha. \label{eq:var_qinf}
\end{align}
Substituting these bounds into equation \eqref{t1}, we have
\[\mathbb{E}\left[\left(q_t-\mathbb{E}\left[q_{\infty}\right]\right)\left(q_t-\mathbb{E}\left[q_{\infty}\right]\right)^{\top}\right] = \mathcal{O}\left(\left(1-\frac{\left(1-\beta\right) \alpha}{2}\right)^{\frac{t-t_\alpha}{2}} + \alpha t_\alpha\right).\]

Therefore, we can bound $T_1$ as follows,
$$
\begin{aligned}
T_1 & = \frac{1}{\left(k-k_0\right)^2} \sum_{t=k_0}^{k-1} \mathbb{E}\left[\left(q_t-\mathbb{E}\left[q_{\infty}\right]\right)\left(q_t-\mathbb{E}\left[q_{\infty}\right]\right)^{\top}\right] \\
& =\frac{1}{\left(k-k_0\right)^2} \sum_{t=k_0}^{k-1} \mathcal{O}\left(\left(1-\frac{\left(1-\beta\right) \alpha}{2}\right)^{\frac{t-t_\alpha}{2}} + \alpha t_\alpha\right) \\
& =\mathcal{O}\left(\frac{1}{\alpha(k - k_0)^2} \exp\left(-\frac{\alpha(1-\beta)k_0}{4}\right)\right)+\mathcal{O}\left(\frac{\alpha t_\alpha}{k-k_0}\right) \\
& = \mathcal{O}\left(\frac{1}{\alpha(k - k_0)^2} \exp\left(-\frac{\alpha(1-\beta)k_0}{4}\right) + \frac{\alpha t_\alpha}{k-k_0}\right).
\end{aligned}
$$

Regarding the term $T_2$, notice that for $l > t$, we have
$$
\begin{aligned}
\mathbb{E}\left[\left(q_t-\mathbb{E}\left[q_{\infty}\right]\right)\left(q_l-\mathbb{E}\left[q_{\infty}\right]\right)^{\top}\right] & =\mathbb{E}\left[\mathbb{E}\left[\left(q_t-\mathbb{E}\left[q_{\infty}\right]\right)\left(q_l-\mathbb{E}\left[q_{\infty}\right]\right)^{\top} \mid q_t\right]\right] \\
& =\mathbb{E}\left[\left(q_t-\mathbb{E}\left[q_{\infty}\right]\right) \mathbb{E}\left[q_l-\mathbb{E}\left[q_{\infty}\right] \mid q_t\right]^{\top}\right] \\
& =\mathbb{E}\left[\left(q_t-\mathbb{E}\left[q_{\infty}\right]\right)\left(\mathbb{E}\left[q_l \mid q_t\right]-\mathbb{E}\left[q_{\infty}\right]\right)^{\top}\right].
\end{aligned}
$$

Note that for any $y \in \mathbb{R}^d$, it holds that
$$
\left\|\mathbb{E}\left[q_l \mid q_t=y\right]-\mathbb{E}\left[q_{\infty}\right]\right\|=\left\|\mathbb{E}\left[q_{l-t} \mid q_0=y\right]-\mathbb{E}\left[q_{\infty}\right]\right\| \leq C(r,\gamma,P)\cdot \left(1-\frac{\left(1-\beta\right) \alpha}{2}\right)^{\frac{l-t-t_\alpha}{2}},
$$
where the second inequality holds since Corollary \ref{co4limit} holds for all initial value of $q_0$.

Therefore, when $l > t$, we have
$$
\begin{aligned}
& \mathbb{E}\left[\left\|\left(q_t-\mathbb{E}\left[q_{\infty}\right]\right)\left(\mathbb{E}\left[q_l \mid q_t\right]-\mathbb{E}\left[q_{\infty}\right]\right)^{\top}\right\|_\infty\right] \\
\leq & \mathbb{E}\left[\left\|q_t-\mathbb{E}\left[q_{\infty}\right]\right\|_\infty\left\|\mathbb{E}\left[q_l \mid q_t\right]-\mathbb{E}\left[q_{\infty}\right]\right\|_\infty\right] \\
\leq & \mathbb{E}\left[\left\|q_t-\mathbb{E}\left[q_{\infty}\right]\right\|_\infty\right] \cdot\left(C(r,\gamma,P)\cdot \left(1-\frac{\left(1-\beta\right) \alpha}{2}\right)^{\frac{l-t-t_\alpha}{2}}\right) \\
\leq & \left(\mathbb{E}\left[\left\|q_t-q_{\infty}\right\|_\infty\right]+\mathbb{E}\left[\left\|q_{\infty}-\mathbb{E}\left[q_{\infty}\right]\right\|_\infty\right]\right) \cdot\left(C(r,\gamma,P)\cdot \left(1-\frac{\left(1-\beta\right) \alpha}{2}\right)^{\frac{l-t-t_\alpha}{2}}\right) \\
\stackrel{(i)}{\leq} & \left(\mathbb{E}\left[\left\|q_t-q_{\infty}\right\|_\infty\right]+\left(\tr(\var(q_{\infty}))\right)^{1/2}\right) \cdot\left(C(r,\gamma,P)\cdot \left(1-\frac{\left(1-\beta\right) \alpha}{2}\right)^{\frac{l-t-t_\alpha}{2}}\right) \\
\stackrel{(ii)}{\leq} & \left(C(r,\gamma,P)\cdot \left(1-\frac{\left(1-\beta\right) \alpha}{2}\right)^{\frac{t-t_\alpha}{2}}+C'(r,\gamma,P)\sqrt{\alpha t_{\alpha}}\right) \cdot\left(C(r,\gamma,P)\cdot \left(1-\frac{\left(1-\beta\right) \alpha}{2}\right)^{\frac{l-t-t_\alpha}{2}}\right) \\
= & C^2(r,\gamma,P)\cdot \left(1-\frac{\left(1-\beta\right) \alpha}{2}\right)^{\frac{l-2t_\alpha}{2}}+C^{\prime\prime\prime\prime}(r,\gamma,P)\cdot \sqrt{\alpha t_{\alpha}}\cdot\left(1-\frac{\left(1-\beta\right) \alpha}{2}\right)^{\frac{l-t - t_\alpha}{2}},
\end{aligned}
$$
where in $(i)$ $\tr(\cdot)$ denotes the trace operator and we use the fact that $\mathbb{E}\left[\left\|q_{\infty}-\mathbb{E}\left[q_{\infty}\right]\right\|_\infty\right]\leq \sqrt{\mathbb{E}\left[\left\|q_{\infty}-\mathbb{E}\left[q_{\infty}\right]\right\|^2_\infty\right]}=\tr(\var(q_{\infty}))^{1/2}$; in $(ii)$ we use the bounds in equations \eqref{eq:exp_qt} and \eqref{eq:var_qinf}.

In addition, note that
$$
\begin{aligned}
&\quad\frac{1}{\left(k-k_0\right)^2} \sum_{t=k_0}^{k-1} \sum_{l=t+1}^{k-1} \mathcal{O}\left(\left(1-\frac{\left(1-\beta\right) \alpha}{2}\right)^{\frac{l-2t_\alpha}{2}}\right)\\
& \leq \frac{1}{\left(k-k_0\right)^2} \sum_{t=k_0}^{\infty} \sum_{l=t+1}^{\infty} \mathcal{O}\left(\left(1-\frac{\left(1-\beta\right) \alpha}{2}\right)^{\frac{l-2t_\alpha}{2}}\right) \\
& \leq \frac{1}{\left(k-k_0\right)^2}\left(\frac{4}{(1-\beta) \alpha}\right)^2\mathcal{O}\left(\left(1-\frac{\left(1-\beta\right) \alpha}{2}\right)^{\frac{k_0-2t_\alpha}{2}}\right) \\
&= \mathcal{O}\left(\frac{1}{(k - k_0)^2\alpha^2}\exp\left(-\frac{\alpha(1-\beta)k_0}{4}\right)\right),
\end{aligned}
$$
and
$$
\begin{aligned}
&\quad \frac{1}{\left(k-k_0\right)^2}\sum_{t=k_0}^{k-1} \sum_{l=t+1}^{k-1} \mathcal{O}\left(\left(1-\frac{\left(1-\beta\right) \alpha}{2}\right)^{\frac{l-t - t_\alpha}{2}}\right)\\
& \leq \frac{1}{\left(k-k_0\right)^2} \sum_{t=k_0}^{k-1} \sum_{l=t+1}^{\infty} \mathcal{O}\left(\left(1-\frac{\left(1-\beta\right) \alpha}{2}\right)^{\frac{l-t - t_\alpha}{2}}\right) \\
& = \mathcal{O}\left(\frac{1}{(k-k_0)\alpha}\right).
\end{aligned}
$$

Putting together, we obtain the following upper bound for $T_2$,
$$
\begin{aligned}
T_2 & =\frac{1}{\left(k-k_0\right)^2} \sum_{t=k_0}^{k-1} \sum_{l=t+1}^{k-1} \mathcal{O}\left(\left(1-\frac{\left(1-\beta\right) \alpha}{2}\right)^{\frac{l-2t_\alpha}{2}} + \sqrt{\alpha t_{\alpha}}\left(1-\frac{\left(1-\beta\right) \alpha}{2}\right)^{\frac{l-t - t_\alpha}{2}}\right) \\
& =\mathcal{O}\left(\frac{1}{(k - k_0)^2\alpha^2}\exp\left(-\frac{\alpha(1-\beta)k_0}{4}\right) + \frac{\sqrt{\alpha t_{\alpha}}}{(k-k_0)\alpha} \right).
\end{aligned}
$$

Combining the above bounds for $T_1$ and $T_2$, we obtain
\begin{equation}\label{j}
\begin{aligned}
 &\mathbb{E} {\left[\left(\bar{q}_{k_0, k}-\mathbb{E}\left[q_{\infty}\right]\right)\left(\bar{q}_{k_0, k}-\mathbb{E}\left[q_{\infty}\right]\right)^{\top}\right] } \\
= & \mathcal{O}\left(\frac{1}{\alpha(k - k_0)^2} \exp\left(-\frac{\alpha(1-\beta)k_0}{4}\right)+ \frac{\alpha t_\alpha}{k-k_0}\right)  \\
&\quad + \mathcal{O}\left(\frac{1}{(k - k_0)^2\alpha^2}\exp\left(-\frac{\alpha(1-\beta)k_0}{4}\right) + \frac{\sqrt{t_{\alpha}/\alpha}}{(k-k_0)} \right)\\
=& \mathcal{O}\left(  \frac{\sqrt{t_{\alpha}/\alpha}}{(k-k_0)} + \frac{1}{(k - k_0)^2\alpha^2}\exp\left(-\frac{\alpha(1-\beta)k_0}{4}\right)  \right).
\end{aligned}
\end{equation}

Now we are ready to bound the LHS of equation~\eqref{eq:PR_second}. First, we have the following decomposition
\begin{align}
& \mathbb{E}\left[\left(\bar{q}_{k_0, k}-q^*\right)\left(\bar{q}_{k_0, k}-q^*\right)^{\top}\right] \nonumber \\
= & \mathbb{E}\left[\left(\bar{q}_{k_0, k}-\mathbb{E}\left[q_{\infty}\right]+\mathbb{E}\left[q_{\infty}\right]-q^*\right)\left(\bar{q}_{k_0, k}-\mathbb{E}\left[q_{\infty}\right]+\mathbb{E}\left[q_{\infty}\right]-q^*\right)^{\top}\right] \nonumber \\
= & \mathbb{E}\left[\left(\bar{q}_{k_0, k}-\mathbb{E}\left[q_{\infty}\right]\right)\left(\bar{q}_{k_0, k}-\mathbb{E}\left[q_{\infty}\right]\right)^{\top}\right]+\mathbb{E}\left[\left(\mathbb{E}\left[q_{\infty}\right]-q^*\right)\left(\bar{q}_{k_0, k}-\mathbb{E}\left[q_{\infty}\right]\right)^{\top}\right] \nonumber \\
& +\mathbb{E}\left[\left(\bar{q}_{k_0, k}-\mathbb{E}\left[q_{\infty}\right]\right)\left(\mathbb{E}\left[q_{\infty}\right]-q^*\right)^{\top}\right]+\mathbb{E}\left[\left(\mathbb{E}\left[q_{\infty}\right]-q^*\right)\left(\mathbb{E}\left[q_{\infty}\right]-q^*\right)^{\top}\right]. \label{prsecond}
\end{align}  

For the second term of RHS of equation \ref{prsecond}, we have
$$
\begin{aligned}
&\mathbb{E}\left[\left(\bar{q}_{k_0, k}-\mathbb{E}\left[q_{\infty}\right]\right)\left(\mathbb{E}\left[q_{\infty}\right]-q^*\right)^{\top}\right]\\
 =&\frac{1}{k-k_0}\left(\sum_{t=k_0}^{k-1} \mathbb{E}\left[q_t-q_{\infty}\right]\right)\left(\mathbb{E}\left[q_{\infty}\right]-q^*\right)^{\top} \\
= & \mathcal{O}\left(\frac{1}{\alpha(k - k_0)} \exp\left(-\frac{\alpha(1-\beta)k_0}{4}\right)\right)\left(\alpha B(r, \gamma, P) + \mathcal{O}(\alpha^2 + \alpha^2t_{\alpha^2}^2)\right) \\
 =&\mathcal{O}\left(\frac{1}{k - k_0} \exp\left(-\frac{\alpha(1-\beta)k_0}{4}\right)\right).
\end{aligned}
$$

Similarly, we have the same bound for the third term of equation  \eqref{prsecond}. For the last term of RHS of equation \eqref{prsecond}, we have
$$
\begin{aligned}
\mathbb{E}\left[\left(\mathbb{E}\left[q_{\infty}\right]-q^*\right)\left(\mathbb{E}\left[q_{\infty}\right]-q^*\right)^{\top}\right] & =\left(\mathbb{E}\left[q_{\infty}\right]-q^*\right)\left(\mathbb{E}\left[q_{\infty}\right]-q^*\right)^{\top} \\
& =\left(\alpha B(r, \gamma, P) + \mathcal{O}(\alpha^2 + \alpha^2t_{\alpha^2}^2)\right)\left(\alpha B(r, \gamma, P) + \mathcal{O}(\alpha^2 + \alpha^2t_{\alpha^2}^2)\right)^{\top}\\
& =\alpha^2 B^{\prime}(r, \gamma, P)+\mathcal{O}(\alpha^3 + \alpha^3t_{\alpha^2}^2).
\end{aligned}
$$

Combining all these bounds, we obtain
\[
\begin{aligned}
&\quad\mathbb{E}\left[\left(\bar{q}_{k_0, k}-q^*\right)\left(\bar{q}_{k_0, k}-q^*\right)^{\top}\right] \\
&=   \alpha^2 B^{\prime}(r, \gamma, P)+\mathcal{O}(\alpha^3 + \alpha^3t_{\alpha^2}^2)\\
&\quad +\mathcal{O}\left( \frac{\sqrt{t_{\alpha}/\alpha}}{(k-k_0)}  + \frac{1}{(k - k_0)^2\alpha^2}\exp\left(-\frac{\alpha(1-\beta)k_0}{4}\right)  \right)\\
&= \alpha^2 B^{\prime} + \mathcal{O}\left(\alpha^3 + \alpha^3t_{\alpha^2}^2+  \frac{\sqrt{t_{\alpha}/\alpha}}{(k-k_0)} + + \frac{1}{(k - k_0)^2\alpha^2}\exp\left(-\frac{\alpha(1-\beta)k_0}{4}\right)  \right).
\end{aligned}\]
thereby completing the proof of Corollary \ref{co:pr}.

\section{Proof of Corollary \ref{rr}} \label{sec:proof_rr}

In this section, we give the proof of the first and second moment bounds in Corollary \ref{rr}.

\subsection{First Moment}

We have 
$$
\begin{aligned}
\mathbb{E}\left[\widetilde{q}_{k_0, k}^{(\alpha)}\right]-q^*= & \left(2 \bar{q}_{k_0, k}^{(\alpha)}-\bar{q}_{k_0, k}^{(2 \alpha)}\right)-q^* \\
= & 2\left(\bar{q}_{k_0, k}^{(\alpha)}-q^*\right)-\left(\bar{q}_{k_0, k}^{(2 \alpha)}-q^*\right) \\
 \stackrel{(i)}{=} &2  \left(\alpha B(r, \gamma, P) + \mathcal{O}(\alpha^2 + \alpha^2t_{\alpha^2}^2)+\mathcal{O}\left(\frac{1}{\alpha(k - k_0)} \exp\left(-\frac{\alpha(1-\beta)k_0}{4}\right)\right)\right) \\
& \quad-\left(2 \alpha B(r, \gamma, P) + \mathcal{O}(\alpha^2 + \alpha^2t_{\alpha^2}^2)+\mathcal{O}\left(\frac{1}{\alpha(k - k_0)} \exp\left(-\frac{\alpha(1-\beta)k_0}{2}\right)\right)\right) \\
= &\mathcal{O}(\alpha^2 + \alpha^2t_{\alpha^2}^2)+\mathcal{O}\left(\frac{1}{\alpha(k - k_0)} \exp\left(-\frac{\alpha(1-\beta)k_0}{4}\right)\right)
\end{aligned}
$$
where $(i)$ follows from Corollary \ref{co:pr}.

\subsection{Second Moment}
We first introduce the following short-hands:
$$
\begin{aligned}
& u_1:=\bar{q}_{k_0, k}^{(\alpha)}-\mathbb{E}\left[q_{\infty}^{(\alpha)}\right], \quad u_2:=\bar{q}_{k_0, k}^{(2 \alpha)}-\mathbb{E}\left[q_{\infty}^{(2 \alpha)}\right] \\
 \text { and } \quad & v:=2 \mathbb{E}\left[q_{\infty}^{(\alpha)}\right]-\mathbb{E}\left[q_{\infty}^{(2 \alpha)}\right]+q^*.
&
\end{aligned}
$$

With these notations, $\tilde{q}_{k_0, k}-q^*=2 u_1-u_2+v$. We then have the following bound
$$
\begin{aligned}
\left\Vert \mathbb{E}\left[\left(\widetilde{q}_{k_0, k}^{(\alpha)}-q^*\right)\left(\widetilde{q}_{k_0, k}^{(\alpha)}-q^*\right)^{\top}\right]\right\Vert_\infty & \leq \left\Vert \mathbb{E}\left[\left(\widetilde{q}_{k_0, k}^{(\alpha)}-q^*\right)\left(\widetilde{q}_{k_0, k}^{(\alpha)}-q^*\right)^{\top}\right]\right\Vert_2 \\
& =\left\|\mathbb{E}\left[\left(2 u_1-u_2+v\right)\left(2 u_1-u_2+v\right)^{\top}\right]\right\|_2 \\
& \leq \mathbb{E}\left[\left\|2 u_1-u_2+v\right\|_2^2\right] \\
& \leq 3\mathbb{E}\left\|2 u_1\right\|_2^2+3 \mathbb{E}\left\|u_2\right\|_2^2+3\|v\|_2^2 .
\end{aligned}
$$

By equation \eqref{j}, we have
$$
\mathbb{E}\left\|u_1\right\|_2^2=\operatorname{Tr} \big(\mathbb{E}\left[u_1 u_1^{\top}\right]\big)=\mathcal{O}\left(  \frac{\sqrt{t_{\alpha}/\alpha}}{(k-k_0)}+ \frac{1}{(k - k_0)^2\alpha^2}\exp\left(-\frac{\alpha(1-\beta)k_0}{4}\right)  \right).
$$
Similarly, we have
$$
\mathbb{E}\left\|u_2\right\|_2^2=\mathcal{O}\left(   \frac{\sqrt{t_{\alpha}/\alpha}}{(k-k_0)} + \frac{1}{(k - k_0)^2\alpha^2}\exp\left(-\frac{\alpha(1-\beta)k_0}{2}\right)  \right).
$$
By Theorem \ref{thm:bias}, we have $\|v\|_2^2 = \mathcal{O}\left(\alpha^4 + \alpha^{4}t_{\alpha^2}^4\right).$

Combining these bounds together, we have
\begin{align*}
&\quad \mathbb{E}\left[\left(\tilde{q}_{k-k_0}-q^*\right)\left(\tilde{q}_{k-k_0}-q^*\right)^{\top}\right]\\
&=\mathcal{O}\left(\alpha^4 + \alpha^{4}t_{\alpha^2}^4\right) + \mathcal{O}\left(  \frac{\sqrt{t_{\alpha}/\alpha}}{(k-k_0)} + \frac{1}{(k - k_0)^2\alpha^2}\exp\left(-\frac{\alpha(1-\beta)k_0}{4}\right)  \right).
\end{align*}

\section{Experiment Details} \label{sec:appx_experiment}

\paragraph{Tabular case.} Wwe consider two MDPs for our numerical experiments.

The first example is a $1 \times 3$ Gridword with $\mathcal{S} = \{0,1,2\}$ and $\mathcal{A} = \{-1,1\}$. For each step, the agent can walk in two directions: left or right. If the agent walks out of the space, the agent would get a reward of -4 and stay at the same state. Otherwise, the agent can walk to the next state with probability of 0.95 or still stay at the same state with probability of 0.05. For the case that the agent does not exceed the space, the reward function is determined by the current state $r(s,a) = r(s)$ with $r(0) = 0, r(1)=10$ and $r(2) = 0.5$. The discounted factor is set as $\gamma = 0.9$. 

The second example is a classical $4 \times 4$ Gridworld combined with the slippery mechanism in Frozen-Lake. For each step, the agent can walk in four directions: left, up, right or down. Specially, there are two state 
A and B in which the agent can only intend to move to $A^{\prime}$ and $B^{\prime}$. After the action is selected by the behavior policy, the agent will walk in the intended direction with probability of 0.9 else will move in either perpendicular direction with equal probability of 0.05 in both directions. If the agent walks out of the space, the agent would get a reward of -1 and stay in the same state. Otherwise, the reward function is also determined by the current state with $r(A) = 10$, $r(B) = 5$ and $r(s) = 0$ for $s \neq A, B.$ The discounted factor is set as $\gamma = 0.9$.

\paragraph{Linear function approximation.} Our second set of experiments consider Q-learning with linear function approximation. 
More specifically, we consider approximating the Q-function by a linear subspace spanned by basis vectors $\phi=(\phi_{1},\ldots,\phi_{d})^{\top}:\mS \times \mA\to\R^{d}$. The goal is to find $\theta^*$ such that $\tilde{q}_{\theta^*}:=\Phi \theta^*$ best approximates the optimal Q function $q^*$, where $\Phi$ denotes the feature matrix $\Phi=\begin{bmatrix}\phi(s_1,a_1) & \cdots & \phi(s_{|\mS|},a_{|\mA|})\end{bmatrix}^{\top}\in\R^{|\mS||\mA|\times d}.$ 
We assume that $\Phi$ has a full column rank, which is standard in literature~\cite{bertsekas1996neuro,chen2022finite,melo2008analysis}. Note that $\theta^*$ can be calculated by projected value iteration algorithm. 

In this case, the Q-earning algorithm reduces to updating the parameter $\theta \in \R^d$ as follows \cite{bertsekas1996neuro}: 
\begin{align}
    \theta_{k+1} = \theta_k +\alpha \phi(s_k,a_k)\left(r_k + \gamma \max_{a'} \phi(s_{k+1},a')^{\top}\theta_{k}-\phi(s_k,a_k)^{\top}\theta_k\right), \label{eq:q_learning_linear}
\end{align}
where $(s_k,a_k,r_k,s_{k+1})$ is the sample generated by the behavior policy at time step $k.$

For the MDP and feature vectors, we consider a similar setup as the work \cite[Appendix D.1]{chen2022finite}. We provide the detail description here for completeness. We consider an MDP with $|\mS|=20$ states and $|\mA|=5$ actions. We generate the rewards and transition probabilities as follows: for each $(s,a)\in \mS\times \mA$,
\begin{itemize}
    \item The reward $r(s,a)$ is drawn uniformly in $[0,1].$
    \item For the transition probability $T(\cdot|s,a),$ we first obtain $|\mS|$ numbers by uniformly sampling of $[0,1]$, and then normalize these $|\mS|$ numbers by their sum to make it a valid probability distribution.
\end{itemize}

As for the feature matrix, we consider $d=10.$ For each $(s,a),$ each element of $\phi(s,a)$ is drawn from Bernoulli distribution with parameter $p=0.5,$ and then we normalize the features to ensure $\Vert \phi(s,a)\ \Vert\leq 1.$ We repeat this process until the matrix $\Phi$ has a full column rank.

 We set the discounted factor to be $\gamma = 0.5$ and the Markovian data $\{x_k\}_{k \geq 0}$ is generated from a uniformly random behavior policy.

We run Q-learning with linear function approximation \eqref{eq:q_learning_linear} with initialization $\theta_0^{(\alpha)} = \theta^* + 10$  and stepsize $\alpha \in \{0.1,0.2,0.4\}$. We also consider two
diminishing stepsizes: $\alpha_k = 1/\big(1+(1-\gamma)k\big)$ and $\alpha_k = 1/{k^{0.75}}$ as we used in tabular Q-learning.
The simulation results for the Q-learning with linear function approximation are illustrated in Figure~\ref{fig:linear Q}. We plot the $\ell_1$-norm error $\|\bar{\theta}_{k/2,k}^{(\alpha)} - \theta^*\|_1$ for the tail-averaged (TA) iterates $\bar{\theta}_{k/2,k}^{(\alpha)}$,  the RR extrapolated iterates $\widetilde{\theta}_{k}^{(\alpha)}$ with stepsizes $\alpha$ and $2\alpha$, and iterates with diminishing stepsizes.

\begin{figure}[ht]
\centering
\includegraphics[width=0.65\columnwidth]{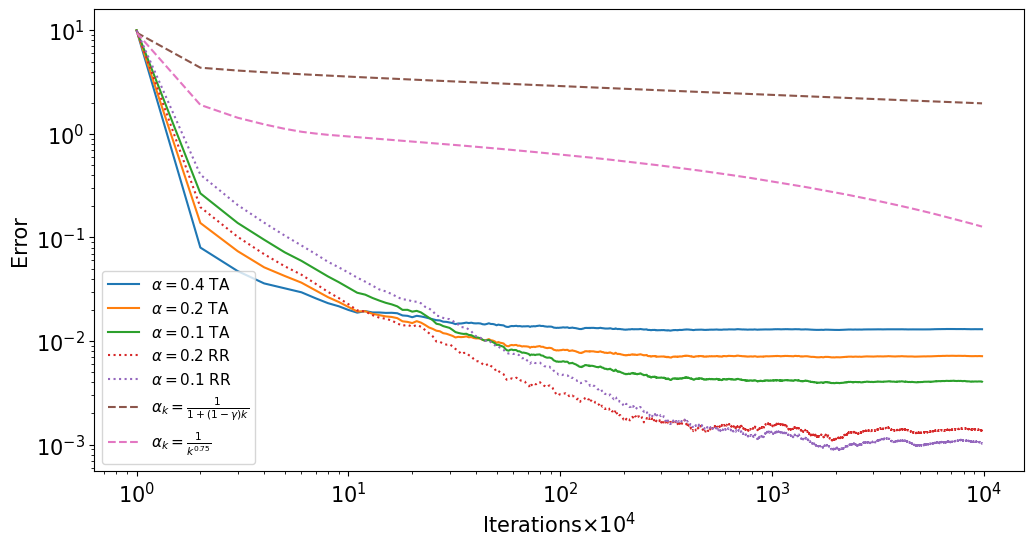} 
\caption{The Q-learning with linear function approximation errors of tail-averaged (TA) iterates and RR extrapolated iterates with different stepsizes.}
\label{fig:linear Q}
\end{figure}

We can observe some similar results as tabular Q-learning's:

\begin{itemize}
    \item  The larger the stepsize $\alpha,$ the faster it converges.
    \item  The final TA error, which corresponds to the asymptotic bias, is approximately proportional to the stepsize.
    \item  RR extrapolated iterates reduce the bias.
    \item  The TA and RR-extrapolated iterates with constant stepsizes enjoy significantly faster initial convergence than those with diminishing stepsizes.
\end{itemize}

\end{document}